\documentclass[12pt]{colt2017}

\usepackage[utf8]{inputenc}
\usepackage{amsmath,amscd,amssymb}
\usepackage{verbatim}
\usepackage{thmtools}
\usepackage{thm-restate}

\declaretheorem[name=Theorem]{Theorem}
\declaretheorem[name=Proposition, numberlike=Theorem]{Proposition}
\declaretheorem[name=Lemma, numberlike=Theorem]{Lemma}
\declaretheorem[sibling=Theorem]{Definition}
\declaretheorem[sibling=Theorem]{Corollary}

\usepackage[framemethod=TikZ]{mdframed}
\mdfdefinestyle{MyFrame}{%
    linecolor=black,
    outerlinewidth=2pt,
    roundcorner=20pt,
    innerrightmargin=20pt,
    innerleftmargin=20pt,
    backgroundcolor=white}
\usepackage{natbib}
\usepackage{hyperref}

\newcommand{\freerex}{\textsc{FreeRex}}

\newcommand{\R}{\mathbb{R}}
\newcommand{\E}{\mathop{\mathbb{E}}}
\newcommand{\argmin}{\mathop{\text{argmin}}}
\newcommand{\argmax}{\mathop{\text{argmax}}}

\newcommand{\Lm}{L_{\max}}

\newcommand{\sigmamin}{\sigma_{\text{min}}}

\usepackage{graphicx}

\usepackage{algorithm}
\usepackage{algorithmic}

\title[Online Learning Without Prior Information]{Online Learning Without Prior Information}

\coltauthor{\Name{Ashok Cutkosky} \Email{ashokc@cs.stanford.edu}\AND
 \Name{Kwabena Boahen} \Email{boahen@stanford.edu}\\
  \addr Stanford University}
  
\usepackage{times}
\begin{document}

\maketitle

\begin{abstract}
The vast majority of optimization and online learning algorithms today require
some prior information about the data (often in the form of bounds on gradients or on the optimal parameter value). When this information is not available, these algorithms require laborious manual tuning of various hyperparameters, motivating the search for algorithms that can adapt to the data with no prior information. We  describe a frontier of new lower bounds on the performance of such algorithms, reflecting a tradeoff between a term that depends on the optimal parameter value and a term that depends on the gradients' rate of growth. Further, we construct a family of algorithms whose performance matches any desired point on this frontier, which no previous algorithm reaches.
\end{abstract}

\section{Problem Definition and Prior Work}

Data streams, large datasets, and adversarial environments require online optimization algorithms, which continually adapt model parameters to the data. At iteration $t$, these algorithms pick a point $w_t\in W$, are presented with a loss function $\ell_t:W\to \R$, and suffer loss $\ell_t(w_t)$. The algorithm's performance is measured by \emph{regret}, which is defined as the loss relative to some comparison point $u$:
\begin{align*}
    R_T(u)&= \sum_{t=1}^T \ell_t(w_t)-\ell_t(u)
\end{align*}
When $W$ is a convex set and the $\ell_t$ are guaranteed to be convex, the regret can be minimized using only information about the gradients of $\ell_t$ at $w_t$, leading to simple and efficient algorithms.

All online convex optimization algorithms require either a bound $B$ on the diameter of $W$ or a bound $\Lm$ on the gradients of $\ell_t$, or suffer a penalty that is exponential in gradients' rate of growth when no information is given  \citep{cutkosky2016online}. When $B$ is known but $\Lm$ is unknown, there are algorithms that can obtain regret $O(B\Lm\sqrt{T})$ \citep{duchi10adagrad,mcmahan2010adaptive}. Conversely, when $B$ is infinite (e.g. $W$ is an entire vector space) but $\Lm$ is known, there are algorithms that obtain $O(\|u\|\Lm\sqrt{T\log(\|u\|T)})$ or $O(\|u\|\Lm\sqrt{T}\log(\|u\|T))$ regret \citep{mcmahan2012no,orabona2013dimension, mcmahan2013minimax,orabona2014simultaneous, orabona2016coin}. The situation does not improve when both $B$ and $\Lm$ are known. In this case it is impossible to do better than $O(B\Lm \sqrt{T})$, so knowing just one of these parameters is essentially as good as knowing both \citep{abernethy2008optimal}. In the case where no prior information is given, it was recently proved by \citet{cutkosky2016online} that the regret must contain an additional exponential penalty $\exp(\max_t\sqrt{L_t/L_{t-1}})$,
where $L_t$ is the maximum gradient observed by iteration $t$.

The case in which we have no bound on either $B$ or $\Lm$ is common in practice. A standard pragmatic approach to this lack of information is to simply make a guess for these parameters and then apply an algorithm that uses the guess as input, but this approach is theoretically unsound in online learning, and rather laborious and inelegant in general. We explore lower bounds and algorithms that adapt to the unknown quantities in a principled way in this paper.

Where no information is given, we prove that there is a frontier of matching lower and upper bounds on $R_T(u)$ that trades-off a $\|u\|\Lm\sqrt{T}\log(\|u\|T)$ term with a $\exp(\max_tL_t/L_{t-1})$ term along two dimensions, which we parametrize by $k$ and $\gamma$.\footnote{The square root is missing from the exponential term because we improved the lower bound given in \citet{cutkosky2016online} (see Section \ref{sec:lowerbound}).} Along the first dimension, the exponential penalty is reduced to $\exp((L_t/L_{t-1})/k^2)$ for any $k>0$ at the expense of rescaling the regret's $\sqrt{T}$ term to $k\|u\|\Lm\sqrt{T}\log(\|u\|T)$. Along the second dimension, the logarithm's power in the $\sqrt{T}$ term is reduced to $\|u\|\Lm\sqrt{T}\log^\gamma(\|u\|T)$ for any $\gamma\in (1/2,1]$ at the expense of increasing the exponential penalty to $\exp((L_t/L_{t-1})^{1/(2\gamma-1)})$. We prove the lower bounds by constructing a specific adversarial loss sequence, and we prove the upper bounds by providing a family of algorithms whose regret matches the lower bound frontier for any $k$ and $\gamma$.

\section{Notation and Setup}\label{sec:background}
Before proceeding further, we provide a few definitions that will be useful throughout this paper. A set $W$ is a \emph{convex set} if $W$ is a subset of some real vector space and $tx+(1-t)y\in W$ for all $x,y\in W$ and $t\in[0,1]$. Throughout this paper we will assume that $W$ is closed. A function $f$ is a \emph{convex function} if $f(tx+(1-t)y)\ge tf(x)+(1-t)f(y)$ for all $x,y$ and $t\in[0,1]$. If $f:V\to \R$ for some vector space $V$, then a vector $g\in V^*$ is a \emph{subgradient} of $f$ at $x$, denoted $g\in \partial f(x)$, if $f(y)\ge f(x)+g\cdot (y-x)$ for all $y$. Here we use the dot product to indicate application of linear functionals in the dual space since this should cause no confusion. A \emph{norm} $\|\cdot\|$ is a function such that $\|x\|=0$ if and only if $x=0$, $\|cx\|=|c|\|x\|$ for any scalar $c$, and $\|x+y\|\le \|x\|+\|y\|$ for all $x$ and $y$. The \emph{dual norm} is a norm $\|\cdot\|_\star$ defined by $\|x\|_\star = \sup_{\|y\|=1}x\cdot y$. As a special case, when $\|x\|=\sqrt{x\cdot x}$ (the $L_2$ norm), then $\|\cdot\|_\star=\|\cdot\|$.

Online convex optimization problems can be reduced to online \emph{linear} optimization problems in which the loss functions are constrained to be linear functions. The reduction follows by replacing the loss function $\ell_t(w)$ with the linear function $g_t\cdot w$, where $g_t$ is a subgradient of $\ell_t$ at $w_t$. Then, by definition, $g_t\cdot w_t-g_t\cdot u\ge \ell_t(w_t)-\ell_t(u)$. Therefore the regret of our algorithm with respect to  the linear loss functions $g_t\cdot w$ is an upper-bound on the regret with respect to the real loss functions $\ell_t$. Because of this reduction, many online convex optimization algorithms (including ours) are \emph{first order} algorithms, meaning they access the loss functions only through their subgradients. For the rest of this paper we will therefore assume that the losses are linear, $\ell_t(w)=g_t\cdot w$.

We will focus all of our lower bounds in Section \ref{sec:lowerbound} and algorithms in Section \ref{sec:algorithms} on the case in which the domain $W$ is an entire Hilbert space, so that $W$ has infinite diameter and no boundary. This case is very common in practical optimization optimization problems encountered in machine learning, in which any constraints are often only implicitly enforced via regularization. Our objective is to design lower bounds and algorithms such that $R_T(u)$ depends on $\|u\|$, $T$, and $\Lm$ without prior knowledge of these parameters.

In the following sections we use a compressed-sum notation where subscripts with colons indicate summations: $\sum_{t=1}^T g_t=g_{1:T}$, $ \sum_{t=1}^T \|g_t\|^2=\|g\|^2_{1:T}$, $\sum_{t=1}^T g_tw_t=(gw)_{1:T}$ and similarly for other indexed sums. Proofs are in the appendix when they do not immediately follow the result.

\section{A Frontier of Lower Bounds}\label{sec:lowerbound}

In this section we give our frontier of lower bounds for online optimization without prior information. First we describe our adversarial loss sequence and lower bound frontier along the $k$ dimension, and then we extend the argument to obtain the full two dimensional frontier parametrized by both $k$ and $\gamma$. 

\subsection{Trade-offs in the multiplicative constant $k$}
Given an algorithm, we establish a lower bound on its performance by constructing an adversarial sequence of subgradients $g_t\in \R$. This sequence sets $g_t=-1$ for $T-1$ iterations, where $T$ is chosen adversarially but can be made arbitrarily large, then sets $g_T=O(k\sqrt{T})$. Perhaps surprisingly, we prove that this simple strategy forces the algorithm to experience regret that is exponential in $\sqrt{T}/k$. We then express $\sqrt{T}/k$ as a constant multiple of $\tfrac{1}{k^2}L_t/L_{t-1}$, where $L_t=\max_{t'\le t}|g_t|$, capturing the algorithm's sensitivity to the big jump in the gradients between $T-1$ and $T$ in the adversarial sequence.

The cost that an algorithm pays when faced with the adversarial sequence is stated formally in the following Theorem.
\begin{Theorem}\label{thm:lowerboundkonly}
For any $k>0$, $T_0>0$, and any online optimization algorithm picking $w_t\in \R$, there exists a $T>T_0$, a $u\in \R$, and a fixed sequence $g_t\in \R$ on which the regret is:
\begin{align*}
    R_T(u)&=\sum_{t=1}^T g_tw_t-g_tu\\
    &\ge k\|u\|\Lm \log(T\|u\|+1)\sqrt{T} + \frac{\Lm}{T-1}\exp\left(\frac{\sqrt{T-1}}{8k}\right)\\
    &\ge k\|u\|\Lm \log(T\|u\|+1)\sqrt{T} + \max_{t\le T}\Lm\frac{L_{t-1}^2}{\|g\|^2_{1:t-1}}\exp\left[\frac{1}{2}\left(\frac{L_t/L_{t-1}}{288 k^2}\right)\right]
\end{align*}
where $L_t=\max_{t'\le t} \|g_{t'}\|$, and $\Lm=L_T=\max_{t\le T}\|g_t\|$.
\end{Theorem}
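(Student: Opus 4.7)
My plan is to construct a single adversarial gradient sequence that forces both terms of the bound simultaneously. Set $g_t=-1$ for $t=1,\ldots,T-1$ and $g_T=\pm k\sqrt{T}$, where $T>T_0$ and the sign of the spike are chosen adversarially based on the algorithm's trajectory (for example by running the algorithm against the all-$(-1)$ prefix and stopping at the first $T>T_0$ at which an appropriate invariant on $w_T$ is triggered). On this sequence $\Lm=k\sqrt{T}$, $L_{T-1}=1$, and $\|g\|_{1:T-1}^2=T-1$, which reconciles the two displayed forms of the bound: the second form at $t=T$ reads $\Lm\cdot\tfrac{1}{T-1}\exp[\sqrt{T}/(576k)]$, matching the first form up to constants absorbed into $8$ vs.\ $288$.

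\textbf{Polynomial term.} The first summand should follow from the classical parameter-free lower bound machinery of \citet{mcmahan2012no,orabona2013dimension,mcmahan2013minimax}: against the $-1$ prefix (on which $L_{\text{prefix}}=1$), any algorithm must suffer regret at least $\Omega(\|u\|\sqrt{T-1}\log(T\|u\|))$ for some comparator $u$. Inflating $\Lm$ to $k\sqrt{T}$ via the spike rescales this to the stated $k\|u\|\Lm\sqrt{T}\log(T\|u\|+1)$, with the extra factor of $k$ out front arising because the spike amplitude itself is linear in $k$.

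\textbf{Exponential term (main obstacle).} The exponential term is the harder part and the main technical obstacle. My plan uses Fenchel--Legendre duality: if the algorithm's regret against every $u>0$ on the prefix were at most the target polynomial rate $R(u)\le Cku\sqrt{T}\log(Tu)$, then its cumulative reward would satisfy $a=\sum_{t<T}w_t\ge\sup_u[(T-1)u-R(u)]$, and this supremum is exponentially large in $\sqrt{T}/k$ by a direct Legendre computation. The subtlety is converting this enormous cumulative reward into a large single iterate $w_T$: a crude pigeonhole only gives $\max_t w_t\ge a/T$, which is not quite enough to let the spike contribution $g_Tw_T$ dominate the accumulated reward in $R_T(u)=-a+g_Tw_T+u(T-1-g_T)$. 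The bulk of the technical work, I expect, lies in choosing the stopping time $T$ and the comparator $u$ carefully so that $g_Tw_T-a$ matches the exponential term once $u(T-1-g_T)$ is accounted for; this is precisely where the improvement over the $\exp(\sqrt{L_t/L_{t-1}})$ bound of \citet{cutkosky2016online} (dropping the outer square root) originates, by keeping the Legendre step tight rather than losing a root to a loose intermediate rate.

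\textbf{Combining.} Finally, with both contributions witnessed by their respective pieces of $R_T(u)=u(T-1-g_T)+g_Tw_T-a$, I would pick a single comparator $u$ for which both terms are simultaneously present in the regret, yielding the sum. Any slack factors of $2$ from a max-to-sum conversion are absorbed into the constants $8$ and $288$, which were chosen generously precisely to accommodate this. The second form of the bound then follows by the substitution recorded in the setup paragraph.
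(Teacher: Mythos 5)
Your high-level outline correctly identifies the adversarial template (a $-1$ prefix followed by a single positive spike) and correctly names the crux: converting a large cumulative reward $\sum_{t<T}w_t$ into a large single iterate $w_T$ at a time when the accumulated penalty is still controlled. But the proposal has a genuine gap at exactly that crux, and also misstates how the polynomial and exponential pieces are reconciled.

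\textbf{The polynomial term is not free.} You invoke ``classical parameter-free lower bound machinery'' against the $-1$ prefix to get $\Omega(\|u\|\sqrt{T}\log(T\|u\|))$, but this only holds if the algorithm's cumulative play $\sum_{t<T}w_t$ is small. An aggressive algorithm can make the prefix regret tiny against every $u>0$, precisely by making $\sum w_t$ huge. The paper deals with this by a dichotomy on $\liminf_n S_n/Z_n$ (where $S_n=\sum_{t\le n}\hat w_t$ and $Z_n\sim\tfrac{1}{2\sqrt n}\exp(\sqrt n/4k)$): if the ratio drops below $1$ infinitely often, the $-1$ sequence alone beats the algorithm with a large $u$; otherwise one passes to the spike case. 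Your sketch assumes the conservative branch holds unconditionally, which is false.

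\textbf{The conversion step is the whole proof, and it is missing.} You correctly observe that pigeonhole ($\max_t w_t\ge a/T$) is not enough, and you gesture at ``choosing the stopping time carefully'' --- but this is the entire technical content. The paper's resolution is structural: it shows that when $S_n/Z_n$ stays bounded away from zero, there must be a time $T$ at which $\hat w_T>\tfrac{Q}{2}r_T$ where $r_T=Z_T-Z_{T-1}$, and because $Z_t$ grows exponentially, $r_T$ is a \emph{constant fraction} of $Z_{T-1}$ (Proposition~\ref{thm:rtbound}), not a $1/T$ fraction. Moreover, taking $T$ to be the \emph{first} such crossing guarantees $S_{T-1}\le 2QZ_{T-1}$, so the spike gain $g_Tw_T$ is not eaten by the $-S_{T-1}$ term. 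Your Legendre-duality framing is a legitimate alternative starting point, but the bound $a\ge\sup_u[(T-1)u-R(u)]$ gives you a lower bound on $\sum_{t<T}w_t$ without simultaneously controlling it from above at the spike time, which is what you need and which you have not supplied.

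\textbf{The spike magnitude and constants are off.} With $g_T=\pm k\sqrt T$, you get $L_T/L_{T-1}=k\sqrt T$, and the second displayed form's exponent becomes $\tfrac{1}{2}\cdot\tfrac{k\sqrt T}{288k^2}=\tfrac{\sqrt T}{576k}$, whereas the first form promises $\tfrac{\sqrt{T-1}}{8k}$. These differ by a factor of $72$ in the exponent, which cannot be ``absorbed into constants''; the second inequality must be \emph{implied} by the first, which requires the exponent in the second to be no larger. The paper sets the spike at $g_T=72k\sqrt{T-1}$ (i.e.\ $18\gamma(4k)^{1/\gamma}(T-1)^{1-1/2\gamma}$ at $\gamma=1$), precisely so that $\tfrac{1}{2}\cdot\tfrac{L_T/L_{T-1}}{288k^2}=\tfrac{\sqrt{T-1}}{8k}$ and the two forms agree. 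Finally, note that in the paper the two terms are never both nontrivially present for a single $(T,u)$: in the conservative branch, $L_t/L_{t-1}\equiv 1$ so the exponential factor is a constant; in the aggressive branch, $u=0$ so the polynomial term vanishes. Your plan to ``pick a single comparator $u$ for which both terms are simultaneously present'' is chasing something stronger than what the theorem asserts and than what the paper proves.
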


The first inequality in this bound demonstrates that it is impossible to guarantee sublinear regret without prior information while maintaining $O(\Lm\|u\|\log(\|u\|))$ dependence on $\Lm$ and $\|u\|$,\footnote{it \emph{is} possible to guarantee sublinear regret in exchange for $O(\Lm\|u\|^2)$ dependence, see \citet{orabona2016scale}} but the second inequality provides hope that if the loss sequence is limited to small jumps in $L_t$, then we might be able to obtain sublinear regret. Specifically, from the first inequality, observe that in order to bring the exponential term to lower than $O(T)$, the value of $k$ needs to be at least $\Omega(\sqrt{T}/\log(T))$, which causes the non-exponential term to become $O(T)$. However, the second inequality emphasizes that our high regret is the result of a large jump in the value of $L_t$, so that we might expect to do better if there are no such large jumps. Our upper bounds are given in the form of algorithms that guarantee regret matching the second inequality of this lower bound for any $k$, showing that we can indeed do well without prior information so long as $L_t$ does not increase too quickly.

\subsection{Trade-offs in the Logarithmic exponent $\gamma$}
To extend the frontier to the $\gamma$ dimension, we modify our adversarial sequence by setting $g_T = O(\gamma k^{1/\gamma} T^{1-1/2\gamma})$ instead of $O(k\sqrt{T})$. This results in a penalty that is exponential in $(\sqrt{T}/k)^{1/\gamma}$, which we express as a multiple of $(L_t/\gamma k^2L_{t-1})^{1/(2\gamma-1)}$. Since $\gamma\in(1/2,1]$, we are getting a larger exponential penalty even though the adversarial subgradients have decreased in size, illustrating that decreasing the logarithmic factor is very expensive.

The full frontier is stated formally in the following Theorem.

\begin{restatable}{Theorem}{lowerbound}\label{thm:lowerbound}
For any $\gamma\in(1/2,1]$, $k>0$, $T_0>0$, and any online optimization algorithm picking $w_t\in \R$, there exists a $T>T_0$, a $u\in \R$, and a sequence $g_1,\dots, g_T\in \R$ with $\|g_t\|\le \max(1,18\gamma(4k)^{1/\gamma}(t-1)^{1-1/2\gamma})$ on which the regret is:\footnote{The same result holds with in expectation for randomized algorithms with a deterministic sequence $g_t$.}
\begin{align*}
    R_T(u)&=\sum_{t=1}^T g_tw_t-g_tu\\
    &\ge k\|u\|\Lm \log^\gamma(T\|u\|+1)\sqrt{T} + \frac{\Lm}{T-1}\exp\left(\frac{(T-1)^{1/2\gamma}}{2(4k)^{1/\gamma}}\right)\\
    &\ge k\|u\|\Lm \log^\gamma(T\|u\|+1)\sqrt{T} + \max_{t\le T}\Lm\frac{L_{t-1}^2}{\|g\|^2_{1:t-1}}\exp\left[\frac{1}{2}\left(\frac{L_t/L_{t-1}}{288\gamma k^2}\right)^{1/(2\gamma-1)}\right]
\end{align*}

where $L_t=\max_{t'\le t} \|g_{t'}\|$ and $\Lm=L_T=\max_{t\le T}\|g_t\|$.
\end{restatable}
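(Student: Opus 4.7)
The plan is to adapt the case-analysis template used for Theorem \ref{thm:lowerboundkonly}, now tracking the $\gamma$-parameter throughout. The adversarial sequence I would use is $g_t=-1$ for $t=1,\ldots,T-1$ and $g_T\in\{-1,\pm M\}$ with $M=18\gamma(4k)^{1/\gamma}(T-1)^{1-1/(2\gamma)}$, where both $T$ and $g_T$ are chosen adversarially after observing the algorithm's responses to the all-$(-1)$ prefix; this sequence satisfies the stated size constraint on $\|g_t\|$. The central quantity is the threshold comparator $u^\star(T)=\exp\bigl((T-1)^{1/(2\gamma)}/(2(4k)^{1/\gamma})\bigr)/T$, whose defining identity $\log(Tu^\star+1)=(T-1)^{1/(2\gamma)}/(2(4k)^{1/\gamma})$ is precisely what makes the first term $k u^\star \Lm \log^\gamma(Tu^\star+1)\sqrt{T}$ of the claimed bound proportional to $\Lm u^\star (T-1)$ (times a constant depending on $\gamma$), while the exponential term $\tfrac{\Lm}{T-1}\exp((T-1)^{1/(2\gamma)}/(2(4k)^{1/\gamma}))$ is proportional to $\Lm u^\star$. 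Both terms therefore scale linearly in $\Lm u^\star$ with explicit polynomial-in-$T$ coefficients.

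The argument then case-splits on the algorithm's running sum $V_{T-1}=\sum_{t<T}w_t$. In the conservative case $V_{T-1}\le\tfrac12(T-1)u^\star(T)$, the adversary leaves $g_T=-1$ (so $\Lm=1$), and against $u=u^\star(T)$ the regret is $-V_{T-1}+(T-1)u^\star\ge \tfrac12(T-1)u^\star$, which by the tuning of $u^\star$ dominates both the first term and the exponential term of the claimed bound with $\Lm=1$. In the aggressive case $V_{T-1}>\tfrac12(T-1)u^\star(T)$, I would apply a first-crossing argument, taking $T$ to be the smallest round at which the running sum $V_t$ crosses the moving threshold $\tfrac12 t\,u^\star(t+1)$; minimality forces $w_T\ge \Omega(u^\star(T))$ of the sign consistent with the algorithm's response to $-1$'s. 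The adversary then reveals $g_T=\pm M$ with the sign chosen so that $g_T(w_T-u^\star)\ge 0$, and compares against $u=u^\star$; the benign contribution $-V_{T-1}+(T-1)u^\star$ together with the last-round contribution $M|w_T-u^\star|$ combine to give total regret that exceeds both the first term (now scaled up to $\Theta(\Lm u^\star(T-1))$ with $\Lm=M$) and the exponential term $\Theta(\Lm u^\star)$.

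The main obstacle is the balance in the aggressive case: because $M=O((T-1)^{1-1/(2\gamma)})\ll T-1$ for $\gamma\le 1$, the last-round gain $M|w_T-u^\star|$ is smaller than the benign-term scale $(T-1)u^\star$, so the proof depends on extracting the $\Theta(\Lm u^\star(T-1))$ worth of first-term bound from the benign rounds rather than the big jump. The first-crossing definition of $T$ is exactly what bounds $V_{T-1}$ below $\tfrac12(T-1)u^\star$, allowing the benign-round lower bound to survive; the leading constant $18$ in the definition of $M$ together with the $\tfrac12$ in the exponent of $u^\star$ is tuned so that this arithmetic closes. To convert the first inequality into the second, I would substitute $L_{T-1}=1$, $L_T=M$, and $\|g\|_{1:T-1}^2=T-1$ into the right-hand side and verify the algebraic identity $\bigl((T-1)^{1/(2\gamma)}/(4k)^{1/\gamma}\bigr)^{2\gamma-1}=M/(288\gamma k^2)$, which after raising to the appropriate power and plugging in $M$ reduces to $(4k)^2=16k^2$ and hence holds.
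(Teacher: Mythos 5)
Your overall template — an adversarial prefix of $-1$'s followed by a possible large gradient $\pm M$, a case split on the algorithm's cumulative behavior, and a careful tuning of the comparator so that the $\sqrt{T}$ term and exponential term both fall out — matches the paper's strategy, and your conservative case (all $g_t=-1$, compare against $u=u^\star$, regret $\ge\tfrac12(T-1)u^\star$) is essentially the paper's Case 1 up to minor off-by-one bookkeeping. You also correctly verify the constant $288\gamma k^2$ at the end.

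However, your aggressive case has a genuine gap, and it is the same gap you flag as "the main obstacle" and then claim can be closed: it cannot. When you compare against $u=u^\star>0$ with $\Lm = M = 18\gamma(4k)^{1/\gamma}(T-1)^{1-1/(2\gamma)}$, the first term of the claimed lower bound becomes $ku^\star M\log^\gamma(Tu^\star+1)\sqrt{T} = \Theta(M\,(T-1)\,u^\star)$ by your tuning of $u^\star$. But the total regret obtainable against $u^\star$ from the sequence $g_1=\cdots=g_{T-1}=-1,\ g_T=\pm M$ is
\[
R_T(u^\star) = (T-1)u^\star - V_{T-1} + g_T(w_T-u^\star) \le (T-1)u^\star + M|w_T-u^\star|,
\]
and the adversary cannot lower-bound $|w_T-u^\star|$: the algorithm is free to play $w_T = u^\star$ exactly, annihilating the last-round contribution. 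Thus the adversary is stuck at $O((T-1)u^\star)$, which is a factor of $M$ smaller than the $\Theta(M(T-1)u^\star)$ required. For $\gamma>1/2$, $M\to\infty$ with $T$, so this does not close. The benign rounds with unit gradients simply cannot supply a regret proportional to $M$ against a fixed positive comparator — your observation that "the proof depends on extracting the $\Theta(\Lm u^\star(T-1))$ worth of first-term bound from the benign rounds" is precisely the step that fails.

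The fix, which is what the paper does, is to evaluate the regret at $u=0$ in the aggressive case. Then the first term $k\|u\|\Lm\log^\gamma(T\|u\|+1)\sqrt{T}$ vanishes identically (the "there exists $u$" quantifier in the theorem is what licenses this), and the adversary only needs to certify the exponential term. This is achieved by bounding the algorithm's single-round jump $\hat w_T$ from below by a multiple of the increment $r_T = Z_T - Z_{T-1}$ of the exponential target $Z_t$, so that the last-round penalty $g_T\hat w_T = M\hat w_T \gtrsim M r_T \gtrsim Z_{T-1}$ overwhelms the running credit $S_{T-1}\lesssim Z_{T-1}$ accumulated over the benign rounds. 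The first-crossing idea you propose is the right skeleton for obtaining the lower bound on $\hat w_T$, but the comparator must be $0$, not $u^\star$, for the arithmetic to balance.
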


Again, the first inequality tells us that adversarial sequences can always deny the algorithm sublinear regret and the second inequality says that so long as $L_t$ grows slowly, we can still hope for sublinear regret. This time, however, the second inequality appears to blow up when $\gamma\to 1/2$. In this case, $\Lm=O(k^2)$ regardless of $T$ and so the value of $L_t/L_{t-1}$ is never very large, keeping the exponent in the second inequality less than 1 so that the singularity in the exponent does not send the bound to infinity.
This singularity at $\gamma=1/2$ tells us that the adversary does not need to be ``very adversarial'' in order to force us to experience exponential regret.

To gain some more intuition for what happens at $\gamma=1/2$, consider a model in which the adversary must commit ahead of time to some $\Lm$ (which corresponds to picking $k$), unknown to the optimization algorithm, such that $\|g_t\|\le \Lm$ for all $t$. When a bound $L_{\text{bound}}\ge \Lm$ is known to the algorithm ahead of time, then it is possible to achieve $O(\|u\| L_{\text{bound}}\sqrt{T\log(\|u\|T)})$ regret (e.g. see \citet{orabona2016coin}). However, note that when $\gamma=1/2$, committing to an appropriate $\Lm$ would not prevent an adversary from using the sequence of Theorem \ref{thm:lowerbound}. Therefore, Theorem \ref{thm:lowerbound} tells us that algorithms which achieve $O(\|u\| L_{\text{bound}}\sqrt{T\log(\|u\|T)})$ regret are inherently very fragile because if the bound is incorrect (which happens for large enough $k$), then the adversary can force the algorithm to suffer $\Lm\exp(O(T/\Lm))$ regret for arbitrarily large $T$.

Continuing with the model in which the adversary must commit to some unknown $\Lm$ ahead of time, suppose we are satisfied with $O(\|u\|\Lm\sqrt{T}\log^\gamma(\|u\|T))$ regret for some $\gamma>1/2$. In this case, after some (admittedly possibly very large) number of iterations, the exponential term in the second inequality no longer grows with $T$, and the adversarial strategy of Theorem \ref{thm:lowerbound} is not available because this strategy requires a choice of $\Lm$ that depends on $T$. Therefore an algorithm that guarantees regret matching the second inequality for some $k$ and $\gamma$ will obtain an asymptotic dependence on $T$ that is only $\log^\gamma(T)\sqrt{T}$.

These lower bounds show that there is a fundamental frontier of tradeoffs the between parameters $\gamma$ and $k$ and the exponential penalty. Now we proceed to derive algorithms that match any point on the frontier without prior information.

\section{Regret Analysis without Information}\label{sec:analysis}
In this section we provide the tools used to derive algorithms whose regret matches the lower bounds in the previous section.
Our algorithms make use of the Follow-the-Regularized-Leader (FTRL) framework, which is an elegant and intuitive way to design online learning algorithms (see \citet{shalev2011online,mcmahan2014survey} for detailed discussions). After seeing the $t^{th}$ loss of the online learning game, an FTRL algorithm chooses a function $\psi_t$ (called a \emph{regularizer}), and picks $w_{t+1}$ according to:
\begin{align*}
w_{t+1} &= \argmin_{w\in W} \psi_t(w) +\sum_{t'=1}^t\ell_{t'}(w)
\end{align*}

Careful choice of regularizers is obviously crucial to the success of such an algorithm, and in the following we provide simple conditions on $\psi$ sufficient for FTRL to achieve optimal regret without prior information. Our analysis generalizes many previous works for online learning with unconstrained $W$ (e.g. \citet{orabona2013dimension, orabona2014simultaneous,cutkosky2016online}) in which regret bounds were proved via arduous ad-hoc constructions. Further, our techniques improve the regret bound in the algorithm that does not require prior information of \citet{cutkosky2016online}. We note that an alternative set of conditions on regularizers was given in \citet{orabona2016coin} via an elegant reduction to coin-betting algorithms, but this prior analysis requires a known bound on $\Lm$.

Our regularizers $\psi_t$ take the form $\psi_t(w) = \frac{k}{a_t\eta_t}\psi(a_tw)$ for some fixed function $\psi$ and numbers $a_t$ and $\eta_t$. The value $k$ specifies the corresponding tradeoff parameter in the lower-bound frontier, while the function $\psi$ specifies the value of $\gamma$. The values for $a_t$ and $\eta_t$ do not depend on $k$ or $\psi$, but are carefully chosen functions of the observed gradients $g_1,\dots,g_t$ that guarantee the desired asymptotics in the regret bound.




\subsection{Generalizing Strong Convexity}

Prior analyses of FTRL often make use of strongly-convex regularizers to simplify regret analysis, but it turns out that strongly-convex regularizers cannot match our lower bounds. Fortunately, there is a simple generalization of strong-convexity that will suffice for our purposes. This generalized notion is very similar to a dual version of the ``local smoothness" condition used in \citet{orabona2013dimension}. We define this generalization of strong-convexity below.

\begin{Definition}
Let $W$ be a convex space and
let $\sigma:W^2\to \R$ by an arbitrary function. We say a convex function $f:W\to \R$ is $\sigma(\cdot,\cdot)$-strongly convex with respect to a norm $\|\cdot\|$ if for all $x,y\in W$ and $g\in \partial f(x)$ we have
\[
f(y)\ge f(x) + g\cdot(y-x) + \frac{\sigma(x,y)}{2}\|x-y\|^2
\]
As a special case (and by abuse of notation), for any function $\sigma:W\to \R$ we define $\sigma(w,z)=\min(\sigma(w),\sigma(z))$ and define $\sigma(\cdot)$-strong convexity accordingly.
\end{Definition}

We'll usually just write $\sigma$-strongly convex instead of $\sigma(\cdot,\cdot)$-strongly convex since our definition is purely a generalization of the standard one. We will also primarily make use of the special case $\sigma(w,z)=\min(\sigma(w),\sigma(z))$.

\subsection{Adaptive regularizers}
Now we present a few definitions that will allow us to easily construct sequences of regularizers that achieve regret bounds without information. Intuitively, we require that our regularizers $\psi_t$ grow super-linearly in order to ensure that $\psi_t(w)+g_{1:t}w$ always has a minimal value. However, we do not want $\psi_t$ to grow quadratically because this will result in $O(\|u\|^2)$ regret. The formal requirements on the shape of $\psi_t$ are presented in the following definition:
\begin{Definition}
Let $W$ be a closed convex subset of a vector space such that $0\in W$. Any differentiable function $\psi:W\to \R$ that satisfies the following conditions:
\begin{enumerate}
\item $\psi(0)=0$.
\item $\psi(x)$ is $\sigma$-strongly-convex with respect to some norm $\|\cdot\|$ for some $\sigma:W\to \R$ such that $\|x\|\ge \|y\|$ implies $\sigma(x)\le \sigma(y)$.
\item For any $C$, there exists a $B$ such that $\psi(x)\sigma(x)\ge C$ for all $\|x\|\ge B$. 
\end{enumerate}
is called a \emph{$(\sigma,\|\cdot\|)$-adaptive regularizer}. We also define the useful auxiliary function $h(w) = \psi(w)\sigma(w)$ and by mild abuse of notation, we define $h^{-1}(x) = \max_{h(w)\le x}\|w\|$.
\end{Definition}

We will use adaptive regularizers as building blocks for our FTRL regularizers $\psi_t$, so it is important to have examples of such functions. We will provide some tools for finding adaptive regularizers in Section \ref{sec:algorithms}, but to keep an example in mind for now, we remark that $\psi(w)=(\|w\|+1)\log(\|w\|+1)-\|w\|$ is a $\left(\frac{1}{\|\cdot\|+1},\|\cdot\|\right)$-adaptive regularizer where $\|\cdot\|$ is the $L_2$ norm.

The following definition specifies the sequences $\eta_t$ and $a_t$ which we use to turn an adaptive regularizer into the regularizers used for our FTRL algorithms:
\begin{Definition}\label{dfn:regularizers}
Let $\|\cdot\|$ be a norm and $\|\cdot\|_\star$ be the dual norm ($\|x\|_\star = \sup_{\|y\|=1}x\cdot y$). Let $g_1,\dots,g_T$ be a sequence of subgradients and set $L_t=\max_{t'\le t} \|g_t\|_\star$.
Define the sequences $\frac{1}{\eta_t}$ and $a_t$ recursively by:
\begin{align*}
    \frac{1}{\eta_0^2}&=0\\
    \frac{1}{\eta_{t}^2} &= \max\left(\frac{1}{\eta_{t-1}^2}+2\|g_t\|_\star^2,L_t\|g_{1:t}\|_\star\right)\\
    a_1&=\frac{1}{(L_1\eta_1)^2}\\
    a_t&=\max\left(a_{t-1},\frac{1}{(L_t\eta_t)^2}\right)
\end{align*}

Suppose $\psi$ is a $(\sigma,\|\cdot\|)$-adaptive regularizer and $k>0$. Define
\begin{align*}
    \psi_t(w)&=\frac{k}{\eta_t a_t}\psi(a_t w)\\
    w_{t+1} &= \argmin_{w\in W} \psi_t(w) + g_{1:t}\cdot w\\
\end{align*}
\end{Definition}

Now without further ado, we give our regret bound for FTRL using these regularizers.

\begin{restatable}{Theorem}{parameterfreeregret}\label{thm:parameterfreeregret}
Suppose $\psi$ is a $(\sigma,\|\cdot\|)$-adaptive regularizer and $g_1,\dots,g_T$ is some arbitrary sequence of subgradients. Let $k\ge 1$, and let $\psi_t$ be defined as in Definition \ref{dfn:regularizers}.

Set
\begin{align*}
\sigmamin &=  \inf_{\|w\|\le h^{-1}\left(10/k^2\right)} k\sigma(w)\\
D &= \max_{t} \frac{L_{t-1}^2}{(\|g\|_\star^2)_{1:t-1}}h^{-1}\left(\frac{5L_t}{k^2L_{t-1}}\right)\\
Q_T&=2\frac{\|g\|_{1:T}}{\Lm}
\end{align*}

Then FTRL with regularizers $\psi_t$ achieves regret
\begin{align*}
    R_T(u) & \le \frac{k}{Q_T\eta_T}\psi(Q_Tu) +\frac{45\Lm}{\sigmamin}+2\Lm D\\
    &\le k\Lm\frac{\psi(2uT)}{\sqrt{2T}} +\frac{45\Lm}{\sigmamin}+2\Lm D
\end{align*}
\end{restatable}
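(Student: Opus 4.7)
The plan is to follow the standard FTRL regret decomposition and then exploit the adaptive definitions to bound each resulting term. Invoking the ``strong FTRL lemma,'' with the convention $\psi_0\equiv 0$, I write
\[
R_T(u) \le \psi_T(u) + \sum_{t=1}^T \bigl[\psi_{t-1}(w_t) - \psi_t(w_t)\bigr] + \sum_{t=1}^T \bigl[F_t(w_t) - F_t(w_{t+1})\bigr],
\]
where $F_t(w)=\psi_t(w)+g_{1:t}\cdot w$. The penalty term $\psi_T(u)$ will be responsible for the leading $\frac{k}{Q_T\eta_T}\psi(Q_Tu)$ contribution, and the two sums together will collapse to $\tfrac{45\Lm}{\sigmamin}+2\Lm D$.

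To handle the stability sum I use that $\psi_t$ inherits generalized strong convexity from $\psi$: the chain rule applied to $\psi_t(w)=\tfrac{k}{a_t\eta_t}\psi(a_tw)$ shows that $\psi_t$ is $\tfrac{ka_t\sigma(a_t\,\cdot)}{\eta_t}$-strongly convex with respect to $\|\cdot\|$. Combined with first-order optimality of $w_{t+1}$ this yields a bound of the form
\[
F_t(w_t)-F_t(w_{t+1}) \le O\!\left(\frac{\eta_t\,\|g_t\|_\star^2}{k\,a_t\,\sigma(a_tw_{t+1})}\right).
\]
I then split each round according to whether $\|a_tw_{t+1}\|\le h^{-1}(10/k^2)$. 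In the ``small'' regime, monotonicity of $\sigma$ gives $k\sigma(a_tw_{t+1})\ge\sigmamin$, and combining $a_t\ge a_{t-1}$ with the telescoping estimate $\eta_t\|g_t\|_\star^2\lesssim 1/\eta_t-1/\eta_{t-1}$ (enabled by the first branch of the max defining $1/\eta_t^2$) produces $O(\Lm/\sigmamin)$. In the ``large'' regime, the second branch of the max forces $L_t\eta_t$ small enough that $1/(L_t\eta_t)^2$ matches the inner quantity of $D$; the defining property $h(a_tw_{t+1})>10/k^2$ then upper-bounds the contribution of such rounds by $O(\Lm D)$. The regularizer-change terms $\psi_{t-1}(w_t)-\psi_t(w_t)$ are non-positive whenever only $\eta_t$ shrinks, and are absorbed into the same accounting whenever $a_t$ actually jumps.

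For the penalty term I rewrite $\psi_T(u)=\tfrac{k}{a_T\eta_T}\psi(a_Tu)$ in terms of $Q_T$ using that $\psi(xu)/x$ is non-decreasing in $x$ (which follows from convexity of $\psi$ together with $\psi(0)=0$) along with the inequality $a_T\le Q_T$ derived from the recursion. This gives $\psi_T(u)\le\tfrac{k}{Q_T\eta_T}\psi(Q_Tu)$, i.e.\ the first displayed inequality. The second displayed inequality then follows from the crude estimates $1/\eta_T\le\Lm\sqrt{2T}$ and $Q_T\le 2T$.

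The main obstacle will be the ``large $\|w\|$'' case in the stability sum: the interplay between monotonicity of $\sigma$, the growth of $a_t$, and the second branch of the max in the definition of $\eta_t^{-2}$ is precisely what pins down the particular form of $D$, and charging each round cleanly to either the $\Lm/\sigmamin$ side or the $\Lm D$ side—while simultaneously absorbing the jumps in $\psi_{t-1}(w_t)-\psi_t(w_t)$ when $a_t$ increases—is the delicate part. The comparison $a_T\le Q_T$ in the penalty step also requires some bookkeeping, since $a_T$ and $Q_T$ are not round-by-round monotone in the same direction. Everything else reduces to standard FTRL manipulations once the case split is in place.
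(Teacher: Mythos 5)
Your decomposition is the classical strong FTRL lemma applied directly to the actual iterates $w_t, w_{t+1}$ and the actual regularizers $\psi_{t-1}, \psi_t$. The paper instead proves a new decomposition (Theorem~\ref{thm:ftrlmagic}) involving \emph{shadow} regularizers $\psi^+_t(w) = \frac{k}{\eta^+_t a_{t-1}}\psi(a_{t-1}w)$ and shadow iterates $w^+_{t+1} = \argmin \psi^+_t + g_{1:t}\cdot w$. That extra machinery is not cosmetic, and the gap it is designed to close is exactly the point where your argument breaks down.

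The problem is in your stability bound $F_t(w_t) - F_t(w_{t+1}) \le O\bigl(\eta_t\|g_t\|_\star^2/(k a_t \sigma(a_t w_{t+1}))\bigr)$. Writing $F_t = F_{t-1} + g_t\cdot(\cdot) + (\psi_t - \psi_{t-1})$ and using that $w_t$ is first-order optimal for $F_{t-1}$, the relevant ``effective gradient'' at $w_t$ is $g_t + \nabla\psi_t(w_t) - \nabla\psi_{t-1}(w_t)$, not $g_t$ alone, so the correct bound has $\|g_t + \nabla(\psi_t - \psi_{t-1})(w_t)\|_\star^2$ in the numerator. FTRL optimality only gives you a handle on $\nabla\psi(a_{t-1}w_t) = -\eta_{t-1}g_{1:t-1}/k$, and once $a_t > a_{t-1}$ the term $\frac{k}{\eta_t}\nabla\psi(a_t w_t) - \frac{k}{\eta_{t-1}}\nabla\psi(a_{t-1}w_t)$ involves $\nabla\psi(a_t w_t)$, for which you have no formula and which can be much larger than $\nabla\psi(a_{t-1}w_t)$ (e.g.\ by a factor $\approx \log(a_t/a_{t-1})$ for the paper's running example). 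Your proposal flags ``the interplay between monotonicity of $\sigma$, the growth of $a_t$, and the second branch of the max'' as delicate, but offers no mechanism to control this extra gradient, and it is precisely what defeats the classical decomposition here. Even in the rounds where $a_t = a_{t-1}$, the extra term has magnitude $\lesssim \|g_t\|_\star^2/L_{t-1}$, which blows up relative to $\|g_t\|_\star$ exactly when $\|g_t\|_\star \gg L_{t-1}$ --- the paper's shadow step $\eta^+_t$ uses $\min(\|g_t\|_\star, L_{t-1})$ and the $\|g_t\|_\star > 2L_{t-1}$ case split in Lemma~\ref{thm:sumbounds} specifically to handle this, and Proposition~\ref{thm:etarates} parts 4--5 are where the resulting regularizer-gradient contribution is tamed. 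By replacing $\psi_t$ with $\psi^+_t$, the paper arranges for both regularizers appearing in the key difference $\psi_{t-1}(w^+_{t+1}) - \psi^+_t(w^+_{t+1})$ to share the same scaling $a_{t-1}$, so that the FTRL optimality condition for $w_t$ actually applies; the leftover discrepancy $\psi^+_t(w^+_{t+2}) - \psi_t(w^+_{t+2})$ is then shown non-positive (Lemma~\ref{thm:norecentering}) using exactly the monotonicity of $\psi(aw)/a$ that you invoke for the penalty term. So the ``large $\|w\|$'' charging to $\Lm D$ that you sketch (``the defining property $h(a_t w_{t+1}) > 10/k^2$ then upper-bounds the contribution'') is also not how the bound is actually obtained: the paper's Lemma~\ref{thm:highwtoratiobound} shows the combined term is $\le 0$ in that regime, and the $\Lm D$ charge comes from the finitely-many rounds where $\|g_t\|_\star$ doubles (Lemma~\ref{thm:doublingsum}), not from the large-$\|w\|$ rounds.

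In short, your penalty-term handling (monotonicity of $\psi(aw)/a$, $a_T \le Q_T$ via the sandwich bound, the crude $1/\eta_T \le \Lm\sqrt{2T}$, $a_T \le 2T$) and your observation that the regularizer-change sum $\sum_t \psi_{t-1}(w_t) - \psi_t(w_t)$ is non-positive are both correct and match the paper. But the stability bound is stated without the regularizer-change gradient, and there is no visible route to recover the paper's $45\Lm/\sigmamin + 2\Lm D$ form from the classical decomposition without reintroducing something equivalent to the shadow regularizers.
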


This bound consists of three terms, the first of which will correspond to the $\sqrt{T}$ term in our lower bounds and the last of which will correspond to the exponential penalty. The middle term is a constant independent of $u$ and $T$. To unpack a specific instantiation of this bound, consider the example adaptive regularizer $\psi(w)=(\|w\|+1)\log(\|w\|+1)-\|w\|$. For this choice of $\psi$, we have $\psi(2uT)/\sqrt{2T}=O(\|u\|\sqrt{T}\log(T\|u\|+1))$ so that the first term in the regret bound matches the $\sqrt{T}$ term in our lower bound with $\gamma=1$. Roughly speaking, $h(w)\approx \log(w)$, so that $h^{-1}(x)\approx \exp(x)$ and the quantity $D=\max_t\frac{L_{t-1}^2}{(\|g\|_\star^2)_{1:t-1}}h^{-1}\left(\frac{5L_t}{k^2L_{t-1}}\right)$ matches the exponential penalty in our lower bound. In the following section we formalize this argument and exhibit a family of adaptive regularizers that enable us to design algorithms whose regret matches any desired point on the lower bound frontier.

\section{Optimal Algorithms}\label{sec:algorithms}

In this section we construct specific adaptive regularizers in order to obtain optimal algorithms using our regret upper bound of Theorem \ref{thm:parameterfreeregret}. The results in the previous section hold for arbitrary norms, but from this point on we will focus on the $L_2$ norm.
Our regret upper bound expresses regret in terms of the function $h^{-1}$. Inspection of the bound shows that if $h^{-1}(x)$ is exponential in $x^{1/(2\gamma-1)}$, and $\psi(w)=O(\|w\|\log^\gamma(\|w\|+1))$, then our upper bound will match (the second inequality in) our lower bound frontier. The following Collary formalizes this observation.

\begin{Corollary}\label{thm:orderbound}
If $\psi$ is an  $(\sigma,\|\cdot\|)$-adaptive regularizer such that
\begin{align*}
\psi(x)\sigma(x) &\ge  \Omega(\gamma\log^{2\gamma-1}(\|x\|))\\
\psi(x)&\le O(\|x\|\log^\gamma(\|x\|+1))
\end{align*}
then for any $k\ge 1$, FTRL with regularizers $\psi_t(w) = \frac{k}{a_t\eta_t}\psi(a_tw)$ yields regret
\begin{align*}
R_T(u) &\le O
\left[
 k\Lm\sqrt{T}\|u\|\log^\gamma(T\|u\|+1) +
 \max_t\frac{\Lm L_{t-1}^2}{\|g\|^2_{1:t-1}}\exp
 \left[ 
   O\left(\left(
    \frac{L_t}{k^2\gamma L_{t-1}}
   \right)^{1/(2\gamma-1)}\right)
 \right]
\right]
\end{align*}
We call regularizers that satisfy these conditions $\gamma$-optimal.
\end{Corollary}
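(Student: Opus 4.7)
The strategy is to invoke Theorem \ref{thm:parameterfreeregret}, which gives
\[
R_T(u) \le k\Lm \frac{\psi(2uT)}{\sqrt{2T}} + \frac{45\Lm}{\sigmamin} + 2\Lm D,
\]
and then bound each summand using the two hypothesized asymptotics on $\psi$ and $h = \psi\sigma$. The first term is immediate: the hypothesis $\psi(x)\le O(\|x\|\log^\gamma(\|x\|+1))$ gives $k\Lm\psi(2uT)/\sqrt{2T} \le O(k\Lm\|u\|\sqrt{T}\log^\gamma(T\|u\|+1))$, which is the $\sqrt{T}$ term claimed.

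The key step is inverting the growth condition on $h$. From $h(w) \ge \Omega(\gamma\log^{2\gamma-1}(\|w\|))$, any $w$ with $h(w)\le y$ satisfies $\log^{2\gamma-1}(\|w\|)\le O(y/\gamma)$, and since $2\gamma - 1 \in (0,1]$ raising to the power $1/(2\gamma-1)$ (absorbing the resulting $\gamma$-dependent constants into the $O(\cdot)$) yields $\|w\| \le \exp\bigl[O\bigl((y/\gamma)^{1/(2\gamma-1)}\bigr)\bigr]$, whence
\[
h^{-1}(y) \le \exp\!\left[O\!\left(\Bigl(\tfrac{y}{\gamma}\Bigr)^{1/(2\gamma-1)}\right)\right].
\]
Substituting $y = 5L_t/(k^2 L_{t-1})$ into the definition of $D$ then produces
\[
2\Lm D \le O\!\left(\max_t \frac{\Lm L_{t-1}^2}{\|g\|^2_{1:t-1}} \exp\!\left[O\!\left(\Bigl(\tfrac{L_t}{k^2\gamma L_{t-1}}\Bigr)^{1/(2\gamma-1)}\right)\right]\right),
\]
which matches the exponential term in the corollary.

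For the middle term, I use monotonicity of $\sigma$ in $\|w\|$ (condition 2 of the adaptive regularizer definition) to write $\sigmamin = k\sigma(R)$ for $R := h^{-1}(10/k^2)$. Applying the inversion above with $y = 10/k^2 \le 10$ (using $k \ge 1$), the radius $R$ is bounded by a constant depending only on $\gamma$, so $\sigma(R)$ is bounded below by a positive $\gamma$-dependent constant, and thus $45\Lm/\sigmamin = O(\Lm)$ is absorbed into the other two terms.

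The main obstacle is the inversion step: verifying that the lower bound on $h$ really does invert to the sharp exponent $1/(2\gamma-1)$ rather than something weaker, and that the $\gamma$-dependent constants produced by raising to the $1/(2\gamma-1)$ power remain consistent with the $O(\cdot)$ notation in the statement (they can blow up as $\gamma\to 1/2$, which must be treated as part of the outer constants since $\gamma$ is fixed). Everything else is algebraic substitution into Theorem \ref{thm:parameterfreeregret}.
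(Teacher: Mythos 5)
Your proof is correct and takes essentially the same route the paper intends: the Corollary is stated in the paper as a direct formalization of the observation preceding it (that $h^{-1}$ exponential in $x^{1/(2\gamma-1)}$ plus $\psi(w)=O(\|w\|\log^\gamma(\|w\|+1))$ plugged into Theorem \ref{thm:parameterfreeregret} gives the stated bound), and you supply exactly that substitution, including the inversion of the $h$ lower bound and the absorption of the $O(\Lm)$ middle term into the exponential term. Your caveat about the $\gamma$-dependent constants blowing up as $\gamma\to 1/2$ is a fair and correct reading of the $O(\cdot)$ notation inside the exponential.
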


With this Corollary in hand, to match our lower bound frontier we need only construct a $\gamma$-optimal adaptive regularizer for all $\gamma\in(1/2,1]$. Constructing adaptive regularizers is made much simpler with Proposition \ref{thm:onedimconditions} below. This proposition allows us to design adaptive regularizers in high dimensional spaces by finding simple one-dimensional functions. It can be viewed as taking the place of arguments in prior work \citep{mcmahan2014unconstrained,orabona2016coin,cutkosky2016online} that reduce high dimensional problems to one-dimensional problems by identifying a ``worst-case" direction for each subgradient $g_t$.

\begin{restatable}{Proposition}{onedimconditions}\label{thm:onedimconditions}
Let $\|\cdot\|$ be the $L_2$ norm $(\|w\|=\|w\|_2=\sqrt{w\cdot w})$. Let $\phi$ be a three-times differentiable function from the non-negative reals to the reals that satisfies
\begin{enumerate}
\item $\phi(0)=0$.
\item $\phi'(x)\ge 0$.
\item $\phi''(x)\ge 0$.
\item $\phi'''(x)\le 0$.
\item $\lim_{x\to\infty} \phi(x)\phi''(x)=\infty$.
\end{enumerate}
Then $\psi(w)=\phi(\|w\|)$ is a $(\phi''(\|\cdot\|),\|\cdot\|)$-adaptive regularizer.
\end{restatable}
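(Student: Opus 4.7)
The plan is to verify each of the three conditions in the definition of adaptive regularizer with $\sigma(w) = \phi''(\|w\|)$. Two of the three are essentially restatements of the hypotheses on $\phi$: $\psi(0) = \phi(0) = 0$ is condition 1 of the definition, and condition 3 of the definition asks that $\psi(x)\sigma(x) \to \infty$ as $\|x\| \to \infty$, which is exactly $\phi(\|x\|)\phi''(\|x\|) \to \infty$, i.e.\ hypothesis (5). The monotonicity requirement inside condition 2 ($\|x\|\ge\|y\|$ implies $\sigma(x)\le\sigma(y)$) amounts to $\phi''$ being non-increasing, which is immediate from $\phi''' \le 0$.

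The content of the proof is the strong-convexity inequality itself: for all $x,y$ and $g\in\partial\psi(x)$,
\[
\psi(y)\ge \psi(x)+g\cdot(y-x)+\tfrac{1}{2}\min(\phi''(\|x\|),\phi''(\|y\|))\,\|y-x\|^2.
\]
I will reduce this to a one-dimensional Taylor estimate along the segment $x_t = x+t(y-x)$. Setting $f(t)=\psi(x_t)$ and applying Taylor's theorem with integral remainder gives $f(1)=f(0)+f'(0)+\int_0^1 (1-t)f''(t)\,dt$. Since $\psi$ is convex (composition of the non-decreasing convex $\phi$ with the convex norm) it suffices to obtain the inequality for the canonical subgradient $g=\phi'(\|x\|)x/\|x\|$; then $f'(0)=g\cdot(y-x)$ and the target reduces to showing $f''(t)\ge \min(\phi''(\|x\|),\phi''(\|y\|))\,\|y-x\|^2$ almost everywhere.

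To compute $f''$, I will differentiate twice to obtain
\[
\nabla^2\psi(w) \;=\; \phi''(r)\frac{ww^\top}{r^2} \;+\; \frac{\phi'(r)}{r}\Bigl(I-\frac{ww^\top}{r^2}\Bigr),\qquad r=\|w\|,
\]
so that for any $v$, with $P=(w\cdot v)^2/r^2$, $v^\top\nabla^2\psi(w)v = \phi''(r)P+(\phi'(r)/r)(\|v\|^2-P)$. The key auxiliary inequality I need is $\phi'(r)/r\ge \phi''(r)$ for all $r>0$: since hypothesis (4) makes $\phi''$ non-increasing, $\phi'(r)-\phi'(0)=\int_0^r\phi''(s)\,ds\ge r\,\phi''(r)$, and hypothesis (2) gives $\phi'(0)\ge0$. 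Combined with $0\le P\le \|v\|^2$ (Cauchy--Schwarz), this turns $v^\top\nabla^2\psi(w)v$ into a convex combination bounded below by $\phi''(r)\|v\|^2$. Substituting $w=x_t$, $v=y-x$ yields $f''(t)\ge \phi''(\|x_t\|)\|y-x\|^2$.

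The final step is to replace $\phi''(\|x_t\|)$ by $\min(\phi''(\|x\|),\phi''(\|y\|))$. Convexity of the norm forces $\|x_t\|\le\max(\|x\|,\|y\|)$, and monotonicity of $\phi''$ then gives $\phi''(\|x_t\|)\ge \phi''(\max(\|x\|,\|y\|))=\min(\phi''(\|x\|),\phi''(\|y\|))$. Plugging into the Taylor remainder, $\int_0^1(1-t)\,dt=\tfrac{1}{2}$ supplies exactly the required factor, completing strong convexity (the set $\{t:x_t=0\}$ is finite and hence negligible for the integral, and the edge case $x=0$ follows by continuity/limiting). I expect the main obstacle to be the Hessian lower bound: the natural decomposition of $\nabla^2\psi$ splits into a radial piece weighted by $\phi''$ and a tangential piece weighted by $\phi'/r$, and producing a clean lower bound by $\phi''(r)$ requires precisely the inequality $\phi'(r)/r\ge\phi''(r)$, which is where hypothesis (4) (concavity of $\phi'$) is used in an essential way.
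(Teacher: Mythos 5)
Your proof is correct and follows essentially the same route as the paper: both reduce strong convexity to a Hessian lower bound via the decomposition into radial and tangential components, both hinge on the auxiliary inequality $\phi'(r)/r\ge\phi''(r)$ derived from $\phi'''\le 0$ (you integrate $\phi''$; the paper differentiates $\phi'(x)-x\phi''(x)$, but these are the same observation), and both dispatch the remaining two conditions of the adaptive-regularizer definition as immediate restatements of the hypotheses. The only cosmetic difference is that the paper factors the Hessian-to-strong-convexity step and the $\phi'/r\ge\phi''$ computation into separate propositions, whereas you inline them.
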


Now we are finally ready to derive our first optimal regularizer:
\begin{Proposition}\label{thm:rexregularizer}
Let $\|\cdot\|$ be the $L_2$ norm. Let
$\phi(x) = (x+1)\log(x+1)-x$. Then $\psi(w)=\phi(\|w\|)$ is a $1$-optimal, $(\phi''(\|\cdot\|),\|\cdot\|)$-adaptive regularizer.
\end{Proposition}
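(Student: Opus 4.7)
The proposition asserts two things about $\psi(w) = \phi(\|w\|)$ with $\phi(x) = (x+1)\log(x+1) - x$: first, that it is a $(\phi''(\|\cdot\|),\|\cdot\|)$-adaptive regularizer, and second, that it is $1$-optimal in the sense of Corollary \ref{thm:orderbound}. My plan is to derive the first claim by directly invoking Proposition \ref{thm:onedimconditions} applied to $\phi$, and the second claim by a short direct calculation using the explicit form of $\psi$ and $\sigma = \phi''(\|\cdot\|)$.

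For the adaptivity part, I would compute $\phi'(x) = \log(x+1)$, $\phi''(x) = 1/(x+1)$, and $\phi'''(x) = -1/(x+1)^2$, and then check the five hypotheses of Proposition \ref{thm:onedimconditions} one by one. Conditions (1)--(4) are immediate from these formulas, since $\phi(0) = 1\cdot\log(1) - 0 = 0$, while $\phi'\ge 0$, $\phi''\ge 0$, $\phi'''\le 0$ on $[0,\infty)$. For condition (5), I would write
\[
\phi(x)\phi''(x) = \frac{(x+1)\log(x+1)-x}{x+1} = \log(x+1) - \frac{x}{x+1},
\]
and observe that this tends to $\infty$ as $x\to\infty$. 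With all five hypotheses verified, Proposition \ref{thm:onedimconditions} immediately yields that $\psi(w) = \phi(\|w\|)$ is a $(\phi''(\|\cdot\|),\|\cdot\|)$-adaptive regularizer.

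For the $1$-optimality claim, recalling that Corollary \ref{thm:orderbound} with $\gamma = 1$ demands $\psi(x)\sigma(x) \ge \Omega(\log(\|x\|))$ and $\psi(x) \le O(\|x\|\log(\|x\|+1))$, I would use the identity computed above: $\psi(x)\sigma(x) = \log(\|x\|+1) - \|x\|/(\|x\|+1)$, which is bounded below by $\log(\|x\|+1) - 1$ and hence is $\Omega(\log(\|x\|))$. The upper bound is even easier: since the term $-\|x\|$ is negative, $\psi(x) = (\|x\|+1)\log(\|x\|+1) - \|x\| \le (\|x\|+1)\log(\|x\|+1)$, which is $O(\|x\|\log(\|x\|+1))$ for $\|x\|$ bounded away from zero (and trivially bounded near zero). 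Both conditions of the corollary are therefore satisfied with $\gamma = 1$.

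There is no real obstacle here; the argument is purely a verification once Proposition \ref{thm:onedimconditions} and Corollary \ref{thm:orderbound} are available. The only point requiring any care is ensuring the asymptotic comparisons are clean across the whole range of $\|x\|\ge 0$ (handling small $\|x\|$ separately from large $\|x\|$), but this is cosmetic. The substance of the proposition is that this very natural choice of $\phi$ satisfies both the smoothness/growth hypotheses required for the FTRL analysis and the tight $\log$-growth conditions required to hit the $\gamma=1$ point of the frontier.
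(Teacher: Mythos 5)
Your proof is correct and takes essentially the same approach as the paper: compute $\phi(0)$, $\phi'$, $\phi''$, $\phi'''$, and $\phi\cdot\phi''$, invoke Proposition~\ref{thm:onedimconditions} for the adaptive-regularizer claim, and then read off the two $\gamma=1$ bounds of Corollary~\ref{thm:orderbound} from the same formulas. The paper's proof is terser (it declares the conclusion ``immediate from inspection''), but the underlying calculations are identical to yours.
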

\begin{proof}
We can use Proposition \ref{thm:onedimconditions} to prove this with a few simple calculations:
\begin{align*}
\phi(0)&=0\\
\phi'(x) &= \log(x+1)\\
\phi''(x) &= \frac{1}{x+1}\\
\phi'''(x) &= -\frac{1}{(x+1)^2}\\
\phi(x)\phi''(x) &= (\log(x+1)-\frac{x}{x+1})
\end{align*}
Now the conclusion of the Proposition is immediate from Proposition \ref{thm:onedimconditions} and inspection of the above equations.
\end{proof}

A simple application of Corollary \ref{thm:orderbound} shows that FTRL with regularizers $\psi_t(w)=\frac{k}{\eta_t}((\|w\|+1)\log(\|w\|+1)-\|w\|)$ matches our lower bound with $\gamma=1$ for any desired $k$.

In fact, the result of Proposition \ref{thm:rexregularizer} is a more general phenomenon:
\begin{Proposition}\label{thm:matchthebound}
Let $\|\cdot\|$ be the $L_2$ norm. Given $\gamma\in(1/2,1]$, set $\phi(x) = \int_0^x \log^\gamma(z+1)\ dz$. Then $\psi(w)=\phi(\|w\|)$ is a $\gamma$-optimal, $(\phi''(\|\cdot\|),\|\cdot\|)$-adaptive regularizer.
\end{Proposition}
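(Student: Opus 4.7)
The plan is to reduce to Proposition \ref{thm:onedimconditions} to obtain the adaptive regularizer structure, and then verify the two growth conditions in Corollary \ref{thm:orderbound} directly by elementary estimates on the integral defining $\phi$. Because $\phi(x)=\int_0^x \log^\gamma(z+1)\,dz$, the fundamental theorem of calculus gives $\phi'(x)=\log^\gamma(x+1)$ in closed form, and then the remaining derivatives are just calculus.

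First I would differentiate $\phi$ to check the five hypotheses of Proposition \ref{thm:onedimconditions}: $\phi(0)=0$ is immediate; $\phi'(x)=\log^\gamma(x+1)\ge 0$; $\phi''(x)=\gamma\log^{\gamma-1}(x+1)/(x+1)\ge 0$; and a further differentiation yields
\[
\phi'''(x)=\frac{\gamma\log^{\gamma-2}(x+1)}{(x+1)^2}\bigl[(\gamma-1)-\log(x+1)\bigr],
\]
which is $\le 0$ since $\gamma\le 1$ and $\log(x+1)\ge 0$ on $x\ge 0$. For the last (limit) condition, I would use the two-sided bound $(x/2)\log^\gamma(x/2+1)\le \phi(x)\le x\log^\gamma(x+1)$, obtained from monotonicity of the integrand, to conclude $\phi(x)\phi''(x)=\Theta\bigl(\gamma\log^{2\gamma-1}(x+1)\bigr)$, which tends to infinity precisely because $\gamma>1/2$. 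This delivers the $(\phi''(\|\cdot\|),\|\cdot\|)$-adaptive regularizer property.

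Next I would verify $\gamma$-optimality. The upper bound $\psi(x)=\phi(\|x\|)\le \|x\|\log^\gamma(\|x\|+1)$ is just the trivial bound on the integral by its largest value, giving condition~2 of Corollary \ref{thm:orderbound}. For condition~1, the same two-sided estimate gives
\[
\psi(x)\sigma(x)=\phi(\|x\|)\phi''(\|x\|)\ge \frac{\gamma\|x\|}{2(\|x\|+1)}\log^\gamma(\|x\|/2+1)\log^{\gamma-1}(\|x\|+1),
\]
which is $\Omega(\gamma\log^{2\gamma-1}(\|x\|))$ for $\|x\|$ bounded away from zero, matching the required asymptotic. Corollary \ref{thm:orderbound} then upgrades this to the desired regret bound.

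The only real subtlety I anticipate is near the origin: when $\gamma<1$, $\phi''$ blows up at $x=0$ because $\log^{\gamma-1}(1)$ is infinite, and so strictly speaking $\phi$ is only three-times differentiable on $(0,\infty)$, not on $[0,\infty)$. This is the one place where routine verification of Proposition \ref{thm:onedimconditions} needs a small fix; the cleanest remedy is to observe that the conditions of that Proposition are only used for $x>0$ in its proof (or alternatively to replace $\phi$ by a smooth extension near $0$ that agrees with $\phi$ outside a tiny neighbourhood of the origin, which affects the adaptive-regularizer constants but not the asymptotic claims needed for $\gamma$-optimality). Apart from this mild technicality, the rest of the proof is bookkeeping.
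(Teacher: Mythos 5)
Your proposal is correct and follows essentially the same route as the paper: reduce to Proposition~\ref{thm:onedimconditions} by differentiating $\phi$, then establish $\gamma$-optimality via upper and lower bounds on $\phi$. The one place you diverge is how the lower bound on $\phi$ is obtained. You truncate the integral to $[x/2,x]$ and use monotonicity, getting $\phi(x)\ge (x/2)\log^\gamma(x/2+1)$; the paper instead shows $\tfrac{1}{2}\tfrac{d}{dx}\bigl[x\log^\gamma(x+1)\bigr]\le\log^\gamma(x+1)$ (using $\tfrac{x}{x+1}\le\log(x+1)$) and integrates, yielding the marginally sharper $\phi(x)\ge\tfrac{1}{2}x\log^\gamma(x+1)$. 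Both lead to $\phi(x)\phi''(x)=\Omega(\gamma\log^{2\gamma-1}(x))$, so the difference is cosmetic; your version is arguably more elementary, the paper's is slightly cleaner to substitute into $\phi\phi''$.

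Your remark about the behaviour at $x=0$ is a genuine observation that the paper does not address: for $\gamma<1$, $\phi''(x)=\gamma\log^{\gamma-1}(x+1)/(x+1)\to\infty$ as $x\to 0^+$, so $\phi$ is not literally three-times differentiable at $0$, nor is the Hessian of $\psi$ defined at the origin. Your proposed remedies are both sound: the hypotheses of Proposition~\ref{thm:onedimconditions} (and of Propositions~\ref{thm:paramfreetostrong} and \ref{thm:hessiancondition} behind it) are only invoked pointwise on $(0,\infty)$, and the integral-of-Hessian argument in Proposition~\ref{thm:hessiancondition} survives an unbounded but integrable $\sigma$ near the origin — the strong-convexity modulus $\sigma(x,y)=\min(\phi''(\|x\|),\phi''(\|y\|))$ remains finite whenever one of $x,y$ is nonzero, and the case $x=y=0$ is vacuous. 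So the adaptive-regularizer conclusion holds, but it is worth flagging this edge case as you did, since the paper states the regularity hypotheses for all $x\ge0$ without qualification.
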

\begin{proof}
\begin{align*}
    \phi(0) &= 0\\
    \phi'(x) &= \log^\gamma(x+1)\\
    \phi''(x) &= \gamma\frac{\log^{\gamma-1}(x+1)}{x+1}\\
    \phi'''(x) &= \gamma(\gamma-1)\frac{\log^{\gamma-2}(x+1)}{(x+1)^2}-\gamma\frac{\log^{\gamma-1}(x+1)}{(x+1)^2}
\end{align*}
Since $\gamma\le 1$, $\phi'''(x)\le 0$ and so $\phi$ satisfies the first four conditions of Proposition \ref{thm:onedimconditions}. It remains to characterize $\phi(x)$ and $\phi(x)\phi''(x)$, which we do by finding lower and upper bounds on $\phi(x)$:

For a lower bound, we have
\begin{align*}
    \frac{1}{2}\frac{d}{dx}x\log^\gamma(x+1)&= \frac{1}{2}\left(\log^\gamma(x+1)+\gamma\frac{x}{x+1}\log^{\gamma-1}(x+1)\right)\\
    &\le \log^\gamma(x+1)
\end{align*}
where the inequality follows since $\frac{x}{x+1}\le \log(x+1)$, which can be verified by differentiating both sides.
Therefore $\phi(x) \ge \frac{1}{2}x\log^\gamma(x+1)$. This lower-bound implies
\begin{align*}
    \phi(x)\phi''(x) &\ge \frac{1}{2}\gamma\frac{x}{x+1} \log^{2\gamma-1}(x+1)
\end{align*}
which gives us the last condition in Proposition \ref{thm:onedimconditions}, as well as the first condition for $\gamma$-optimality.

Similarly, we have
\begin{align*}
    \frac{d}{dx}x\log^\gamma(x+1)&=\left(\log^\gamma(x+1)+\gamma\frac{x}{x+1}\log^{\gamma-1}(x+1)\right)\\
    &\ge \log^\gamma(x+1)
\end{align*}
This implies $\phi(x)\le x\log(x+1)$ which gives us the second condition for $\gamma$-optimality.
\end{proof}

Thus, by applying Theorem \ref{thm:parameterfreeregret} to the regularizers of Proposition \ref{thm:matchthebound}, we have a family of algorithms that matches our family of lower-bounds up to constants. The updates for these regularizers are extremely simple:
\begin{align*}
    w_{t+1} = 
    -\frac{g_{1:t}}{a_t\|g_{1:t}\|}\left[\exp\left((\eta_t\|g_{1:t}\|/k)^{1/\gamma}\right)-1\right]
\end{align*}

The guarantees of Theorem \ref{thm:parameterfreeregret} do not make any assumptions on how $k$ is chosen, so that we could choose $k$ using prior knowledge if it is available. For example, if a bound on $L_t/L_{t-1}$ is known, we can set $k\ge \sqrt{\max_t L_t/L_{t-1}}$. This reduces the exponentiated quantity $\max_t L_t/k^2L_{t-1}$ to a constant, leaving a regret of $O(\|u\|\log(T\|u\|+1)\Lm\sqrt{T\max_t L_t/L_{t-1}})$. This bound holds without requiring a bound on $\Lm$. Thus our algorithms open up an intermediary realm in which we have no bounds on $\|u\|$ or $\Lm$, and yet we can leverage some other information to avoid the exponential penalty.

\section{\freerex}\label{sec:freerex}
Now we explicitly describe an algorithm, along with a fully worked-out regret bound. The norm $\|\cdot\|$ used in the following is the $L_2$ norm ($\|w\|=\sqrt{w\cdot w}$), and our algorithm uses the adaptive regularizer $\psi(w)=(\|w\|+1)\log(\|w\|+1)-\|w\|$. Similar calculations could be performed for arbitrary $\gamma$ using the regularizers of Proposition \ref{thm:matchthebound}, but we focus on the $\gamma=1$ because it allows for simpler and tighter analysis through our closed-form expression for $\psi$. Since we do not require any information about the losses, we call our algorithm \freerex\ for Information-\textbf{free} \textbf{R}egret via \textbf{ex}ponential updates.

\begin{algorithm}
   \caption{\freerex}
   \label{alg:freerex}
\begin{algorithmic}
   \STATE {\bfseries Input:} $k$.
   \STATE {\bfseries Initialize:} $\frac{1}{\eta_0^2}\gets 0$, $a_0\gets 0$, $w_1\gets 0$, $L_0\gets 0$, $\psi(w)=(\|w\|+1)\log(\|w\|+1)-\|w\|$.
   \FOR{$t=1$ {\bfseries to} $T$}
   \STATE Play $w_t$, receive subgradient $g_t\in \partial \ell_t(w_t)$.
   \STATE $L_t\gets \max(L_{t-1},\|g_t\|)$.
   \STATE $\frac{1}{\eta_t^2}\gets\max\left(\frac{1}{\eta_{t-1}^2}+2\|g_t\|^2, L_t \|g_{1:t}\|\right)$.
   \STATE $a_t\gets\max(a_{t-1},1/(L_t\eta_t)^2)$.
   \STATE //Set $w_{t+1}$ using FTRL update
   \STATE $w_{t+1} \gets -\frac{g_{1:t}}{a_t\|g_{1:t}\|}\left[\exp\left(\frac{\eta_t\|g_{1:t}\|}{k}\right)-1\right]$ // $=\argmin_w\left[\frac{k\psi(a_tw)}{a_t\eta_t}+g_{1:t}w\right]$
   \ENDFOR
\end{algorithmic}
\end{algorithm}

\begin{Theorem}\label{thm:freerexregret}
The regret of \freerex\ (Algorithm \ref{alg:freerex}) is bounded by
\begin{align*}
      R_T(u) &\le k\|u\|\sqrt{2\|g\|^2_{1:T}+\Lm\max_{t\le T}\|g_{1:t}\|} \log\left(\frac{2\|g\|_{1:T}}{\Lm}\|u\|+1\right)+\frac{45\Lm}{k}\exp(10/k^2+1)\\
    &\quad\quad+ 2\Lm\max_t \frac{L_{t-1}^2}{\|g\|^2_{1:t-1}}\left[\exp\left(\frac{5L_t}{k^2L_{t-1}}+1\right)-1\right]
\end{align*}
\end{Theorem}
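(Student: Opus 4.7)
The plan is to obtain Theorem \ref{thm:freerexregret} as a direct instantiation of Theorem \ref{thm:parameterfreeregret} using the specific adaptive regularizer $\psi(w)=(\|w\|+1)\log(\|w\|+1)-\|w\|$ established as $\left(\frac{1}{\|\cdot\|+1},\|\cdot\|\right)$-adaptive by Proposition \ref{thm:rexregularizer}. Since \freerex\ exactly implements the FTRL update of Definition \ref{dfn:regularizers} with this $\psi$ and the $L_2$ norm, all that is needed is to evaluate the three quantities $\sigmamin$, $D$, and $\frac{k}{Q_T\eta_T}\psi(Q_Tu)$ appearing in the bound of Theorem \ref{thm:parameterfreeregret}.

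First, I would compute $h$ explicitly: with $\sigma(w)=1/(\|w\|+1)$, one has $h(w)=\psi(w)\sigma(w)=\log(\|w\|+1)-\|w\|/(\|w\|+1)$. Setting $h(w)=y$ gives $\log(\|w\|+1)=y+\|w\|/(\|w\|+1)\le y+1$, and so the useful inversion bound $h^{-1}(y)\le \exp(y+1)-1$. Applying this with $y=10/k^2$ and using that $k\sigma$ is decreasing in $\|w\|$, I get
\begin{align*}
\sigmamin \;=\; \inf_{\|w\|\le h^{-1}(10/k^2)} \frac{k}{\|w\|+1} \;\ge\; \frac{k}{h^{-1}(10/k^2)+1} \;\ge\; k\exp(-10/k^2-1),
\end{align*}
which yields exactly the middle term $45\Lm/\sigmamin\le (45\Lm/k)\exp(10/k^2+1)$. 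Applying the same bound inside $D$ with $y=5L_t/(k^2L_{t-1})$ produces the third term $2\Lm D\le 2\Lm\max_t\frac{L_{t-1}^2}{\|g\|^2_{1:t-1}}[\exp(5L_t/(k^2L_{t-1})+1)-1]$.

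Next I would handle the first term. A clean upper bound is $\psi(x)=(x+1)\log(x+1)-x\le x\log(x+1)$ for $x\ge 0$ (immediate from $\log(x+1)\le x$). With $Q_T=2\|g\|_{1:T}/\Lm$, this gives
\begin{align*}
\frac{k}{Q_T\eta_T}\psi(Q_T\|u\|)\;\le\;\frac{k\|u\|}{\eta_T}\log(Q_T\|u\|+1).
\end{align*}
Then I would bound $1/\eta_T$ by a short induction on the recurrence in Definition \ref{dfn:regularizers}: assuming $\eta_{t-1}^{-2}\le 2\|g\|^2_{1:t-1}+L_{t-1}\max_{s\le t-1}\|g_{1:s}\|$, both branches of the $\max$ defining $\eta_t^{-2}$ are bounded by $2\|g\|^2_{1:t}+L_t\max_{s\le t}\|g_{1:s}\|$, so at $t=T$ we get $1/\eta_T\le \sqrt{2\|g\|^2_{1:T}+\Lm\max_{t\le T}\|g_{1:t}\|}$. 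Substituting produces exactly the first term in the stated bound.

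There is essentially no conceptual obstacle here: the content is entirely in Theorem \ref{thm:parameterfreeregret} and Proposition \ref{thm:rexregularizer}. The only mild wrinkle is algebraic, namely producing the clean closed-form inversion $h^{-1}(y)\le e^{y+1}-1$ from the transcendental equation $\log(\|w\|+1)-\|w\|/(\|w\|+1)=y$ and verifying that the additive ``$+1$'' inside the exponential (which inflates constants but not the asymptotics) is exactly what propagates into the final $\exp(\cdot+1)$ expressions; I would handle this by the one-line argument above rather than attempting a tighter implicit bound. Summing the three contributions and collecting constants gives the statement of Theorem \ref{thm:freerexregret}.
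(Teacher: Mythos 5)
Your proposal is correct and follows essentially the same route as the paper: instantiate Theorem \ref{thm:parameterfreeregret} with the regularizer $\psi(w)=(\|w\|+1)\log(\|w\|+1)-\|w\|$ of Proposition \ref{thm:rexregularizer}, then evaluate the three resulting terms by computing $h(w)=\log(\|w\|+1)-\|w\|/(\|w\|+1)\ge\log(\|w\|+1)-1$ (hence $h^{-1}(y)\le e^{y+1}-1$), using $\psi(x)\le x\log(x+1)$, and bounding $1/\eta_T$ by $\sqrt{2\|g\|^2_{1:T}+\Lm\max_{t\le T}\|g_{1:t}\|}$. The only cosmetic difference is that the paper cites Proposition \ref{thm:etarates} (part 2) for the $1/\eta_T$ bound rather than redoing the induction inline as you do; the substance is identical.
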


\begin{proof}
Define $\phi(x)=(x+1)\log(x+1)-x$. Then $\psi(w)=(\|w\|+1)\log(\|w\|+1)-\|w\|$ is a $(\phi''(\|\cdot\|),\|\cdot\|)$-adaptive regularizer by Proposition \ref{thm:rexregularizer}. Therefore we can immediately apply Theorem \ref{thm:parameterfreeregret} to obtain
\begin{align*}
    R_T(u) &\le \frac{k}{Q_T\eta_T}\psi(Q_Tu)+\frac{45\Lm}{\phi''_{\text{min}}}+ 2\Lm D
\end{align*}
where we've defined $\phi''_{\text{min}}=\inf_{\|w\|\le h^{-1}(10/k^2)}k\phi''(\|w\|)$.

We can compute (for non-negative $x$):
\begin{align*}
    \phi(x) &\le (x+1)\log(x+1)\\
    \phi''(x) &=\frac{1}{x+1}\\
    h(w)&=\phi(\|w\|)\phi''(\|w\|)=\left(\log(\|w\|+1)-\frac{\|w\|}{\|w\|+1}\right)\\
    &\ge \log(\|w\|+1)-1
\end{align*}

From Proposition \ref{thm:etarates} (part 2) we have $\frac{1}{\eta_T}\le \sqrt{2\|g\|^2_{1:T} + \Lm\max_{t\le T}\|g_{1:t}\|}$. We also have $(\|w\|+1)\log(\|w|+1)-\|w\|=\|w\|\log(\|w\|+1)+\log(\|w\|+1)-\|w\|\le \|w\|\log(\|w\|+1)$, so we are left with
\begin{align*}
    R_T(u) &\le \frac{k}{\eta_T} \|u\|\log(Q_T\|u\|+1)+\sup_{\|w\|\le h^{-1}(\frac{10}{k^2})}\frac{45(\|w\|+1)}{k} + 2\Lm D\\
    &=k\sqrt{2\|g\|^2_{1:T}+\Lm\max_{t\le T}\|g_{1:t}\|}\|u\|\log(a_T\|u\|)+1)+\frac{45\Lm}{k}\left[h^{-1}\left(\frac{10}{k^2}\right)+1\right] \\
    &\quad+ 2\Lm D
\end{align*}
Now it remains to bound $h^{-1}(10/k^2)$ and $D$. From our expression for $h$, we have
\begin{align*}
    h^{-1}(x/k^2) &\le \exp\left[\frac{x}{k^2}+1\right]-1
\end{align*}

Therefore we have
\begin{align*}
    h^{-1}(10/k^2)&\le \exp(10/k^2+1)-1\\
    D &= 2\max_t \frac{L_{t-1}^2}{(\|g\|_\star^2)_{1:t-1}}h^{-1}\left(\frac{5L_t}{k^2L_{t-1}}\right)\\
    &\le 2\max_t \frac{L_{t-1}^2}{(\|g\|_\star^2)_{1:t-1}}\left[\exp\left(\frac{5L_t}{k^2L_{t-1}}+1\right)-1\right]
\end{align*}

Substituting the value $Q_T=2\frac{\|g\|_{1:T}}{\Lm}$, we conclude 
\begin{align*}
    R_T(u) &\le k\sqrt{2\|g\|^2_{1:T}+\Lm\max_{t\le T}\|g_{1:t}\|} \|u\|\log\left(\frac{2\|g\|_{1:T}}{\Lm}\|u\|+1\right) \\
    &\quad\quad+\frac{45\Lm}{k}\exp(10/k^2+1)+ 2\Lm D
\end{align*}
From which the result follows by substituting in our expression for $D$.

\end{proof}

As a specific example, for $k=\sqrt{5}$ we numerically evaluate the bound to get
\begin{align*}
      R_T(u) &\le \|u\|\sqrt{10\|g\|^2_{1:T}+5\Lm\max_{t\le T}\|g_{1:t}\|} \log\left(\frac{2\|g\|_{1:T}}{\Lm}\|u\|+1\right)+405\Lm\\
    &\quad\quad+ 2\Lm\max_t \frac{L_{t-1}^2}{\|g\|^2_{1:t-1}}\left[\exp\left(\frac{L_t}{L_{t-1}}+1\right)-1\right]
\end{align*}

\section{Conclusions}

We have presented a frontier of lower bounds on the worst-case regret of any online convex optimization algorithm without prior information. This frontier demonstrates a fundamental trade-off at work between $ku\Lm\log^\gamma(Tu+1)$ and $\exp\left[\left(\max_t \frac{L_t}{\gamma k^2L_{t-1}}\right)^{\frac{1}{2\gamma-1}}\right]$ terms. We also present some easy-to-use theorems that allow us to construct algorithms that match our lower bound for any chosen $k$ and $\gamma$. Note that by virtue of not requiring prior information, our algorithms are nearly hyperparameter-free. They only require the essentially unavoidable trade-off parameters $k$ and $\gamma$. Since our analysis does not make assumptions about the loss functions or comparison point $u$, the parameters $k$ and $\gamma$ can be freely chosen by the user. Unlike other algorithms that require $\|u\|$ or $\Lm$, there are no unknown constraints on these parameters.

Our results also open a new perspective on optimization algorithms by casting using prior information as a tool to avoid the exponential penalty. Previous algorithms that require bounds on the diameter of $W$ or $\Lm$ can be viewed as addressing this issue. We show that it also possible to avoid the exponential penalty by using a known bound on $\max_t L_t/L_{t-1}$, leading to a regret of $\tilde O(\|u\|\Lm\sqrt{T\max_t L_t/L_{t-1}})$.


Although we answer some important questions, there is still much to do in online learning without prior information. For example, it is possible to obtain $O(\|u\|^2\Lm\sqrt{T})$ regret without prior information \citep{orabona2016scale}, so it should be possible to extend our lower-bound frontier beyond $\|u\|\log(\|u\|)$. Further, it would be valuable to further characterize the conditions for which the adversary can guarantee regret that is exponential in $T$. We showed that one such condition is that there must be a large jump in the value of $L_t$, but there may very well be others. Fully characterizing these conditions should allow us design algorithms that smoothly interpolate between ``nice'' environments that do not satisfy the conditions and fully adversarial ones that do. 

Finally, while our analysis allows for the use of arbitrary norms, we focus our examples on the $L_2$ norm. It may be interesting to design adaptive regularizers with respect to a more diverse set of norms, or to extend our theory to encompass time-changing norms.

\small
\bibliography{all}

\appendix

\section{Lower Bound Proof}
Before getting started, we need one technical observation:
\begin{Proposition}\label{thm:rtbound}
Let $k>0$, $\gamma\in(1/2,1]$. Set 
\[
Z_t = \frac{t^{1-1/2\gamma}}{2t}\left[\exp\left(\frac{t^{1/2\gamma}}{(4k)^{1/\gamma}}\right)-1\right]
\]
and set $r_t= Z_t-Z_{t-1}$. Then for all sufficiently large $T$,
\begin{align*}
    r_T\ge \frac{Z_{T-1}}{3\gamma(4k)^{1/\gamma}(T-1)^{1-1/2\gamma}}
\end{align*}
\end{Proposition}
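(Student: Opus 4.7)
The plan is to treat the discrete quantity $Z_t$ as samples of a smooth real-variable function and lower bound its unit increment by a derivative.

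First, I would extend $Z_t$ to the real-variable function
$$f(x) = \frac{1}{2x^{1/(2\gamma)}}\left[\exp\!\left(\frac{x^{1/(2\gamma)}}{(4k)^{1/\gamma}}\right) - 1\right],$$
so that $Z_t = f(t)$ and $r_T = f(T) - f(T-1)$. To streamline the bookkeeping, set $\alpha = (4k)^{-1/\gamma}$ and $\beta = 1/(2\gamma) \in [1/2, 1)$, so that $f(x) = (e^{\alpha x^\beta} - 1)/(2x^\beta)$ and the target inequality becomes
$$r_T \;\geq\; \frac{\alpha\, f(T-1)}{3\gamma\, (T-1)^{1-\beta}}.$$

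Next, I would compute $f'$ by the chain rule to obtain
$$f'(x) \;=\; \frac{\beta}{2\,x^{\beta+1}}\left[\alpha x^\beta e^{\alpha x^\beta} - \left(e^{\alpha x^\beta} - 1\right)\right],$$
and verify that $f$ is convex for all sufficiently large $x$: a direct computation of $f''$ shows that the dominant contribution is a positive term of order $(\alpha\beta)^2\, x^{2\beta-2}\, e^{\alpha x^\beta}/x^\beta$, which dwarfs every algebraic correction. Convexity yields
$$r_T \;=\; f(T) - f(T-1) \;\geq\; f'(T-1),$$
so it suffices to show $f'(T-1) \geq \alpha\, f(T-1)/[3\gamma(T-1)^{1-\beta}]$ for all large $T$.

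For the final step I would form the ratio $f'(T-1)/f(T-1)$; the exponential factors $e^{\alpha(T-1)^\beta}$ largely cancel, leaving
$$\frac{f'(T-1)}{f(T-1)} \;=\; \frac{\beta}{T-1}\cdot \frac{\alpha(T-1)^\beta e^{\alpha(T-1)^\beta} - (e^{\alpha(T-1)^\beta}-1)}{e^{\alpha(T-1)^\beta}-1}.$$
As $T\to\infty$ the fraction on the right tends to $\alpha(T-1)^\beta$, so the whole ratio tends to $\alpha\beta/(T-1)^{1-\beta} = \alpha/[2\gamma(T-1)^{1-\beta}]$. Since $\tfrac{1}{2\gamma}>\tfrac{1}{3\gamma}$, the target bound holds with an asymptotic slack factor of $3/2$ once $T$ is large enough. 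The main obstacle is purely bookkeeping: confirming that the $-(e^{\alpha(T-1)^\beta}-1)$ and $+1$ corrections in the derivative formula are dominated by the leading $\alpha(T-1)^\beta e^{\alpha(T-1)^\beta}$ term, and that $f''\geq 0$ for large $x$, so that the limiting ratio of $3/2$ actually implies the strict inequality at every sufficiently large finite $T$.
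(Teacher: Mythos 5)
Your proposal is correct and follows essentially the same route as the paper: extend $Z_t$ to a smooth function of $t$, verify that its derivative is increasing (equivalently $Z''\ge 0$) for large $t$ so that $r_T\ge Z'(T-1)$, and then lower bound $Z'(T-1)$ in terms of $Z_{T-1}$. The paper handles the last step by an explicit algebraic manipulation (dropping the two $O(t^{-1-1/2\gamma})$ corrections to get a factor $1/5\gamma$, then rewriting the surviving exponential as $Z_{T-1}$ plus a positive remainder), whereas you argue via the asymptotic limit of the ratio $f'(T-1)/f(T-1)\to \alpha/[2\gamma(T-1)^{1-\beta}]$ and the slack $1/(2\gamma)>1/(3\gamma)$; these are equivalent in substance, though the paper's version spells out the intermediate constants.
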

\begin{proof}
We have
\begin{align*}
    \left.\frac{d}{dt}\right|_{t=T}Z_t&= \frac{1}{4\gamma(4k)^{1/\gamma}T}\exp\left(\frac{T^{1/2\gamma}}{(4k)^{1/\gamma}}\right)+\frac{1}{4\gamma}T^{-1-1/2\gamma}-\frac{1}{4\gamma}T^{-1-1/2\gamma}\exp\left(\frac{T^{1/2\gamma}}{(4k)^{1/\gamma}}\right)
\end{align*}
For sufficiently large $T$, this quantity is positive and increasing in $T$. Therefore for sufficiently large $T$,
\begin{align*}
    r_T&\ge \left.\frac{d}{dt}\right|_{t=T-1}Z_t\\
    &= \frac{1}{4\gamma(4k)^{1/\gamma}(T-1)}\exp\left(\frac{(T-1)^{1/2\gamma}}{(4k)^{1/\gamma}}\right)+\frac{1}{4\gamma}(T-1)^{-1-1/2\gamma}-\frac{1}{4\gamma}(T-1)^{-1-1/2\gamma}\exp\left(\frac{(T-1)^{1/2\gamma}}{(4k)^{1/\gamma}}\right)\\
    &\ge \frac{1}{5\gamma(4k)^{1/\gamma}(T-1)}\exp\left(\frac{(T-1)^{1/2\gamma}}{(4k)^{1/\gamma}}\right)\\
    &= \frac{2}{5\gamma(4k)^{1/\gamma}(T-1)^{1-1/2\gamma}}\left(Z_{T-1}+\frac{(T-1)^{1-1/2\gamma}}{2(T-1)}\right)\\
    &\ge \frac{1}{3\gamma(4k)^{1/\gamma}(T-1)^{1-1/2\gamma}}Z_{T-1}
\end{align*}
where the third inequality holds only for sufficiently large $T$.
\end{proof}

Now we prove Theorem \ref{thm:lowerbound}, restated below. Theorem \ref{thm:lowerboundkonly} is an immediate consequence of Theorem \ref{thm:lowerbound}, so we do not prove it seperately.
\lowerbound*

\begin{proof}
We prove the Theorem for randomized algorithms and expected regret, as this does not overly complicate the argument.
Our proof technique is very similar to that of \citep{cutkosky2016online}, but we use more careful analysis to improve the bound. Intuitively, the adversarial sequence foils the learner by repeatedly presenting it with the subgradient $g_t=-1$ until the learner's expected prediction $\E[w_t]$ crosses some threshold. If $\E[w_t]$ does not increase fast enough to pass the threshold, then we show that there is some large $u\gg 1$ for which $R_T(u)$ exceeds our bound. However, if $\E[w_t]$ crosses this threshold, then the adversary presents a large positive gradient which forces the learner to have a large $R_T(0)$.

Define $\hat w_t=\E[w_t|g_{t'}=-1\text{ for all }t'<t]$. Without loss of generality, assume $\hat w_1=0$. Note that $\hat w_t$ can be computed by an adversary without access to the algorithm's internal randomness.

Let $S_n=\sum_{t=1}^n \hat w_t$. Let $Z_t=\frac{t^{1-1/2\gamma}}{2t}\left[\exp\left(\frac{t^{1/2\gamma}}{(4k)^{1/\gamma}}\right)-1\right]$, and set $r_t=Z_t-Z_{t-1}$ Suppose $T_1>T_0$ is such that
\begin{enumerate}
\item For all $t_1>t_2>T_1$, $Z_{t_1}>Z_{t_2}$.
\item For all $t>T_1$, $r_t\ge \frac{Z_{t-1}}{3\gamma(4k)^{1/\gamma}(t-1)^{1-1/2\gamma}}$ (by Proposition \ref{thm:rtbound}).
\item For all $t>T_1$,
\begin{align*}
    \frac{1}{4}\left[\exp\left(\frac{t^{1/2\gamma}}{(4k)^{1/\gamma}}\right)-1\right]&\ge \frac{1}{t-1}\exp\left(\frac{(t-1)^{1/2\gamma}}{2(4k)^{1/\gamma}}\right)
\end{align*}
\item for all $t>T_1$,
\begin{align*}
    \frac{1}{36\gamma(4k)^{1/\gamma}(t-1)}\left[\exp\left(\frac{(t-1)^{1/2\gamma}}{(4k)^{1/\gamma}}\right)-1\right]\ge \frac{1}{(t-1)}\exp\left(\frac{(t-1)^{1/2\gamma}}{2(4k)^{1/\gamma}}\right)
\end{align*}
\item For all $t>T_1$,
\begin{align*}
    \frac{1}{t-1}\exp\left(\frac{(t-1)^{1/2\gamma}}{2(4k)^{1/\gamma}}\right)&\ge\exp\left[\frac{1}{4}\left(\frac{1}{288 \gamma k^2}\right)^{1/(2\gamma-1)}\right]\\
\end{align*}
\item For all $t>T_1$,
\begin{align*}
18\gamma(4k)^{1/\gamma}(T-1)^{1-1/2\gamma}\ge 1
\end{align*}
\end{enumerate}

We consider the quantity $\liminf_{n\to\infty} \frac{S_n}{Z_n}$. There are two cases, either the $\liminf$ is less than 1, or it is not.

\noindent{\bf Case 1: $\liminf_{n\to\infty} \frac{S_n}{Z_n}<1$}

In this case, there must be some $T>T_1$ such that $S_T<Z_T$. We use the adversarial strategy of simply giving $g_t=-1$ for all $t\le T$. Because of this, $\E[w_t|g_1,\dots,g_{t-1}]=\hat w_t$ so that
\begin{align*}
    \E[R_T(u)] &= \sum_{t=1}^T g_t\E[w_t|g_1,\dots,g_{t-1}]-g_tu\\
    &=\sum_{t=1}^T g_t\hat w_t-g_tu\\
    &= Tu-S_T\\
    &\ge Tu-\frac{T^{1-\frac{1}{2\gamma}}}{2T}\left[\exp\left(\frac{T^{1/2\gamma}}{(4k)^{1/\gamma}}\right)-1\right]\\
    &\ge Tu-\frac{1}{2}\left[\exp\left(\frac{T^{1/2\gamma}}{(4k)^{1/\gamma}}\right)-1\right]
\end{align*}
Set $u=\frac{1}{T}\left[\exp\left(\frac{T^{1/2\gamma}}{(4k)^{1/\gamma}}\right)-1\right]$. Then clearly 
\begin{align*}
    \E[R_T(u)]&\ge Tu-\frac{1}{2}\left[\exp\left(\frac{T^{1/2\gamma}}{(4k)^{1/\gamma}}\right)-1\right]\\
    &\ge \frac{1}{2}Tu\\
    &=\frac{1}{4}Tu+\frac{1}{4}\left[\exp\left(\frac{T^{1/2\gamma}}{(4k)^{1/\gamma}}\right)-1\right]
\end{align*}
Now observe that we have chosen $u$ carefully so that
\begin{align*}
    \sqrt{T} = 4k\log^\gamma(Tu+1)
\end{align*}
Therefore we can write
\begin{align*}
    \E[R_T(u)]&\ge \frac{1}{4}Tu+\frac{1}{4}\left[\exp\left(\frac{T^{1/2\gamma}}{(4k)^{1/\gamma}}\right)-1\right]\\
    &=k\|u\|\log^\gamma(T\|u\|+1)\sqrt{T} + \frac{1}{4}\left[\exp\left(\frac{T^{1/2\gamma}}{(4k)^{1/\gamma}}\right)-1\right]\\
    &=k\|u\|\Lm\log^\gamma(T\|u\|+1)\sqrt{T} + \frac{\Lm}{4}\left[\exp\left(\frac{T^{1/2\gamma}}{(4k)^{1/\gamma}}\right)-1\right]
\end{align*}
where we have used $\Lm=1$ to insert factors of $\Lm$ where appropriate.

Observing that $L_t/L_{t-1}=1$ for all $t$, we can also easily conclude (using properties 3 and 5 of $T_1$):
\begin{align*}
    \E[R_T(u)]&\ge k\|u\|\Lm \log^\gamma(T\|u\|+1)\sqrt{T} + \frac{\Lm}{T-1}\exp\left(\frac{(T-1)^{1/2\gamma}}{2(4k)^{1/\gamma}}\right)\\
    &\ge k\|u\|\Lm \log^\gamma(T\|u\|+1)\sqrt{T} + \max_{t\le T}\Lm\frac{L_{t-1}^2}{\sum_{t'=1}^{t-1}\|g_{t'}\|^2}\exp\left[\frac{1}{2}\left(\frac{L_t/L_{t-1}}{288\gamma k^2}\right)^{1/(2\gamma-1)}\right]
\end{align*}
\noindent{\bf Case 2: $\liminf_{n\to\infty} \frac{S_n}{Z_n}\ge 1$}

By definition of $\liminf$, there exists some $T_2>T_1$ and $Q\ge 1$ such that $S_{T_2}\le \frac{3}{2}Q Z_{T_2}$ and for all $t>T_2$, $S_t > \frac{3Q}{4} Z_t$. 


Suppose for contradiction that $\hat w_t\le \frac{Q}{2}r_t$ for all $t>T_2$. Then for all $T>T_2$,
\begin{align*}
    S_T &= S_{T_2}+\sum_{t=T_2+1}^T \hat w_t\\
    &\le \frac{3}{2}QZ_{T_2}+\frac{Q}{2}Z_T-\frac{Q}{2}Z_{T_2}\\
    &= \frac{Q}{2}Z_T +QZ_{T_2}
\end{align*}
Since the second term does not depend on $T$, this implies that for sufficiently large $T$, $\frac{S_T}{Z_T}\le \frac{3}{4}QZ_T$, which contradicts our choice of $T_2$. Therefore $\hat w_t>\frac{Q}{2}r_t$ for some $t>T_2$. 

Let $T$ be the the smallest index $T>T_2$ such that $\hat w_T> \frac{Q}{2}r_T$. Since $\hat w_t\le \frac{Q}{2}r_t$ for $t<T$, we have
\begin{align*}
    S_{T-1} &\le \frac{Q}{2}Z_{T-1} +Q Z_{T_2}\le 2QZ_{T-1}
\end{align*}
where we have used property 1 of $T_1$ to conclude $Z_{T_2}\le Z_{T-1}$.

Our adversarial strategy is to give $g_t=-1$ for $t<T$, then $g_T = 18\gamma(4k)^{1/\gamma}(T-1)^{1-1/2\gamma}$. We evaluate the regret at $u=0$ and iteration $T$. Since $g_t=-1$ for $t<T$, $\E[w_t|g_1,\dots,g_{t-1}]=\hat w_t$ for $t\le T$ and so
\begin{align*}
    \E[R_T(u)]&=-S_{T-1}+g_Tw_T\\
    &\ge g_T\frac{Q}{2}r_T -2QZ_{T-1}\\
    &\ge \frac{Q}{2}\frac{18\gamma(4k)^{1/\gamma}(T-1)^{1-1/2\gamma}}{3\gamma(4k)^{1/\gamma}(T-1)^{1-1/2\gamma}}Z_{T-1}-2QZ_{T-1}\\
    &=QZ_{T-1}\\
    &\ge Z_{T-1}
\end{align*}
where we have used $Q\ge 1$ in the last line.
Now we use the fact that $\Lm = 18\gamma (4k)^{1/\gamma}(T-1)^{1-1/2\gamma}$ (by property 6 of $T_1$) to write
\begin{align*}
\E[R_T(u)]&\ge Z_{T-1}\\
    &=\frac{1}{18\gamma(4k)^{1/\gamma})}\frac{\Lm}{T-1}\left[\exp\left(\frac{(T-1)^{1/2\gamma}}{(4k)^{1/\gamma}}\right)-1\right]\\
    &\ge \frac{\Lm}{T-1}\exp\left(\frac{(T-1)^{1/2\gamma}}{2(4k)^{1/\gamma}}\right)
\end{align*}
where we have used the fourth assumption on $T_1$ in the last line.

Since we are considering $u=0$, we can always insert arbitrary multiples of $u$:
\begin{align*}
    \E[R_T(u)]&\ge \frac{\Lm}{T-1}\exp\left(\frac{(T-1)^{1/2\gamma}}{2(4k)^{1/\gamma}}\right)\\
    &=k\|u\|\Lm \log^\gamma(T\|u\|+1)\sqrt{T} + \frac{\Lm}{T-1}\exp\left(\frac{(T-1)^{1/2\gamma}}{2(4k)^{1/\gamma}}\right)
\end{align*}

Now we relate the quantity in the exponent to $L_t/L_{t-1}$. We have $L_T = g_T$ and $L_{T-1}=1$ so that
\begin{align*}
    L_T/L_{T-1} = 18\gamma(4k)^{1/\gamma}(T-1)^{1-1/2\gamma}
\end{align*}
Therefore
\begin{align*}
    (T-1)^{1/2\gamma}&=\left(\frac{L_T/L_{T-1}}{18\gamma(4k)^{1/\gamma}}\right)^{\frac{1}{2\gamma(1-1/2\gamma)}}\\
    &=\left(\frac{L_T/L_{T-1}}{18\gamma(4k)^{1/\gamma}}\right)^{1/(2\gamma-1)}\\
    \frac{(T-1)^{1/2\gamma}}{(4k)^{1/\gamma}}&=\left(\frac{L_T/L_{T-1}}{18\gamma(4k)^2}\right)^{1/(2\gamma-1)}\\
    &=\left(\frac{L_T/L_{T-1}}{288 \gamma k^2}\right)^{1/(2\gamma-1)}
\end{align*}

Now observe that $\frac{1}{T-1} = \frac{L_{T-1}^2}{\sum_{t=1}^{T-1}\|g_t\|^2}$ so that we have
\begin{align*}
    \frac{\Lm}{T-1}\exp\left(\frac{(T-1)^{1/2\gamma}}{2(4k)^{1/\gamma}}\right)&=\Lm\frac{L_{T-1}^2}{\sum_{t=1}^{T-1}\|g_t\|^2}\exp\left[\frac{1}{2}\left(\frac{L_T/L_{T-1}}{288 \gamma k^2}\right)^{1/(2\gamma-1)}\right]
\end{align*}

Further, since $\frac{1}{t-1} =\frac{L_{t-1}^2}{\sum_{t'=1}^{t-1}\|g_{t'}\|^2}$ for all $t\le T$, condition 5 on $T_1$ tells us that
\begin{align*}
    \frac{\Lm}{T-1}\exp\left(\frac{(T-1)^{1/2\gamma}}{2(4k)^{1/\gamma}}\right)&\ge\Lm\exp\left[\frac{1}{2}\left(\frac{1}{288\gamma k^2}\right)^{1/(2\gamma-1)}\right]\\
    &= \max_{t\le T-1}\Lm\frac{L_{t-1}^2}{\sum_{t'=1}^{t-1}\|g_{t'}\|^2}\exp\left[\frac{1}{2}\left(\frac{L_t/L_{t-1}}{288\gamma k^2}\right)^{1/(2\gamma-1)}\right]
\end{align*}
so that
\begin{align*}
    \frac{\Lm}{T-1}\exp\left(\frac{(T-1)^{1/2\gamma}}{2(4k)^{1/\gamma}}\right)&=\max_{t\le T}\Lm\frac{L_{t-1}^2}{\sum_{t'=1}^{t-1}\|g_{t'}\|^2}\exp\left[\frac{1}{2}\left(\frac{L_t/L_{t-1}}{288\gamma k^2}\right)^{1/(2\gamma-1)}\right]
\end{align*}
Therefore we can put everything together to get
\begin{align*}
    \E[R_T(u)]&\ge k\|u\|\Lm \log^\gamma(T\|u\|+1)\sqrt{T} + \frac{\Lm}{T-1}\exp\left(\frac{(T-1)^{1/2\gamma}}{2(4k)^{1/\gamma}}\right)\\
    &\ge k\|u\|\Lm \log^\gamma(T\|u\|+1)\sqrt{T} + \max_{t\le T}\Lm\frac{L_{t-1}^2}{\sum_{t'=1}^{t-1}\|g_{t'}\|^2}\exp\left[\frac{1}{2}\left(\frac{L_t/L_{t-1}}{288\gamma k^2}\right)^{1/(2\gamma-1)}\right]
\end{align*}

\end{proof}
\section{FTRL regret}
We prove a general bound on the regret of FTRL. Our bound is not fundamentally tighter than the many previous analyses of FTRL, but we decompose the regret in a new way that makes our analysis much easier. We make use of ``shadow regularizers", $\psi^+_t$ that can be used to characterize regret more easily. Our bound bears some similarity in form to the adaptive online mirror descent bound of \citep{orabona2014generalized} and the analysis of FTRL with varying regularizers of \citep{cutkosky2016online}.

\begin{Theorem}\label{thm:ftrlmagic}
Let $\ell_t,\dots,\ell_T$ be an arbitrary sequence of loss functions. Define $\ell_0(w)=0$ for notational convenience. Let $\psi_0,\psi_1,\dots,\psi_{T-1}$ be a sequence of regularizer functions, such that $\psi_t$ is chosen without knowledge of $\ell_{t+1},\dots,\ell_T$. Let $\psi^+_1,\dots,\psi^+_T$ be an arbitrary sequences of regularizer functions (possibly chosen \emph{with} knowledge of the full loss sequence). Define $w_1,\dots,w_T$ to be the outputs of FTRL with regularizers $\psi_t$: $w_{t+1}=\argmin \psi_t +\ell_{1:t}$, and define $w^+_t$ for $t=2,\dots,T+1$ by $w^+_{t+1} = \argmin \psi^+_t + \ell_{1:t}$ Then FTRL with regularizers $\psi_t$ obtains regret
\begin{align*}
    R_T(u)&=\sum_{t=1}^T \ell_t(w_t)-\ell_t(u)\\
    &\le \psi^+_T(u)-\psi_0(w^+_2)+\sum_{t=1}^T \psi_{t-1}(w^+_{t+1})-\psi^+_t(w^+_{t+1})+\ell_t(w_t)-\ell_t(w^+_{t+1})\\
    &\quad\quad + \sum_{t=1}^{T-1} \psi^+_t(w^+_{t+2})-\psi_t(w^+_{t+2})
\end{align*}
\end{Theorem}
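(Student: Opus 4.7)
My plan is to decompose the regret into a ``stability'' contribution plus a ``shadow be-the-leader'' regret, and bound the latter by a telescoping argument over the optimality conditions for the shadow sequence $\{w^+_{t+1}\}$. Writing
\[
R_T(u) = \sum_{t=1}^T [\ell_t(w_t) - \ell_t(w^+_{t+1})] + \sum_{t=1}^T [\ell_t(w^+_{t+1}) - \ell_t(u)],
\]
I recognize the first sum as exactly the $\ell_t(w_t)-\ell_t(w^+_{t+1})$ contributions already on the RHS of the target bound, so my whole task reduces to controlling the second sum.

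For that shadow-BTL piece I would run a strong-FTL-style argument on the chain $w^+_2,w^+_3,\dots,w^+_{T+1},u$. Because $w^+_{t+1} = \argmin \psi^+_t + \ell_{1:t}$, I have $T$ optimality inequalities: $\psi^+_t(w^+_{t+1}) + \ell_{1:t}(w^+_{t+1}) \le \psi^+_t(w^+_{t+2}) + \ell_{1:t}(w^+_{t+2})$ for $1\le t\le T-1$, and the same inequality with $u$ in place of $w^+_{T+2}$ at $t=T$. Summing them, the cumulative-loss contributions on the two sides differ by $\sum_{t=1}^{T-1}\sum_{s=1}^t[\ell_s(w^+_{t+2}) - \ell_s(w^+_{t+1})] + [\ell_{1:T}(u) - \ell_{1:T}(w^+_{T+1})]$; swapping the order of summation in the double sum makes the inner sum telescope along the chain, and combining with the endpoint piece everything collapses cleanly to $\sum_{t=1}^T [\ell_t(u) - \ell_t(w^+_{t+1})]$. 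After rearranging, this yields
\[
\sum_{t=1}^T [\ell_t(w^+_{t+1}) - \ell_t(u)] \le \psi^+_T(u) + \sum_{t=1}^{T-1} \psi^+_t(w^+_{t+2}) - \sum_{t=1}^T \psi^+_t(w^+_{t+1}).
\]

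Combined with the stability sum this already gives a valid regret bound, so the last step is a purely cosmetic identification with the stated form. The ``real'' regularizer contributions on the RHS of the theorem, namely $-\psi_0(w^+_2) + \sum_{t=1}^T \psi_{t-1}(w^+_{t+1}) - \sum_{t=1}^{T-1} \psi_t(w^+_{t+2})$, telescope to zero: reindexing $t\mapsto s+1$ rewrites the middle sum as $\psi_0(w^+_2) + \sum_{s=1}^{T-1} \psi_s(w^+_{s+2})$, which cancels both of the other pieces. Thus the target bound is literally my derived bound with a zero added in a suggestive form; the $\psi$ terms are carried along because the analysis of Theorem~\ref{thm:parameterfreeregret} will re-pair them (absorbing $\psi_{t-1}(w^+_{t+1}) - \psi^+_t(w^+_{t+1})$ via strong convexity / local smoothness of the regularizers). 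The only nontrivial step is the double-sum telescope; the fact that the chain closes on $u$ rather than on a hypothetical $w^+_{T+2}$ makes the indexing slightly delicate, but everything else is bookkeeping.
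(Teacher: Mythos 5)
Your proof is correct and follows essentially the same approach as the paper: both rely on the optimality of the shadow sequence $w^+_{t+1}$ tested against $w^+_{t+2}$ (and $u$ at the endpoint), followed by a telescoping re-indexing of the cumulative-loss and regularizer terms. The one thing you make explicit that the paper leaves implicit in its $\pm\psi_{t-1}(w^+_{t+1})$ insertion is that the non-shadow regularizer terms $-\psi_0(w^+_2) + \sum_{t=1}^T \psi_{t-1}(w^+_{t+1}) - \sum_{t=1}^{T-1}\psi_t(w^+_{t+2})$ telescope to zero, so the stated bound is literally your strong-FTL bound with a suggestively-arranged zero added; this is a nice observation but not a different proof.
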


\begin{proof}

We define $X_t=w^+_{t+2}$ for $t<T$ and $X_T=u$. We'll use the symbols $X_t$ as intermediate variables in our proof in an attempt to keep the algebra cleaner. By definition of $w^+_{t+1}$, for all $t\le T$ we have
\begin{align*}
    \psi^+_t(w^+_{t+1})+\ell_{1:t}(w^+_{t+1})&\le \psi^+_t(X_t)+\ell_{1:t}(X_t)\\
    \ell_t(w_t)&\le \ell_t(w_t)+\ell_{1:t}(X_t)-\ell_{1:t}(w^+_{t+1}) + \psi^+_t(X_t)-\psi^+_t(w^+_{t+1})\\
    &= \ell_t(w_t)-\ell_t(w^+_{t+1})+\ell_{1:t}(X_t)-\ell_{1:t-1}(w^+_{t+1})\\
    &\quad\quad+\psi_{t-1}(w^+_{t+1})-\psi^+_t(w^+_{t+1})\\
    &\quad\quad+\psi^+_t(X_t)-\psi_{t-1}(w^+_{t+1})
\end{align*}
Summing this inequality across all $t$ we have
\begin{align*}
    \sum_{t=1}^T \ell_t(w_t)&\le \sum_{t=1}^T \ell_t(w_t)-\ell_t(w^+_{t+1}) \\
    &\quad\quad + \sum_{t=1}^T\ell_{1:t}(X_t)-\sum_{t=1}^T\ell_{1:t-1}(w^+_{t+1})\\
    &\quad\quad+\sum_{t=1}^T\psi_{t-1}(w^+_{t+1})-\psi^+_t(w^+_{t+1})\\
    &\quad\quad + \sum_{t=1}^T \psi^+_t(X_t)-\psi_{t-1}(w^+_{t+1})
\end{align*}

Notice that $\sum_{t=1}^T\ell_{1:t-1}(w^+_{t+1})=\sum_{t=2}^T\ell_{1:t-1}(w^+_{t+1})$ since the first term is zero. Thus after some re-indexing we have
\begin{align*}
    \sum_{t=1}^T \ell_t(w_t)&\le \sum_{t=1}^T \ell_t(w_t)-\ell_t(w^+_{t+1}) \\
    &\quad\quad + \ell_{1:T}(X_T)+\sum_{t=2}^T\ell_{1:t-1}(X_{t-1})-\sum_{t=2}^T\ell_{1:t-1}(w^+_{t+1})\\
    &\quad\quad+\sum_{t=1}^T\psi_{t-1}(w^+_{t+1})-\psi^+_t(w^+_{t+1})\\
    &\quad\quad + \psi^+_T(X_T)-\psi_0(w^+_2)+\sum_{t=1}^{T-1} \psi^+_t(X_t)-\sum_{t=1}^{T-1}\psi_{t}(w^+_{t+2})
\end{align*}
Now we substitute our values of $X_t=w^+_{t+2}$ for $t<T$ and $X_T=u$ to obtain
\begin{align*}
    \sum_{t=1}^T \ell_t(w_t)&\le \sum_{t=1}^T \ell_t(w_t)-\ell_t(w^+_{t+1}) \\
    &\quad\quad + \ell_{1:T}(u) + \psi^+_T(u)-\psi_0(w^+_2)\\
    &\quad\quad+\sum_{t=1}^T\psi_{t-1}(w^+_{t+1})-\psi^+_t(w^+_{t+1})\\
    &\quad\quad+\sum_{t=1}^{T-1} \psi^+_t(w^+_{t+2})-\sum_{t=1}^{T-1}\psi_{t}(w^+_{t+2})
\end{align*}

so that subtracting $\ell_{1:T}(u)$ from both sides we get a regret bound:
\begin{align*}
    R_T(u)&=\sum_{t=1}^T \ell_t(w_t)-\ell_t(u)\\
    &\le \sum_{t=1}^T \ell_t(w_t)-\ell_t(w^+_{t+1}) \\
    &\quad\quad + \psi^+_T(u)-\psi_0(w^+_2)\\
    &\quad\quad+\sum_{t=1}^T\psi_{t-1}(w^+_{t+1})-\psi^+_t(w^+_{t+1})\\
    &\quad\quad+\sum_{t=1}^{T-1} \psi^+_t(w^+_{t+2})-\sum_{t=1}^{T-1}\psi_{t}(w^+_{t+2})
\end{align*}
\end{proof}

\section{Facts About Strong Convexity}

In this section we prove some basic facts about our generalized strong convexity.

\begin{Proposition}\label{thm:strongconvexfacts}
Suppose $\psi:W\to \R$ is $\sigma$-strongly convex. Then:
\begin{enumerate}
\item $\psi+f$ is $\sigma$-strongly convex for any convex function $f$.
\item $c\psi$ is $c\sigma$-strongly convex for any $c\ge 0$.
\item Suppose $c\ge 0$ and $\phi(w)=\psi(cw)$. Let $\sigma'(x,y)=\sigma(cx,cy)$. Then $\phi$ is $c^2\sigma'$-strongly convex.
\end{enumerate}
\end{Proposition}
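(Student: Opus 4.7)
The plan is to verify each of the three claims by invoking the defining inequality of $\sigma$-strong convexity applied to a generic subgradient of the new function, and then reducing back to a subgradient of $\psi$ via standard subdifferential calculus (sum rule and chain rule for convex functions). Each part is essentially a one-line consequence of this reduction; the bulk of the work is just bookkeeping.

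For part 1, I would take any $g\in\partial(\psi+f)(x)$, and use the convex subdifferential sum rule to write $g=g_\psi+g_f$ with $g_\psi\in\partial\psi(x)$ and $g_f\in\partial f(x)$. Adding the $\sigma$-strong convexity inequality for $\psi$ at $(x,y)$ with subgradient $g_\psi$ to the ordinary subgradient inequality for $f$ with $g_f$ yields the desired $\sigma$-strong convexity inequality for $\psi+f$, since the quadratic term comes entirely from $\psi$ while the linear parts combine to $g\cdot(y-x)$. For part 2, any $g\in\partial(c\psi)(x)$ is of the form $cg_\psi$ for some $g_\psi\in\partial\psi(x)$ (using $c\ge 0$), and multiplying the strong convexity inequality for $\psi$ through by $c$ produces $c\sigma$-strong convexity of $c\psi$ directly.

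For part 3, the chain rule gives $\partial\phi(x)=c\,\partial\psi(cx)$, so any $g\in\partial\phi(x)$ equals $cg_\psi$ for some $g_\psi\in\partial\psi(cx)$. Applying $\sigma$-strong convexity of $\psi$ at the pair $(cx,cy)$ gives
\[
\psi(cy)\ge\psi(cx)+g_\psi\cdot(cy-cx)+\frac{\sigma(cx,cy)}{2}\|cy-cx\|^2.
\]
Rewriting the left and right sides in terms of $\phi$, using $g_\psi\cdot(cy-cx)=(cg_\psi)\cdot(y-x)=g\cdot(y-x)$ and $\|cy-cx\|^2=c^2\|y-x\|^2$, turns this into the $c^2\sigma'$-strong convexity inequality for $\phi$, where $\sigma'(x,y)=\sigma(cx,cy)$ as required.

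The only real obstacle is justifying the subdifferential sum and chain rules invoked in parts 1 and 3. Both are classical for proper convex functions under mild qualification conditions, and can in any case be sidestepped by restating the conclusions in terms of pairs $(g_\psi,g_f)$ (respectively, vectors of the form $cg_\psi$) rather than arbitrary subgradients of the composed function, so no nontrivial analytic issue arises.
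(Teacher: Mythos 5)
Your proof is correct and follows essentially the same strategy as the paper. The only cosmetic difference is the direction of travel in parts 1 and 3: you start from a generic subgradient $g\in\partial(\psi+f)(x)$ (resp.\ $g\in\partial\phi(x)$) and decompose it via the sum rule (resp.\ chain rule), whereas the paper starts from $g\in\partial\psi(x)$, $b\in\partial f(x)$ and uses the elementary containment $g+b\in\partial(\psi+f)(x)$ (resp.\ $cg\in\partial\phi(x)$). Both arguments rely on $\partial(\psi+f)(x)=\partial\psi(x)+\partial f(x)$ and $\partial\phi(x)=c\,\partial\psi(cx)$ to cover \emph{all} subgradients as required by the paper's definition; you flag this qualification-condition issue explicitly, while the paper leaves it implicit. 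This is not a genuine gap in either proof given the paper's setting (finite convex functions on a Hilbert space), so the two arguments are substantively the same.
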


\begin{proof}
\begin{enumerate}
\item Let $x,y\in W$ and let $g\in \partial \psi(x)$ and $b\in \partial f(x)$. Then $g+b\in\partial (\psi+f)(x)$. By convexity and strongly convexity respectively we have:
\begin{align*}
\psi(y)&\ge \psi(x) + g\cdot(y-x) + \frac{\sigma(x,y)}{2}\|x-y\|^2\\
f(y)&\ge f(x)+b\cdot(y-x)
\end{align*}
so that adding these equations shows that $\psi+f$ is $\sigma$-strongly convex.

\item This follows immediately by multiplying the defining equation for strong convexity of $\psi$ by $c$.

\item Let $x,y\in W$ and let $g\in \partial \psi(cx)$. Then $cg\in\partial \phi(x)$.
\begin{align*}
    \psi(cy)&\ge \psi(cx) + g\cdot(cy-cx) + \frac{\sigma(cx,cy)}{2}\|cx-cy\|^2\\
    \phi(y)&\ge \phi(x)+cg\cdot(y-x)+\frac{\sigma(cx,cy)}{2}c^2\|x-y\|^2
\end{align*}
\end{enumerate}
\end{proof}
Note that for any linear function $f(w)=g\cdot w$, if $\psi$ is $\sigma$-strongly convex, then $\psi + f$ is also $\sigma$-strongly convex.

We show that the following lemma from \citep{mcmahan2014survey} about strongly-convex functions continues to hold under our more general definition. The proof of this lemma (and the next) are identical to the standard ones, but we include them here for completeness.
\begin{Lemma}\label{thm:strongconvextostability}
Suppose $A$ and $B$ are arbitrary convex functions such that $A+B$ is $\sigma$-strongly convex. Let $w_1=\argmin A$ and $w_2=\argmin A+B$ and let $g\in\partial B(w_1)$. Then
\begin{align*}
\|w_1-w_2\|\le \frac{\|g\|_\star}{\sigma(w_1,w_2)}
\end{align*}
\end{Lemma}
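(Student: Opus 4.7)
The plan is to follow the standard two-point first-order argument used for the ordinary strongly convex version of this lemma, verifying at each step that it remains valid when we replace the constant $\sigma$ by the function $\sigma(\cdot,\cdot)$. The key observation is that at the pair of specific points $(w_1,w_2)$ we are plugging into the strong convexity inequality, only the single value $\sigma(w_1,w_2)$ ever appears, so the inequality behaves formally just like the ordinary one.

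First I would set up the two optimality conditions: because $w_1=\argmin A$, we have $0\in \partial A(w_1)$; because $w_2=\argmin (A+B)$, we have $0\in \partial (A+B)(w_2)$. Combined with the hypothesis $g\in\partial B(w_1)$ and the sum rule for subgradients of convex functions, we also obtain $g\in\partial(A+B)(w_1)$.

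Next I would apply $\sigma$-strong convexity of $A+B$ at the two points in both directions:
\begin{align*}
(A+B)(w_1)&\ge (A+B)(w_2)+0\cdot(w_1-w_2)+\tfrac{\sigma(w_1,w_2)}{2}\|w_1-w_2\|^2,\\
(A+B)(w_2)&\ge (A+B)(w_1)+g\cdot(w_2-w_1)+\tfrac{\sigma(w_1,w_2)}{2}\|w_1-w_2\|^2.
\end{align*}
Adding these two inequalities causes the function values to cancel and leaves $0\ge g\cdot(w_2-w_1)+\sigma(w_1,w_2)\|w_1-w_2\|^2$, i.e.\ $\sigma(w_1,w_2)\|w_1-w_2\|^2\le g\cdot(w_1-w_2)$.

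Finally I would bound the right-hand side using the definition of the dual norm, $g\cdot(w_1-w_2)\le \|g\|_\star\|w_1-w_2\|$, and divide through by $\|w_1-w_2\|$ to obtain the claim (the case $w_1=w_2$ is trivial). There is really no obstacle here: the only subtlety is making sure the generalized definition still yields the additive cancellation, which it does because $\sigma(w_1,w_2)$ is symmetric in its arguments by the way Definition~3.1 uses it (in particular the special case $\sigma(w,z)=\min(\sigma(w),\sigma(z))$ is symmetric, and the general definition simply treats $\sigma(x,y)$ as a single number for each fixed pair).
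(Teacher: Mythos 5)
Your proof is correct and takes essentially the same approach as the paper: applying the strong-convexity inequality at both $w_1$ (with subgradient $g$) and $w_2$ (with subgradient $0$), adding to cancel the function values, and then using the dual-norm bound. The only cosmetic difference is that the paper phrases the first step via an auxiliary function $\hat A(w) = A(w)+B(w)-\langle g,w\rangle$ with $0\in\partial\hat A(w_1)$, whereas you directly observe $g\in\partial(A+B)(w_1)$; these produce identical inequalities.
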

\begin{proof}
Since $w_2\in \argmin A+B$, we have $0\in \partial (A+B)(w_2)$ and so by definition of strong convexity we have
\begin{align*}
\frac{\sigma(w_1,w_2)}{2}\|w_1-w_2\|^2 \le A(w_1)+B(w_1)-A(w_2)-B(w_2)
\end{align*}

Now let $g\in \partial B(w_1)$. Consider the function $\hat A(w) = A(w)+B(w)-\langle g,w\rangle$. Then we must have $0\in \partial \hat A(w_1)$ and so by strong-convexity again we have
\begin{align*}
\frac{\sigma(w_1,w_2)}{2}\|w_1-w_2\|^2 \le A(w_2)+B(w_2)-\langle g,w_2\rangle-A(w_1)-B(w_1)+\langle g,w_1\rangle
\end{align*}

Adding these two equations yields:
\begin{align*}
\sigma(w_1,w_2)\|w_1-w_2\|^2 \le \langle g,w_1-w_2\rangle\le \|g\|_\star \|w_1-w_2\|
\end{align*}
and so we obtain the desired statement.
\end{proof}

Finally, we have an analog of a standard way to check for strong-convexity:
\begin{Proposition}\label{thm:hessiancondition}
Suppose $\psi:W\to \R$ is twice-differentiable and $v^T\nabla^2\psi(x)v \ge \sigma(x)\|v\|^2$ for all $x$ and $v$ for some norm $\|\cdot\|$ and $\sigma:W\to \R$ where $\sigma(x+ t(y-x))\ge \min(\sigma(x),\sigma(y))$ for all $x,y\in W$ and $t\in[0,1]$. Then $\psi$ is $\sigma$-strongly convex with respect to the norm $\|\cdot\|$.
\end{Proposition}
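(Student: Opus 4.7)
The plan is to reduce the statement to the standard one-dimensional Taylor-expansion argument for strong convexity, which needs only a small modification to accommodate the $x$-dependent lower bound $\sigma(x)$. Concretely, I would fix arbitrary $x,y\in W$, set $v=y-x$, and consider the scalar function $\varphi(t)=\psi(x+tv)$ on $[0,1]$. Since $\psi$ is twice differentiable, $\varphi$ is twice differentiable with $\varphi'(t)=\nabla\psi(x+tv)\cdot v$ and $\varphi''(t)=v^T\nabla^2\psi(x+tv)v$.

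Next I would apply Taylor's theorem with the integral form of the remainder (or equivalently the Lagrange form) to write
\begin{align*}
\psi(y)-\psi(x)-\nabla\psi(x)\cdot(y-x)
&=\varphi(1)-\varphi(0)-\varphi'(0)\\
&=\int_0^1 (1-t)\,\varphi''(t)\,dt.
\end{align*}
Using the Hessian hypothesis pointwise, $\varphi''(t)\ge \sigma(x+tv)\|v\|^2$, and then invoking the stated property $\sigma(x+t(y-x))\ge \min(\sigma(x),\sigma(y))$ for $t\in[0,1]$, the integrand is bounded below by $(1-t)\min(\sigma(x),\sigma(y))\|y-x\|^2$. Integrating gives
\[
\psi(y)\ge \psi(x)+\nabla\psi(x)\cdot(y-x)+\frac{\min(\sigma(x),\sigma(y))}{2}\|y-x\|^2,
\]
which is exactly the defining inequality of $\sigma(\cdot,\cdot)$-strong convexity in the special case $\sigma(x,y)=\min(\sigma(x),\sigma(y))$, taking the subgradient $g=\nabla\psi(x)$ (which lies in $\partial\psi(x)$ since $\psi$ is differentiable and convex — convexity follows from $\sigma\ge 0$ forcing a nonnegative Hessian).

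The calculation itself is essentially routine; the only subtlety is to make explicit use of the hypothesis that $\sigma$ does not dip below $\min(\sigma(x),\sigma(y))$ on the segment joining $x$ and $y$, which is precisely what allows the pointwise Hessian bound to be pulled out of the integral as a single constant. If one instead had merely $v^T\nabla^2\psi(x)v\ge \sigma(x)\|v\|^2$ without this segment condition, the argument would only yield an integrated version $\tfrac{1}{2}\|v\|^2\int_0^1 2(1-t)\sigma(x+tv)\,dt$, which need not dominate $\min(\sigma(x),\sigma(y))$; so the hypothesis on $\sigma$ is exactly what is needed to close the argument.
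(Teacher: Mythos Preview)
Your proposal is correct and follows essentially the same approach as the paper: both reduce to a one-dimensional Taylor expansion along the segment from $x$ to $y$, bound the Hessian quadratic form below by $\sigma$ at each intermediate point, and then use the segment hypothesis on $\sigma$ to pull $\min(\sigma(x),\sigma(y))$ out of the integral. The only cosmetic difference is that the paper writes the remainder as a double integral $\int_0^1\int_0^t(\cdot)\,dk\,dt$, whereas you use the equivalent single-integral form $\int_0^1(1-t)\varphi''(t)\,dt$.
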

\begin{proof}
We integrate the derivative:
\begin{align*}
\psi(x)-\psi(y) &= \int_0^1 \frac{d}{dt}\psi(x+t(y-x))dt\\
&=\int_0^1 \nabla \psi(x+t(y-x))\cdot (y-x)dt\\
&= \nabla \psi(x)\cdot (y-x)\\
&\quad\quad\quad+\int_0^1\int_0^t(y-x)^T\nabla^2\psi(x+k(y-x))(y-x)dkdt\\
&\ge \nabla \psi(x)\cdot (y-x)+\|y-x\|^2\int_0^1\int_0^t \sigma(x+k(y-x))dkdt\\
&\ge \nabla \psi(x)\cdot (y-x)+\|y-x\|^2\int_0^1 t\min(\sigma(x),\sigma(y))dt\\
&=\nabla \psi(x)\cdot (y-x)+ \frac{\min(\sigma(x),\sigma(y))}{2}\|y-x\|^2
\end{align*}
\end{proof}
\section{Proof of Theorem \ref{thm:onedimconditions}}

First we prove a proposition that allows us to generate a strongly convex function easily:

\begin{Proposition}\label{thm:paramfreetostrong}
Suppose $\phi:\R\to \R$ is such that $\frac{\phi'(x)}{x}\ge \phi''(x)\ge 0$ and $\phi'''(x)\le 0$ for all $x\ge 0$. Let $W$ be a Hilbert Space and $\psi:W\to \R$ be given by $\psi(w)=\phi(\|w\|)$. Then $\psi$ is $\phi''(\|w\|)$-strongly convex with respect to $\|\cdot\|$.
\end{Proposition}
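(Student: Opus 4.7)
The plan is to reduce to Proposition \ref{thm:hessiancondition}, the Hessian criterion for $\sigma$-strong convexity, applied with $\sigma(w)=\phi''(\|w\|)$. Two things need checking: the pointwise Hessian lower bound $v^{T}\nabla^{2}\psi(w)v \ge \phi''(\|w\|)\|v\|^{2}$, and the along-segment property $\sigma(x+t(y-x)) \ge \min(\sigma(x),\sigma(y))$.

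For the Hessian, fix $w\neq 0$, set $r=\|w\|$, and write $\hat w=w/r$. Since $\nabla\|w\|=\hat w$ and $\nabla\hat w=(I-\hat w\hat w^{T})/r$ in a Hilbert space, differentiating $\nabla\psi(w)=\phi'(r)\hat w$ a second time yields the standard decomposition
\[
\nabla^{2}\psi(w) \;=\; \frac{\phi'(r)}{r}\bigl(I-\hat w\hat w^{T}\bigr)+\phi''(r)\,\hat w\hat w^{T},
\]
which expresses the familiar geometric fact that $\psi$ has radial curvature $\phi''(r)$ and tangential curvature $\phi'(r)/r$. For any $v$, set $\alpha=(v\cdot\hat w)^{2}/\|v\|^{2}\in[0,1]$ by Cauchy--Schwarz. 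Then
\[
v^{T}\nabla^{2}\psi(w)v \;=\; \|v\|^{2}\!\left[(1-\alpha)\frac{\phi'(r)}{r}+\alpha\,\phi''(r)\right] \;\ge\; \phi''(r)\|v\|^{2},
\]
where the last step uses the hypothesis $\phi'(r)/r\ge\phi''(r)\ge 0$ to see that this convex combination of two nonnegative numbers is minimized at $\alpha=1$. The along-segment property follows because $\phi'''\le 0$ makes $\phi''$ non-increasing on $[0,\infty)$, and convexity of the norm gives $\|x+t(y-x)\|\le\max(\|x\|,\|y\|)$, so $\phi''(\|x+t(y-x)\|)\ge\phi''(\max(\|x\|,\|y\|))=\min(\sigma(x),\sigma(y))$. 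Proposition \ref{thm:hessiancondition} then delivers $\sigma$-strong convexity on the open set $\{w\neq 0\}$; the origin is absorbed by a routine continuity argument in the subgradient-based definition of strong convexity (the defining inequality passes to limits).

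The conceptual heart of the argument, and the only nontrivial use of the hypotheses, is the inequality $\phi'(r)/r\ge\phi''(r)$: the Hessian of any radial function $\phi(\|\cdot\|)$ on a Hilbert space splits into precisely these two eigenvalue blocks, and the advertised strong-convexity modulus $\phi''(r)$ coincides with the smaller block exactly when this inequality holds. The remaining two hypotheses enter in purely routine ways, with $\phi''\ge 0$ ensuring $\sigma\ge 0$ and $\phi'''\le 0$ supplying the monotonicity needed for the along-segment condition.
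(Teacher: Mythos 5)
Your proof is correct and follows essentially the same route as the paper's: compute the Hessian of the radial function, observe it splits into a tangential block with eigenvalue $\phi'(r)/r$ and a radial block with eigenvalue $\phi''(r)$, use the hypothesis $\phi'(r)/r\ge\phi''(r)\ge 0$ to lower-bound by $\phi''(r)I$, then feed the monotonicity of $\phi''$ (from $\phi'''\le 0$) into Proposition~\ref{thm:hessiancondition}. Your write-up is marginally more explicit than the paper's in two places (the convex-combination step $\alpha\in[0,1]$, and the remark that the origin needs a limiting argument since $\psi$ is only differentiable away from $0$), but the substance is identical.
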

\begin{proof}
Let $x,y\in W$. We have
\begin{align*}
\nabla \psi(x) &= \phi'(\|x\|)\frac{x}{\|x\|}\\
\nabla^2 \psi(x) &= \left(\phi''(\|x\|)-\frac{\phi'(\|x\|)}{\|x\|}\right)\frac{xx^T}{\|x\|^2}+\frac{\phi'(\|x\|)}{\|x\|}I\\
&\succeq \phi''(\|x\|)I
\end{align*}
Where the last line follows since $\frac{\phi'(x)}{x}\ge \phi''(x)$ for all $x\ge 0$.
Since $\phi'''(x)\le 0$, $\phi''(x)$ is always decreasing for positive $x$ and so we have
\[
\phi''(\|x+t(y-x)\|)\ge \min(\phi''(\|x\|),\phi''(\|y\|))
\]
for all $t\in [0,1]$.
Therefore we can apply Proposition \ref{thm:hessiancondition} to conclude that $\psi$ is $\phi''(\|w\|)$-strongly convex.
\end{proof}

Now we prove Proposition \ref{thm:onedimconditions}, restated below:
\onedimconditions*
\begin{proof}

It's clear that $\psi(0)=0$ so the first condition for being an adaptive regularizer is satisfied.

Next we will show that $\frac{\phi'(x)}{x} \ge \phi''(x)$ so that we can apply Proposition \ref{thm:paramfreetostrong}. It suffices to show 
\begin{align*}
\phi'(x)- x\phi''(x)\ge 0
\end{align*}
Clearly this identity holds for $x=0$. Differentiating the right-hand-side of the equation, we have 
\begin{align*}
    \phi''(x)-x\phi'''(x)-\phi''(x) = -x\phi'''(x)\ge 0
\end{align*}
since $\phi'''(x)\le 0$ and $x\ge 0$. Thus $\phi'(x)-x\phi''(x)$ is non-decreasing and so must always be non-negative.

Therefore, by Proposition \ref{thm:paramfreetostrong}, $\psi$ is $(\phi''(\|\cdot\|),\|\cdot\|)$-strongly convex. Also, since $\phi'''(x)\le 0$, $\phi''(\|x\|)\le \phi''(\|y\|)$ when $\|x\|\ge \|y\|$ so that $\psi$ satisfies the second condition for being an adaptive regularizer.

Finally, observe that $\lim_{x\to\infty} \phi(x)\phi''(x)$ implies by definition that for any $C$ there exists a $B$ such that $\phi(x)\phi''(x)\ge C$ whenever $x\ge B$. Therefore we immediately see that $\psi(x)\phi''(\|x\|)\ge C$ for all $\|x\|\ge B$ so that the third condition is satified.
\end{proof}

\section{Proof of Theorem \ref{thm:parameterfreeregret}}

First we define new regularizers $\psi_t^+$ analagously to $\psi_t$ that we will use in conjunction with Theorem \ref{thm:ftrlmagic}:
\begin{Definition}\label{dfn:shadowetas}
Given a norm $\|\cdot\|$ and a sequence of subgradients $g_1,\dots,g_T$, define $L_t$ and $\frac{1}{\eta_t}$ as in Definition \ref{dfn:regularizers}, and define $L_0=L_1$. We define $\frac{1}{\eta_t^+}$ recursively by:
\begin{align*}
    \frac{1}{\eta_0^+}&=\frac{1}{\eta_0}\\
    \frac{1}{(\eta_t^+)^2}&=\max\left(\frac{1}{\eta_{t-1}^2}+2\|g_t\|_\star \min(\|g_t\|_\star, L_{t-1}),L_{t-1}\|g_{1:t}\|_\star\right)
\end{align*}

Further, given a $k\ge 1$ and a non-decreasing sequence of positive numbers $a_t$, define $\psi_t^+$ by:
\begin{align*}
    \psi^+_t(w)&=\frac{k}{\eta^+_t a_{t-1}}\psi(a_{t-1} w)\\
    w^+_{t+1} &= \argmin_{w\in W} \psi^+_t(w)+g_{1:t}\cdot w
\end{align*}
\end{Definition}

Throughout the following arguments we will assume $\eta_t$ and $\eta_t^+$ are the sequences defined in Definitions \ref{dfn:regularizers} and \ref{dfn:shadowetas}.



The next proposition establishes several identities that we will need in proving our bounds.

\begin{Proposition}\label{thm:etarates}
Suppose $\psi$ is a $(\sigma,\|\cdot\|)$-adaptive regularizer, and $g_1,\cdots,g_T$ be some sequence of subgradients. Then the following identities hold:

\begin{enumerate}

\item
\begin{align*}
2\|g_t\|_\star L_{t-1}\eta^+_t\ge \left(\frac{1}{\eta^+_t} -\frac{1}{\eta_{t-1}}\right) \ge \|g_t\|_\star\min(\|g_t\|_\star,L_{t-1})\eta^+_t
\end{align*}

\item
\begin{align*}
    \frac{1}{\eta_t} &\le\sqrt{2L_t(\|g\|_\star)_{1:t}}\\
    \frac{1}{\eta_t} &\le\sqrt{2(\|g\|_\star^2)_{1:t}+\Lm\max_{t'\le t}\|g_{1:t'}\|_\star}
\end{align*}

\item
\begin{align*}
\|w_t-w^+_{t+1}\|&\le \frac{\|g_t\|_\star\eta^+_{t}+\left(\frac{1}{\eta^+_t}-\frac{1}{\eta_{t-1}}\right)\frac{1}{L_{t-1}}}{ a_{t-1}k\sigma(a_{t-1}w_t,a_{t-1} w^+_{t+1})}\\
\end{align*}

\item Let $\hat \psi$ be such that $\hat \psi(a_{t-1} w) =\psi(a_{t-1} w)$ for $w\in W$ and $\hat \psi(a_{t-1} w) =\infty$ for $w\notin W$. There exists some subgradient of $\hat \psi$ at $a_{t-1}w_t$, which with mild abuse of notation we call $\nabla \psi(a_{t-1}w_t)$, such that:
\begin{align*}
    |\nabla \hat \psi (a_{t-1}w_t)\cdot (w_t-w^+_{t+1})|&\le 3\frac{\frac{\|g_t\|_\star}{L_{t-1}}}{a_{t-1}k^2\sigma(a_{t-1}w_t,a_{t-1}w^+_{t+1})}
\end{align*}
\item
\begin{align*}
g_t\cdot(w_t-w^+_{t+1})&\le \frac{\|g_t\|^2_\star\eta^+_{t}+\left(\frac{1}{\eta_{t-1}}-\frac{1}{\eta^+_t}\right)\frac{\|g_t\|_\star}{L_{t-1}}}{a_{t-1}k\sigma(a_{t-1}w_t,a_{t-1}w^+_{t+1})}
\end{align*}

\item
\begin{align*}
    \frac{1}{\eta^+_t} &\le \sqrt{2\Lm(\|g\|_\star)_{1:T-1}+2\Lm L_{t-1}}
\end{align*}
\end{enumerate}

\end{Proposition}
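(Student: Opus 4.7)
The plan is to dispatch the six identities in order of difficulty: first the algebraic bounds (Parts 1, 2, 6), then the FTRL stability bound (Part 3), and finally the two consequences (Parts 4, 5). Throughout, the ingredients are the recursive definitions of $\eta_t,\eta_t^+,a_t$ and the scaling/strong-convexity rules from Proposition \ref{thm:strongconvexfacts}.

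For Part 1, I would use the factorization $\frac{1}{(\eta_t^+)^2}-\frac{1}{\eta_{t-1}^2} = (\frac{1}{\eta_t^+}-\frac{1}{\eta_{t-1}})(\frac{1}{\eta_t^+}+\frac{1}{\eta_{t-1}})$. Since the $\max$ defining $\eta_t^+$ is non-decreasing, $\frac{1}{\eta_t^+}\ge\frac{1}{\eta_{t-1}}$ and the second factor lies between $\frac{1}{\eta_t^+}$ and $\frac{2}{\eta_t^+}$. The lower bound then falls out from the first branch of the $\max$, and the upper bound from noting that the second branch contributes at most $L_{t-1}\|g_t\|_\star$ after subtracting off $\frac{1}{\eta_{t-1}^2}\ge L_{t-1}\|g_{1:t-1}\|_\star$. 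Part 2 is a direct induction on $t$: the hypothesis $\frac{1}{\eta_{t-1}^2}\le 2L_{t-1}(\|g\|_\star)_{1:t-1}$ passes through because $L_{t-1}\le L_t$ and $2\|g_t\|_\star^2\le 2L_t\|g_t\|_\star$; the second bound is identical. Part 6 drops out by plugging Part 2 into the definition of $\eta_t^+$ and using $L_{t-1}\le\Lm$ together with $(\|g\|_\star)_{1:t-1}\le(\|g\|_\star)_{1:T-1}$.

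The key technical step is Part 3, which drives Parts 4 and 5. I would apply Lemma \ref{thm:strongconvextostability} with $A(w)=\psi_{t-1}(w)+g_{1:t-1}\cdot w$ and $B(w)=(\psi_t^+-\psi_{t-1})(w)+g_t\cdot w$, so $w_t=\argmin A$ and $w_{t+1}^+=\argmin(A+B)$. By the scaling rules of Proposition \ref{thm:strongconvexfacts}, $A+B=\psi_t^++g_{1:t}\cdot w$ is $\frac{ka_{t-1}}{\eta_t^+}\sigma(a_{t-1}\cdot,a_{t-1}\cdot)$-strongly convex. A subgradient of $B$ at $w_t$ is
\[
h=k\Bigl(\tfrac{1}{\eta_t^+}-\tfrac{1}{\eta_{t-1}}\Bigr)\nabla\psi(a_{t-1}w_t)+g_t.
\]
The main obstacle is controlling $\|\nabla\psi(a_{t-1}w_t)\|_\star$, since $\nabla\psi$ has no uniform bound. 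The trick is the first-order optimality condition for $w_t$: differentiating $\psi_{t-1}(w)+g_{1:t-1}\cdot w$ via the chain rule yields $\nabla\psi(a_{t-1}w_t)=-\frac{\eta_{t-1}}{k}g_{1:t-1}$, and therefore $\|\nabla\psi(a_{t-1}w_t)\|_\star\le\frac{\eta_{t-1}\|g_{1:t-1}\|_\star}{k}\le\frac{1}{kL_{t-1}\eta_{t-1}}$ by definition of $\eta_{t-1}$. The triangle inequality then gives $\|h\|_\star\le\|g_t\|_\star+(\frac{1}{\eta_t^+}-\frac{1}{\eta_{t-1}})\frac{1}{L_{t-1}\eta_{t-1}}$, and the stability lemma combined with $\eta_t^+\le\eta_{t-1}$ produces Part 3.

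Part 5 is then just Part 3 together with Cauchy--Schwarz, $g_t\cdot(w_t-w_{t+1}^+)\le\|g_t\|_\star\|w_t-w_{t+1}^+\|$. For Part 4, the identical optimality identity provides a subgradient $\nabla\hat\psi(a_{t-1}w_t)=-\frac{\eta_{t-1}}{k}g_{1:t-1}$ of $\hat\psi$ at $a_{t-1}w_t$ (the normal-cone contribution can be absorbed into the subdifferential), satisfying $\|\nabla\hat\psi(a_{t-1}w_t)\|_\star\le\frac{1}{kL_{t-1}\eta_{t-1}}$. Pairing this with Part 3 via Cauchy--Schwarz produces two summands; invoking Part 1 to replace $(\frac{1}{\eta_t^+}-\frac{1}{\eta_{t-1}})$ by at most $2\|g_t\|_\star L_{t-1}\eta_t^+$ and using $\eta_t^+\le\eta_{t-1}$ collapses both summands into the claimed $3\|g_t\|_\star/L_{t-1}$ factor, divided by $a_{t-1}k^2\sigma(a_{t-1}w_t,a_{t-1}w_{t+1}^+)$.
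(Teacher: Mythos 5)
Your proposal follows the same route as the paper: Parts 1, 2, and 6 by the same algebraic manipulations and induction, Part 3 via Lemma \ref{thm:strongconvextostability} applied to the decomposition $A=\psi_{t-1}+g_{1:t-1}\cdot w$, $B=(\psi_t^+-\psi_{t-1})+g_t\cdot w$ with the first-order-optimality identity $\nabla\psi(a_{t-1}w_t)=-\tfrac{\eta_{t-1}}{k}g_{1:t-1}$ and the bound $\eta_{t-1}^2\|g_{1:t-1}\|_\star\le 1/L_{t-1}$, and then Parts 4 and 5 as H\"older-type consequences of Part 3 combined with Part 1. The only cosmetic differences are that the paper does the stability argument after the substitution $v=a_{t-1}w$ (so the indicator $I_{a_{t-1}W}$ and hence $\hat\psi$ appear already in Part 3, not only in Part 4), and your Cauchy--Schwarz step for Part 5 actually recovers the sign that the paper's displayed statement has typo'd.
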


\begin{proof}
Let $\hat \psi$ be such that $\hat \psi(a_{t-1} w) =\psi(a_{t-1} w)$ for $w\in W$ and $\hat \psi(a_{t-1} w) =\infty$ for $w\notin W$. Then we can write $w_t=\argmin_{w\in W}\frac{k}{a_{t-1}\eta_{t-1}}\psi(a_{t-1}w)+g_{1:t-1}\cdot w = \argmin \frac{k}{a_{t-1}\eta_{t-1}}\hat \psi(a_{t-1}w)+g_{1:t-1}$. From this it follws that there is some subgradient of $\hat \psi$ at $a_{t-1}w_t$, which we refer to (by mild abuse of notation) as $\nabla \hat \psi(a_{t-1}w_t)$ such that
\begin{align*}
\nabla \hat \psi(a_{t-1}w_t) &= -\frac{\eta_{t-1}g_{1:t-1}}{k}\\
\end{align*}
Note that we must appeal to a subgradient rather than the actual gradient in order to encompass the case that $a_{t-1}w_t$ is on the boundary of $W$.

Next, observe that
\begin{align*}
    \eta^+_t\eta_{t-1}\|g_{1:t-1}\|_\star &\le (\eta_{t-1})^2\|g_{1:t-1}\|_\star\le \frac{1}{L_{t-1}}
\end{align*}

Now we are ready to prove the various parts of the Proposition.

\begin{enumerate}
\item
By definition of $\eta_{t-1}$ and $\eta^+_t$ we have
\begin{align*}
    \frac{1}{(\eta^+_t)^2}-\frac{1}{(\eta_{t-1})^2}&\ge 2\|g_t\|_\star\min(\|g_t\|_\star,L_{t-1})\\
    \left(\frac{1}{\eta^+_t}-\frac{1}{\eta_{t-1}}\right)\left(\frac{1}{\eta^+_t}+\frac{1}{\eta_{t-1}}\right) &\ge 2\|g_t\|_\star\min(\|g_t\|_\star,L_{t-1})\\
    \left(\frac{1}{\eta^+_t} -\frac{1}{\eta_{t-1}}\right)\left(1+\frac{\eta^+_t}{\eta_{t-1}}\right) &\ge 2\|g_t\|_\star\min(\|g_t\|_\star,L_{t-1})\eta^+_t\\
    \frac{1}{\eta^+_t} -\frac{1}{\eta_{t-1}}& \ge \|g_t\|_\star\min(\|g_t\|_\star,L_{t-1}) \eta^+_t
\end{align*}
where in the last line we used the fact that $\eta^+_t\le \eta_{t-1}$ to conclude that $1+\frac{\eta^+_t}{\eta_{t-1}}\le 2$.

For the other direction, we have two cases:
\begin{enumerate}
\item[1.] $\frac{1}{(\eta^+_t)^2} =\frac{1}{(\eta_{t-1})^2} + 2\|g_t\|_\star\min(\|g_t\|_\star,L_{t-1})$.
\item[2.] $\frac{1}{(\eta^+_t)^2} = L_{t-1}\|g_{1:t}\|_\star$.
\end{enumerate}

\noindent\textbf{Case 1 $\frac{1}{(\eta^+_t)^2} =\frac{1}{(\eta_{t-1})^2} + 2\|g_t\|_\star\min(\|g_t\|_\star,L_{t-1})$:}

In this case we have

\begin{align*}
    \frac{1}{(\eta^+_t)^2}-\frac{1}{(\eta_{t-1})^2}&= 2\|g_t\|_\star\min(\|g_t\|_\star,L_{t-1})\\
    \left(\frac{1}{\eta^+_t}-\frac{1}{\eta_{t-1}}\right)\left(\frac{1}{\eta^+_t}+\frac{1}{\eta_{t-1}}\right) &= 2\|g_t\|_\star\min(\|g_t\|_\star,L_{t-1})\\
    \left(\frac{1}{\eta^+_t} -\frac{1}{\eta_{t-1}}\right)\left(1+\frac{\eta^+_t}{\eta_{t-1}}\right) &= 2\|g_t\|_\star\min(\|g_t\|_\star,L_{t-1})\eta^+_t\\
    \frac{1}{\eta^+_t} -\frac{1}{\eta_{t-1}}& \le 2\|g_t\|_\star\min(\|g_t\|_\star,L_{t-1}) \eta^+_t
\end{align*}
where in the last line we used the fact that $1+\frac{\eta^+_t}{\eta_{t-1}}\ge 1$.

\noindent\textbf{Case 2 $\frac{1}{(\eta^+_t)^2} = L_{t-1}\|g_{1:t}\|_\star$:}

\begin{align*}
    \frac{1}{(\eta^+_t)^2}-\frac{1}{(\eta_{t-1})^2} &\le L_{t-1}\|g_{1:t}\|_\star-L_{t-1}\|g_{1:t-1}\|_\star\\
    &\le L_{t-1}\|g_t\|_\star\le L_{t-1}\|g_t\|_\star
\end{align*}

Now we follow the exact same argument as in Case 1 to show $\frac{1}{\eta^+_t} -\frac{1}{\eta_{t-1}} \le L_{t-1}\|g_t\|_\star \eta^+_t$, which proves the desired result.

\item
We proceed by induction for both claims. The statements are clear for $\frac{1}{\eta_1} = \sqrt{2}\|g_1\|_\star$. Suppose 
\begin{align*}
    \frac{1}{\eta_t} &\le \sqrt{2L_t(\|g\|_\star)_{1:t}}\\
    \frac{1}{\eta_t} &\le \sqrt{2(\|g\|_\star^2)_{1:t}+\Lm\max_{t'\le t}\|g_{1:t'}\|_\star}
\end{align*}
Then observe that $\frac{1}{\eta_t^2}+2\|g_{t+1}\|_\star^2\le 2L_{t+1}(\|g\|_\star)_{1:t+1}$ by the induction hypothesis, and $L_{t+1}\|g_{1:t+1}\|_\star \le 2L_{t+1}(\|g\|_\star)_{1:t+1}$. Therefore $\frac{1}{\eta_{t+1}}\le \sqrt{2L_{t+1}(\|g\|_\star)_{1:t+1}}$, proving the first claim.

The induction step for the second claim follows from the observations:
\begin{align*}
    2(\|g\|_\star^2)_{1:t+1}+\Lm\max_{t'\le t+1}\|g_{1:t'}\|_\star&\ge 2(\|g\|_\star^2)_{1:t}+\Lm\max_{t'\le t}\|g_{1:t'}\|_\star+2\|g_{t+1}\|_\star^2\\
    2(\|g\|_\star^2)_{1:t+1}+\Lm\max_{t'\le t+1}\|g_{1:t'}\|_\star&\ge L_t \|g_{1:t+1}\|_\star
\end{align*}
so that $\frac{1}{\eta_{t+1}}\le \sqrt{2(\|g\|_\star^2)_{1:t+1}+\Lm\max_{t'\le t+1}\|g_{1:t'}\|_\star}$ as desired.
\item
Let $I_{a_{t-1}W}(w)$ be the indicator of the set $a_{t-1}W$ - $I_{a_{t-1}W}(a_{t-1}w)=0$ if $w\in W$ and $\infty$ otherwise.
Observe that $\hat \psi(w) = \psi(w) + I_{a_{t-1}W}(w)$. Observe that $\hat \psi(w) = I_{a_{t-1}W}(w)+\psi(w)$.

Now the third equation follows from Lemma \ref{thm:strongconvextostability}, setting $A(w)=I_{a_{t-1}W}(w) + \frac{k}{a_{t-1}\eta_{t-1}}\psi(w)+\frac{g_{1:t-1}}{a_{t-1}}\cdot w$ and $B(w) =I_{a_{t-1}W}(w) + \frac{g_t}{a_{t-1}}\cdot w+\left(\frac{1}{a_{t-1}\eta^+_t}-\frac{k}{a_{t-1}\eta_{t-1}}\right)\psi(w)$. Then by inspection of the definitions of $w_t$ and $w^+_{t+1}$, we have $a_{t-1}w_t = \argmin A$ and $a_{t-1}w^+_{t+1} = \argmin A+B$. Further, by Corollary \ref{thm:strongconvexfacts}, $A+B$ is $\frac{k\sigma}{a_{t-1}\eta^+_t}$-strongly convex. We can re-write $A$ and $B$ in terms of $\hat \psi$ by simply replacing the $\psi$s with $\hat \psi$s and removing the $I_{a_{t-1}W}$s. Now we use the facts noted at the beginning of the proof:
\begin{align*}
    \nabla \hat \psi(a_{t-1}w_t) &= -\frac{\eta_{t-1}g_{1:t-1}}{k}\\
    \eta^+_t\eta_{t-1}\|g_{1:t-1}\|&\le\frac{1}{L_{t-1}}
\end{align*}
Applying these identities with Lemma \ref{thm:strongconvextostability} we have:
\begin{align*}
    \|a_{t-1}w_t-a_{t-1}w^+_{t+1}\|&\le a_{t-1}\eta^+_{t}\frac{\|\frac{g_t}{a_{t-1}}+\left(\frac{k}{a_{t-1}\eta^+_t}-\frac{k}{a_{t-1}\eta_{t-1}}\right)\nabla \hat\psi(a_{t-1}w_t)\|_\star}{k\sigma(a_{t-1}w_t,a_{t-1}w^+_{t+1})}\\
    &\le \frac{\eta^+_{t}\|g_t\|_\star}{\sigma(a_{t-1}w_t,a_{t-1}w^+_{t+1})}+\frac{\eta^+_t\left(\frac{k}{\eta^+_t}-\frac{k}{\eta_{t-1}}\right) \frac{\eta_{t-1}\|g_{1:t-1}\|_\star}{k}}{k\sigma(a_{t-1}w_t,a_{t-1}w^+_{t+1})}\\
    &\le \frac{\eta^+_{t}\|g_t\|_\star}{k\sigma(a_{t-1}w_t,a_{t-1}w^+_{t+1})}+\frac{\left(\frac{1}{\eta^+_t}-\frac{1}{\eta_{t-1}}\right) \frac{1}{L_{t-1}}}{k\sigma(a_{t-1}w_t,a_{t-1}w^+_{t+1})}\\
\end{align*}
And we divide by $a_{t-1}$ to conclude the desired identity.

\item 

Using the already-proved parts 1 and 3 of this Proposition and definition of dual norm, we have
\begin{align*}
    |\nabla \hat \psi (a_{t-1} w_t)\cdot(w_t-w^+_{t+1})|&\le \|\nabla \psi( a_{t-1} w_t)\|_\star\|w_t-w^+_{t+1}\|\\
    &\le  \frac{\eta_{t-1}\|g_{1:t-1}\|_\star}{k}\frac{\eta^+_{t}\|g_t\|_\star}{a_{t-1}k\sigma(a_{t-1}w_t,a_{t-1}w^+_{t+1})}\\
    &\quad\quad+\frac{\eta_{t-1}\|g_{1:t-1}\|_\star}{k}\frac{\left(\frac{1}{\eta^+_t}-\frac{1}{\eta_{t-1}}\right) \frac{1}{L_{t-1}}}{a_{t-1}k\sigma(a_{t-1}w_t,a_{t-1}w^+_{t+1})}\\
    &\le \frac{\frac{\|g_t\|_\star}{L_{t-1}}}{a_{t-1}k^2\sigma(a_{t-1}w_t,a_{t-1}w^+_{t+1})}+\frac{\eta^+_t \eta_{t-1}\|g_{1:t-1}\|_\star 2L_{t-1}\|g_t\|\frac{1}{L_{t-1}}}{a_{t-1}k^2\sigma(a_{t-1}w_t,a_{t-1}w^+_{t+1})}\\
    &\le  \frac{\frac{\|g_t\|_\star}{L_{t-1}}}{a_{t-1}k^2\sigma(a_{t-1}w_t,a_{t-1}w^+_{t+1})}+\frac{\frac{1}{L_{t-1}^2} 2L_{t-1}\|g_t\|_\star}{a_{t-1}k^2\sigma(a_{t-1}w_t,a_{t-1}w^+_{t+1})}\\
    &\le 3\frac{\frac{\|g_t\|_\star}{L_{t-1}}}{a_{t-1}k^2\sigma(a_{t-1}w_t,a_{t-1}w^+_{t+1})}
\end{align*}
\item
The fifth part of the Proposition follows directly from part 3 by the definition of dual norm.

\item
By part 2, we have
\begin{align*}
    \frac{1}{\eta_{t-1}} & \le \sqrt{2\Lm(\|g\|_\star)_{1:t-1}}
\end{align*}
We consider the two cases:

\noindent\textbf{Case 1 $\frac{1}{(\eta^+_t)^2} = \frac{1}{(\eta_{t-1})^2}+2\|g_t\|_\star\min(\|g_t\|_\star,L_{t-1})$:}
In this case we have
\begin{align*}
    \frac{1}{(\eta^+_t)^2} &\le 2\Lm(\|g\|_\star)_{1:t-1} + 2\|g_t\|_\star\min(\|g_t\|_\star,L_{t-1})\\
    &\le 2\Lm(\|g\|_\star)_{1:t-1} + + 2\Lm L_{t-1}
\end{align*}

\noindent\textbf{Case 2 $\frac{1}{(\eta^+_t)^2}=L_{t-1}\|g_{1:t}\|_\star$:}
\begin{align*}
    \frac{1}{(\eta^+_t)^2} &\le L_{t-1}\|g_{1:t}\|_\star\\
    &\le L_{t-1}\|g_{1:t-1}\|+L_{t-1}\|g_t\|\\
    &\le \Lm(\|g\|_\star)_{1:t-1} + \Lm L_{t-1}
\end{align*}

\end{enumerate}
\end{proof}

\begin{Lemma}\label{thm:highwtoratiobound}
Suppose $\psi$ a $(\sigma,\|\cdot\|)$-adaptive regularizer and $g_1,\cdots,g_T$ is some sequence of subgradients. We use the terminology of Definition \ref{dfn:regularizers}. Recall that we define $h(w)=\psi(w)\sigma(w)$ and $h^{-1}(x) = \max_{h(w)\le x} \|w\|$.
Suppose either of the follow holds:
\begin{enumerate}
\item $\|w^+_{t+1}\| \ge \frac{h^{-1}\left(2\frac{L_t}{k^2L_{t-1}}\right)}{a_{t-1}}$ and $\|w^+_{t+1}\|\ge \|w_t\|$.
\item $\|w_t\|\ge \frac{h^{-1}\left(5\frac{L_t}{k^2L_{t-1}}\right)}{a_{t-1}}$ and $\|w_t\|\ge \|w^+_{t+1}\|$.
\end{enumerate}
Then 
\begin{align*}
\psi_{t-1}(w^+_{t+1})-\psi^+_{t}(w^+_{t+1})+g_t(w_t-w^+_{t+1})\le 0
\end{align*}
\end{Lemma}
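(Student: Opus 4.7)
The plan is to reduce the claim to a single clean inequality and then verify it in each case. By the definitions of $\psi_{t-1}$ and $\psi^+_t$, the quantity to be controlled equals
\[
-\frac{k\Delta}{a_{t-1}}\psi(a_{t-1}w^+_{t+1}) \,+\, g_t\cdot(w_t-w^+_{t+1}),
\]
where $\Delta := 1/\eta^+_t - 1/\eta_{t-1} \ge 0$. Part 5 of Proposition~\ref{thm:etarates} bounds $g_t\cdot(w_t-w^+_{t+1})$ by $(\|g_t\|_\star^2\eta^+_t + \Delta\|g_t\|_\star/L_{t-1})/(a_{t-1}k\,\sigma(a_{t-1}w_t,a_{t-1}w^+_{t+1}))$. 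Part 1 of the same Proposition (case-splitting on whether $\|g_t\|_\star \le L_{t-1}$) gives $\|g_t\|_\star^2\eta^+_t \le \Delta L_t/L_{t-1}$, and the trivial bound $\|g_t\|_\star \le L_t$ gives $\|g_t\|_\star/L_{t-1} \le L_t/L_{t-1}$, so the numerator is at most $2\Delta L_t/L_{t-1}$. Cancelling the common factor $\Delta$ reduces the claim to
\[
\psi(a_{t-1}w^+_{t+1})\,\sigma(a_{t-1}w_t,a_{t-1}w^+_{t+1}) \,\ge\, \frac{2L_t}{k^2 L_{t-1}}.
\]

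Case 1 is immediate: the norm hypothesis $\|w^+_{t+1}\| \ge \|w_t\|$ and monotonicity of $\sigma$ in norm force $\sigma(a_{t-1}w_t,a_{t-1}w^+_{t+1}) = \sigma(a_{t-1}w^+_{t+1})$, so the target becomes $h(a_{t-1}w^+_{t+1}) \ge 2L_t/(k^2L_{t-1})$. This follows from $\|a_{t-1}w^+_{t+1}\| \ge h^{-1}(2L_t/(k^2L_{t-1}))$ and the defining property that any point of norm exceeding $h^{-1}(x)$ must have $h$-value exceeding $x$.

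Case 2 is the main obstacle: now $\sigma(a_{t-1}w_t,a_{t-1}w^+_{t+1}) = \sigma(a_{t-1}w_t)$, but $\psi(a_{t-1}w^+_{t+1})$ is evaluated at the \emph{smaller} point, so we need to argue it is not much smaller than $\psi(a_{t-1}w_t)$. If already $\psi(a_{t-1}w^+_{t+1}) \ge \psi(a_{t-1}w_t)$, then $\psi(a_{t-1}w^+_{t+1})\sigma(a_{t-1}w_t) \ge h(a_{t-1}w_t) \ge 5L_t/(k^2L_{t-1})$ and we are done. Otherwise convexity of the extended regularizer $\hat\psi$ gives $\psi(a_{t-1}w_t)-\psi(a_{t-1}w^+_{t+1}) \le a_{t-1}\nabla\hat\psi(a_{t-1}w_t)\cdot(w_t-w^+_{t+1})$, and Part 4 of Proposition~\ref{thm:etarates} bounds the right side by $3(L_t/L_{t-1})/(k^2\sigma(a_{t-1}w_t))$. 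Substituting the rearranged hypothesis $\sigma(a_{t-1}w_t) \ge (5L_t/(k^2L_{t-1}))/\psi(a_{t-1}w_t)$ collapses this to $\psi(a_{t-1}w_t)-\psi(a_{t-1}w^+_{t+1}) \le \tfrac{3}{5}\psi(a_{t-1}w_t)$, hence $\psi(a_{t-1}w^+_{t+1}) \ge \tfrac{2}{5}\psi(a_{t-1}w_t)$. Multiplying by $\sigma(a_{t-1}w_t)$ and invoking $h(a_{t-1}w_t) \ge 5L_t/(k^2L_{t-1})$ once more yields the required $2L_t/(k^2L_{t-1})$. The numerical constants $2$ and $5$ in the two cases are calibrated precisely so this pincer closes.
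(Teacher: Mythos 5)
Your proof is correct and follows essentially the same path as the paper's: both invoke Proposition~\ref{thm:etarates} (parts 1, 4, 5) and, in Case~2, the convexity inequality for $\hat\psi$ to control $\psi(a_{t-1}w^+_{t+1})$ in terms of $\psi(a_{t-1}w_t)$. Your cancellation of $\Delta=1/\eta^+_t-1/\eta_{t-1}$ up front, reducing everything to the dimensionless inequality $\psi(a_{t-1}w^+_{t+1})\,\sigma(a_{t-1}w_t,a_{t-1}w^+_{t+1})\ge 2L_t/(k^2L_{t-1})$, is a cleaner way to organize what the paper does by carrying $\Delta$ through the inequality chain, but it is the same argument (and your preliminary sub-case $\psi(a_{t-1}w^+_{t+1})\ge\psi(a_{t-1}w_t)$ in Case~2 is subsumed by the convexity branch, so it could be dropped).
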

\begin{proof}

As in Proposition \ref{thm:etarates}, we use $\nabla \psi(x)$ to simply mean some particular subgradient of $\psi$ at $x$.

\noindent\textbf{Case 1: $\|w^+_{t+1}\| \ge \frac{h^{-1}\left(2\frac{L_t}{k^2L_{t-1}}\right)}{a_{t-1}}$ and $\|w^+_{t+1}\|\ge \|w_t\|$:}

By definition of adaptive regularizer (part 2), we must have $\sigma(a_{t-1}w^+_{t+1})\le \sigma(a_{t-1}w_t)$ since $\|w^+_{t+1}\|\ge \|w_t\|$. Therefore $\sigma(a_{t-1}w^+_{t+1},a_{t-1}w_t)=\sigma(a_{t-1}w^+_{t+1})$.

By definition of $h$, when $\|w^+_{t+1}\|\ge \frac{h^{-1}\left(2 \frac{L_t}{k^2L_{t-1}}\right)}{a_{t-1}}$ we can apply Proposition \ref{thm:etarates} (parts 1 and 5) to obtain
\begin{align*}
    \psi(a_{t-1}w^+_{t+1})\sigma(a_{t-1}w^+_{t+1})& \ge 2\frac{L_t}{k^2L_{t-1}}\\
    \left(\frac{1}{a_{t-1}\eta^+_t} -\frac{1}{a_{t-1}\eta_{t-1}}\right)\psi(a_{t-1}w^+_{t+1})&\ge\frac{ \left(\frac{1}{a_{t-1}\eta^+_t} -\frac{1}{a_{t-1}\eta_{t-1}}\right)2\frac{L_t}{L_{t-1}}}{k^2\sigma(a_{t-1}w_t,a_{t-1}w^+_{t+1})}\\
    \left(\frac{k}{a_{t-1}\eta^+_t} -\frac{k}{a_{t-1}\eta_{t-1}}\right)\psi(a_{t-1}w^+_{t+1})&\ge \frac{\left(\frac{1}{\eta^+_t} -\frac{1}{\eta_{t-1}}\right)}{a_{t-1}k\sigma(a_{t-1}w_t,a_{t-1}w^+_{t+1})}2\frac{L_t}{L_{t-1}}\\
    \psi^+_t(w^+_{t+1})-\psi_{t-1}(w^+_{t+1}) &\ge \frac{\|g_t\|_\star\min(\|g_t\|_\star,L_{t-1})\eta^+_t\frac{L_t}{L_{t-1}}+\left(\frac{1}{\eta^+_t}
     -\frac{1}{\eta_{t-1}}\right)\frac{L_t}{L_{t-1}}}{a_{t-1}k\sigma(a_{t-1}w_t,a_{t-1}w^+_{t+1})}\\
     &\ge \frac{\|g_t\|^2_\star\eta^+_t+\left(\frac{1}{\eta^+_t}
     -\frac{1}{\eta_{t-1}}\right)\frac{\|g_t\|_\star}{L_{t-1}}}{a_{t-1}k\sigma(a_{t-1}w_t,a_{t-1}w^+_{t+1})}\\
    &\ge g_t\cdot(w_t-w^+_{t+1})
\end{align*}

We remark that in the calculations above, we showed 
\begin{align*}
    \frac{\left(\frac{1}{\eta^+_t} -\frac{1}{\eta_{t-1}}\right)2\frac{L_t}{L_{t-1}}}{a_{t-1}\sigma(a_{t-1}kw_t,a_{t-1}w^+_{t+1})}\ge g_t(w_t-w_{t+1}^+)
\end{align*}
which we will re-use in Case 2.

\noindent\textbf{Case 2 $\|w_t\|\ge \frac{h^{-1}\left(5\frac{\|g_t\|_\star}{k^2L_{t-1}}\right)}{a_{t-1}}$, and $\|w_t\|\ge \|w^+_{t+1}\|$:}

Again, by definition of adaptive regularizer (part 2), we must have $\sigma(a_{t-1}w^+_{t+1})\ge \sigma(a_{t-1}w_t)$ since $\|w^+_{t+1}\|\le \|w_t\|$. Therefore $\sigma(a_{t-1}w^+_{t+1},a_{t-1}w_t)=\sigma(a_{t-1}w_t)$.
Let $\hat \psi$ be as in Proposition \ref{thm:etarates} part 4. Oberve that $w_{t+1}^+$ and $w_t$ are both in $W$, so that we have $\psi(a_{t-1} w^+_{t+1}) = \hat \psi(a_{t-1} w^+_{t+1})$ and $\psi(a_{t-1} w_t) = \hat \psi(a_{t-1} w_t)$. Then we have:
\begin{align*}
\psi^+_t(w^+_{t+1})-\psi_{t-1}(w^+_{t+1})&= \left(\frac{k}{a_{t-1}\eta^+_t} -\frac{k}{a_{t-1}\eta_{t-1}}\right) \psi(a_{t-1}w^+_{t+1})\\
&= \left(\frac{k}{a_{t-1}\eta^+_t} -\frac{k}{a_{t-1}\eta_{t-1}}\right) \hat \psi(a_{t-1}w^+_{t+1})\\
&\ge \left(\frac{k}{a_{t-1}\eta^+_t} -\frac{k}{a_{t-1}\eta_{t-1}}\right) \left(\hat \psi(a_{t-1}w_{t}) - \left|a_{t-1}\nabla \hat \psi(a_{t-1}w_t)\cdot(w^+_{t+1}-w_t)\right|\right)\\
&\ge \left(\frac{k}{\eta^+_t} -\frac{k}{\eta_{t-1}}\right) \left(\frac{\psi(a_{t-1}w_t)}{a_{t-1}} -3\frac{\frac{\|g_t\|_\star}{L_{t-1}}}{a_{t-1}k^2\sigma(a_{t-1}w_t,a_{t-1}w^+_{t+1})}\right)\\
&\ge \left(\frac{k}{\eta^+_t} -\frac{k}{\eta_{t-1}}\right) \left(\frac{\psi(a_{t-1}w_t)}{a_{t-1}} -3\frac{\frac{L_t}{L_{t-1}}}{a_{t-1}k^2\sigma(a_{t-1}w_t,a_{t-1}w^+_{t+1})}\right)]
\end{align*}

Now by definition of $h$, when $\|w_t\|\ge \frac{h^{-1}(5\frac{L_t}{k^2L_{t-1}})}{a_{t-1}}$ we have
\begin{align*}
    \psi(a_{t-1}w_{t})\sigma(a_{t-1}w_{t})& \ge 5\frac{L_t}{k^2L_{t-1}}\\
    \left(\frac{k}{a_{t-1}\eta^+_t} -\frac{k}{a_{t-1}\eta_{t-1}}\right)\psi(a_{t-1}w_{t})&\ge \frac{\left(\frac{k}{\eta^+_t} -\frac{k}{\eta_{t-1}}\right)5\frac{L_t}{L_{t-1}}}{a_{t-1}k^2\sigma(a_{t-1}w_t,a_{t-1}w^+_{t+1})}\\
    \left(\frac{k}{\eta^+_t} -\frac{k}{\eta_{t-1}}\right)\left(\frac{\psi(a_{t-1}w_t)}{a_{t-1}}-3\frac{\frac{L_t}{L_{t-1}}}{a_{t-1}k^2\sigma(a_{t-1}w_t,a_{t-1}w^+_{t+1})}\right) &\ge \frac{\left(\frac{1}{\eta^+_t} -\frac{1}{\eta_{t-1}}\right)2\frac{L_t}{L_{t-1}}}{a_{t-1}k\sigma(a_{t-1}w_t,a_{t-1}w^+_{t+1})}\\
    \psi^+_t(w^+_{t+1})-\psi_{t-1}(w^+_{t+1})&\ge g_t\cdot(w_t-w^+_{t+1})
\end{align*}

\end{proof}

The next theorem is a general fact about adaptive regularizers that is useful for controlling $\psi^+_t-\psi_t$:
\begin{Proposition}\label{thm:rescaling}
Suppose $\psi:W\to \R$ is a $(\sigma,\|\cdot\|)$-adaptive regularizer. Then $\frac{\psi(aw)}{a}$ is an increasing function of $a$ for all $a>0$ for all $w\in W$.
\end{Proposition}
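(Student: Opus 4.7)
The plan is to prove this as a direct consequence of convexity together with the normalization $\psi(0)=0$ (condition 1 of an adaptive regularizer); none of the stronger adaptive-regularizer properties ($\sigma$-strong convexity or the growth condition $\psi\sigma\to\infty$) are needed.

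First I would fix $w\in W$ and pick $0<a<b$. The key observation is that $aw$ is a convex combination of $bw$ and $0$: explicitly,
\[
aw = \frac{a}{b}(bw) + \left(1-\frac{a}{b}\right)\cdot 0.
\]
Since $W$ is convex and contains $0$ (and in all applications in the paper is the entire Hilbert space, so $bw\in W$), this convex combination lies in $W$.

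Next I would apply convexity of $\psi$ to this combination and use $\psi(0)=0$ to obtain
\[
\psi(aw)\le \frac{a}{b}\,\psi(bw)+\left(1-\frac{a}{b}\right)\psi(0)=\frac{a}{b}\,\psi(bw).
\]
Dividing both sides by $a>0$ yields $\psi(aw)/a\le \psi(bw)/b$, which is the claimed monotonicity.

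There is no real obstacle here: the whole statement is a one-line consequence of the classical fact that for any convex function vanishing at the origin, the difference quotient $\psi(aw)/a=[\psi(aw)-\psi(0)]/(a-0)$ is non-decreasing in $a$. The only subtlety worth flagging is the domain issue (one needs $aw\in W$ for all $a>0$ of interest), which is automatic in the Hilbert-space setting where the results are applied.
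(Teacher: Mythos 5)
Your proof is correct. The paper proves the same fact by differentiating $\psi(aw)/a$ with respect to $a$ and then invoking the subgradient (first-order convexity) inequality at the single point $aw$ together with $\psi(0)=0$ to show the derivative is nonnegative: $\nabla\psi(aw)\cdot aw \ge \psi(aw)-\psi(0)=\psi(aw)$. You instead apply convexity directly as Jensen's inequality on the convex combination $aw = \tfrac{a}{b}(bw) + (1-\tfrac{a}{b})\cdot 0$ and divide by $a$. These are two standard phrasings of the same classical monotone-difference-quotient fact for convex functions vanishing at the origin. Your version is marginally more elementary in that it does not use differentiability (whereas the paper's definition of adaptive regularizer does assume $\psi$ differentiable, so nothing is lost either way), and it more transparently isolates that only convexity and $\psi(0)=0$ are used. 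Your remark about needing $aw, bw \in W$ is a fair observation; the paper's calculus argument has the analogous implicit requirement that the segment $\{aw : a>0\}$ stays in $W$, which holds in the unconstrained Hilbert-space setting used throughout.
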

\begin{proof}
Let's differentiate: $\frac{d}{da} \frac{\psi(aw)}{a} = \frac{\nabla \psi(aw) \cdot w}{a} -\frac{\psi(aw)}{a^2}$. Thus it suffices to show
\begin{align*}
    \nabla \psi(aw)\cdot aw\ge \psi(aw)
\end{align*}
But this follows immediately from the definition of subgradient, since $\psi(0)=0$.
\end{proof}

\begin{restatable}{Lemma}{norecentering}\label{thm:norecentering}
Suppose $\psi$ is a $(\sigma,\|\cdot\|)$-adaptive regularizer and $g_1,\dots,g_T$ is an arbitrary sequence of subgradients (possibly chosen adaptively). Using the terminology of Definition \ref{dfn:regularizers},
\begin{align*}
    \psi^+_t(w^+_{t+2})-\psi_t(w^+_{t+1})\le 0
\end{align*}
for all $t$
\end{restatable}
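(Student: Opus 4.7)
The plan is to prove the stated inequality in two stages: first establish a pointwise comparison $\psi^+_t(w) \le \psi_t(w)$ valid on all of $W$, then use FTRL optimality together with the rescaling identity between $\psi^+_{t+1}$ and $\psi_t$ to transfer the evaluation point from $w^+_{t+2}$ to $w^+_{t+1}$ on the right-hand side.

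For the pointwise bound, rewrite $\psi^+_t(w) = \frac{k}{\eta^+_t}\cdot \frac{\psi(a_{t-1}w)}{a_{t-1}}$ and $\psi_t(w) = \frac{k}{\eta_t}\cdot \frac{\psi(a_t w)}{a_t}$. Comparing the recursions in Definitions \ref{dfn:regularizers} and \ref{dfn:shadowetas} term by term gives $\tfrac{1}{\eta^+_t} \le \tfrac{1}{\eta_t}$, since $\|g_t\|_\star \min(\|g_t\|_\star, L_{t-1}) \le \|g_t\|_\star^2$ and $L_{t-1}\|g_{1:t}\|_\star \le L_t\|g_{1:t}\|_\star$. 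Since $a_{t-1} \le a_t$, Proposition \ref{thm:rescaling} gives $\tfrac{\psi(a_{t-1}w)}{a_{t-1}} \le \tfrac{\psi(a_t w)}{a_t}$. Combined with the non-negativity of $\psi$ (a consequence of $\psi(0)=0$ and $0$ being a minimizer by strong convexity at $x=w$, $y=0$), these two facts multiply to yield $\psi^+_t(w)\le \psi_t(w)$ for every $w$. Applying this at $w = w^+_{t+2}$ reduces the lemma to establishing $\psi_t(w^+_{t+2}) \le \psi_t(w^+_{t+1})$.

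For the second stage I would exploit that $\psi^+_{t+1}$ and $\psi_t$ share the scaling $a_t$, so that $\psi^+_{t+1}(w) = \tfrac{\eta_t}{\eta^+_{t+1}}\,\psi_t(w)$ with $\tfrac{\eta_t}{\eta^+_{t+1}}\ge 1$. Writing $c = \tfrac{\eta_t}{\eta^+_{t+1}}$, the optimality of $w^+_{t+2}$ for $\psi^+_{t+1} + g_{1:t+1}\cdot w$, applied with test point $w^+_{t+1}$, gives $c\,\psi_t(w^+_{t+2}) + g_{1:t+1}\cdot w^+_{t+2} \le c\,\psi_t(w^+_{t+1}) + g_{1:t+1}\cdot w^+_{t+1}$. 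Dividing by $c$ and rearranging yields a bound on $\psi_t(w^+_{t+2}) - \psi_t(w^+_{t+1})$ in terms of $g_{1:t+1}\cdot (w^+_{t+1}-w^+_{t+2})$, which I would then combine with the parallel optimality inequality for $w^+_{t+1}$ (as the minimizer of $\psi^+_t + g_{1:t}\cdot w$) to cancel the contribution of $g_{1:t}$ and isolate a term involving only $g_{t+1}$.

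The main obstacle is controlling the residual displacement term $g_{t+1}\cdot(w^+_{t+1}-w^+_{t+2})$ and showing that, together with the non-negative gap $(c-1)\psi_t(w^+_{t+1})$ produced by the shrinkage $\eta^+_{t+1}\le \eta_t$, it has the correct sign. For this I would invoke Lemma \ref{thm:strongconvextostability} applied to $A = \psi^+_t + g_{1:t}\cdot w$ and $A+B$, where $B$ collects the update $\psi^+_{t+1}-\psi^+_t$ together with $g_{t+1}\cdot w$; the lemma bounds $\|w^+_{t+1}-w^+_{t+2}\|$ in terms of $\|g_{t+1}\|_\star$ scaled by the inverse strong-convexity parameter of $\psi^+_{t+1}$. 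The delicate part is arguing that this bound times $\|g_{t+1}\|_\star$ is dominated by $(c-1)\psi_t(w^+_{t+1})$, since both quantities vanish together when $\eta^+_{t+1}=\eta_t$ and grow at comparable rates otherwise; making this dominance quantitative, via the estimates on $\tfrac{1}{\eta^+_{t+1}}-\tfrac{1}{\eta_t}$ from Proposition \ref{thm:etarates}, is the crux of the proof.
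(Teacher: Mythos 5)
The statement as printed contains a typo: the second term should be $\psi_t(w^+_{t+2})$, not $\psi_t(w^+_{t+1})$. You can see this both from the way the lemma is invoked in the proof of Theorem \ref{thm:parameterfreeregret} (the term being discarded there is $\sum_{t}\psi^+_t(w^+_{t+2})-\psi_t(w^+_{t+2})$, coming from the last sum in Theorem \ref{thm:ftrlmagic}) and from the paper's own proof, which establishes exactly $\psi^+_t(w^+_{t+2})\le\psi_t(w^+_{t+2})$. With that correction, the first paragraph of your proposal \emph{is} the entire proof, and it matches the paper's argument: $\tfrac{1}{\eta^+_t}\le\tfrac{1}{\eta_t}$ by termwise comparison of the two recursions, $a_{t-1}\le a_t$ together with Proposition \ref{thm:rescaling} gives $\psi(a_{t-1}w)/a_{t-1}\le\psi(a_tw)/a_t$, and combining these (using $\psi\ge 0$) yields the pointwise bound $\psi^+_t(w)\le\psi_t(w)$, which you then evaluate at $w^+_{t+2}$. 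Done.

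Your second and third paragraphs, which try to additionally show $\psi_t(w^+_{t+2})\le\psi_t(w^+_{t+1})$, are chasing a claim that is not needed and is not true in general. The point $w^+_{t+2}$ minimizes $\psi^+_{t+1}+g_{1:t+1}\cdot w$ while $w^+_{t+1}$ minimizes $\psi^+_t+g_{1:t}\cdot w$; if $g_{t+1}$ is aligned with $g_{1:t}$, the new minimizer is typically \emph{farther} from the origin than the old one (the strengthening of the regularizer from $\eta^+_t$ to $\eta^+_{t+1}$ does not reliably overcome this), so $\psi_t(w^+_{t+2})$ can exceed $\psi_t(w^+_{t+1})$. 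This is why the dominance you identify as "the crux" — that $\|g_{t+1}\|_\star\cdot\|w^+_{t+1}-w^+_{t+2}\|$ is controlled by $(c-1)\psi_t(w^+_{t+1})$ — cannot be established: the drift term does not vanish when $u$-independent quantities are small, and there is no sign to exploit. Drop the second stage entirely and state the lemma with $w^+_{t+2}$ in both terms.
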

\begin{proof}

This follows from the fact that $a_{t-1}\le a_t$, and property 4 of an adaptive regularizer ($\psi(ax)/a$ is a non-decreasing function of $a$). By Proposition \ref{thm:etarates} (part 1), we have $\frac{1}{\eta^+_t}\le \frac{1}{\eta_t}$. Therefore:
\begin{align*}
    \psi^+_t(w^+_{t+2})&=\frac{k}{\eta^+_t a_{t-1}}\psi(a_{t-1}w^+_{t+2})\\
    &\le \frac{k}{\eta_t a_{t-1}}\psi(a_{t-1}w^+_{t+2})\\
    &\le \frac{k}{\eta_t a_{t}}\psi(a_{t}w^+_{t+2})\\
    &=\psi_t(w^+_{t+2})
\end{align*}
\end{proof}

\begin{Lemma}\label{thm:sumbounds}
Suppose $\psi$ is a $(\sigma,\|\cdot\|)$-adaptive regularizer and $g_1,\dots,g_T$ is an arbitrary sequence of subgradients (possibly chosen adaptively). We use the regularizers of Definition \ref{dfn:regularizers}. Recall that we define $h(w)=\psi(w)\sigma(w)$ and $h^{-1}(x) = \argmax_{h(w)\le x}\|w\|$. Define 
\[
\sigmamin =  \inf_{\|w\|\le h^{-1}\left(10/k^2\right)} k\sigma(w)
\]
and
\[
D = 2\max_t \frac{h^{-1}\left(5\frac{L_t}{kL_{t-1}}\right)}{a_{t-1}}
\]
Then
\begin{align*}
    &\psi_{t-1}(w^+_{t+1})-\psi^+_{t}(w^+_{t+1})+g_t(w_t-w^+_{t+1})\\
    &\le
    \left\{\begin{array}{ll}
    \|g_t\|_\star \min(D, \max_t(\|w_t-w^+_{t+1}\|))&\text{ when }\|g_t\|> 2L_{t-1}\\
    \frac{3\|g_t\|_\star^2\eta^+_t}{a_{t-1}\sigmamin}&\text{ otherwise}
    \end{array}\right.
\end{align*}

\end{Lemma}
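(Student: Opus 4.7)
The plan is to first reduce the left-hand side to the single term $g_t \cdot (w_t - w^+_{t+1})$ by showing the $\psi$-gap is non-positive, and then to split into the two cases in the statement, within each of which I further dichotomize based on whether either hypothesis of Lemma \ref{thm:highwtoratiobound} holds (the ``large'' region) or both fail (the ``small'' region). I would start by observing that
\[
\psi_{t-1}(w^+_{t+1}) - \psi^+_t(w^+_{t+1}) = \frac{k}{a_{t-1}}\left(\frac{1}{\eta_{t-1}} - \frac{1}{\eta^+_t}\right)\psi(a_{t-1}w^+_{t+1}) \le 0,
\]
since $1/\eta_{t-1} - 1/\eta^+_t \le 0$ by Proposition \ref{thm:etarates}(1) and $\psi \ge 0$ (as for the adaptive regularizers of Proposition \ref{thm:matchthebound}, where $\psi(0)=0$ and the radial derivative is non-negative, so $0$ is a global minimum). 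Thus the full left-hand side is bounded by $g_t \cdot (w_t - w^+_{t+1})$.

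In the ``large'' region where either sub-case of Lemma \ref{thm:highwtoratiobound} applies, that lemma already gives the stronger conclusion that the whole expression is $\le 0$, which lies below both claimed right-hand sides (each is non-negative). Hence it suffices to handle the ``small'' region; a short WLOG case split on which of $\|w_t\|, \|w^+_{t+1}\|$ is larger shows that the failure of both of Lemma \ref{thm:highwtoratiobound}'s hypotheses implies $\max(\|w_t\|,\|w^+_{t+1}\|) < h^{-1}(5L_t/(k^2L_{t-1}))/a_{t-1}$ (using monotonicity of $h^{-1}$ and $5 \ge 2$). For the first case $\|g_t\|_\star > 2L_{t-1}$, I would apply Cauchy-Schwarz and the triangle inequality to get $g_t \cdot (w_t - w^+_{t+1}) \le \|g_t\|_\star(\|w_t\| + \|w^+_{t+1}\|) \le 2\|g_t\|_\star h^{-1}(5L_t/(k^2L_{t-1}))/a_{t-1}$; since $k \ge 1$ and $h^{-1}$ is non-decreasing, this is at most $\|g_t\|_\star D$. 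The other branch, $\|g_t\|_\star \max_t\|w_t - w^+_{t+1}\|$, follows from Cauchy-Schwarz alone, giving the min.

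For the second case $\|g_t\|_\star \le 2L_{t-1}$, we have $L_t \le 2L_{t-1}$, so in the small region $a_{t-1}\|w_t\|, a_{t-1}\|w^+_{t+1}\| \le h^{-1}(5L_t/(k^2L_{t-1})) \le h^{-1}(10/k^2)$, placing both arguments of $\sigma$ in the ball on which $k\sigma(\cdot) \ge \sigmamin$ by definition. Combined with $\sigma(x,y)=\min(\sigma(x),\sigma(y))$, this gives $k\sigma(a_{t-1}w_t, a_{t-1}w^+_{t+1}) \ge \sigmamin$. I would then invoke Proposition \ref{thm:etarates}(3) and use Proposition \ref{thm:etarates}(1) to bound $(1/\eta^+_t - 1/\eta_{t-1})/L_{t-1} \le 2\|g_t\|_\star \eta^+_t$, obtaining $\|w_t - w^+_{t+1}\| \le 3\|g_t\|_\star \eta^+_t/(a_{t-1}\sigmamin)$; Cauchy-Schwarz on $g_t \cdot (w_t - w^+_{t+1})$ finishes.

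The main obstacle I anticipate is the bookkeeping around the different factors of $k$: the quantity $D$ carries $k$ in the denominator of $h^{-1}$'s argument, while $\sigmamin$ and the thresholds inside Lemma \ref{thm:highwtoratiobound} use $k^2$, and each case must invoke the correct factor, relying on $k\ge 1$ and monotonicity of $h^{-1}$ to convert between them. A related subtlety is deriving the uniform bound $\max(\|w_t\|,\|w^+_{t+1}\|) < h^{-1}(5L_t/(k^2L_{t-1}))/a_{t-1}$ in the small region: if $\|w^+_{t+1}\|$ is strictly larger then Case 2 of Lemma \ref{thm:highwtoratiobound} is vacuous and the failure of its Case 1 forces the threshold on $\|w^+_{t+1}\|$, and symmetrically when $\|w_t\|$ is strictly larger; one has to treat these explicitly to avoid circularity in the reduction.
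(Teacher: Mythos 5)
Your proposal is correct and follows essentially the same route as the paper: invoke Lemma \ref{thm:highwtoratiobound} to handle the ``large'' region, drop the (non-positive) $\psi$-gap in the ``small'' region, then split on whether $\|g_t\|_\star>2L_{t-1}$ and bound $g_t\cdot(w_t-w^+_{t+1})$ using the radius bound with Cauchy--Schwarz in the one case and Proposition \ref{thm:etarates} in the other. The only cosmetic difference is that you make explicit and justify the $\psi_{t-1}-\psi^+_t\le 0$ step up front (where the paper uses it tacitly), and you correctly note that the $k$ in the Lemma's stated $D$ should be reconciled with the $k^2$ appearing in Lemma \ref{thm:highwtoratiobound} via $k\ge 1$ and monotonicity of $h^{-1}$.
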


\begin{proof}

By Lemma \ref{thm:highwtoratiobound}, whenever either $\|w^+_{t+1}\|\ge \frac{h^{-1}\left(5\frac{L_t}{k^2L_{t-1}}\right)}{a_{t-1}}\ge \frac{h^{-1}\left(2\frac{L_t}{k^2L_{t-1}}\right)}{a_{t-1}}$ or $\|w_{t}\|\ge \frac{h^{-1}\left(5\frac{L_t}{k^2L_{t-1}}\right)}{a_{t-1}}$ we must have
\begin{align*}
\psi_{t-1}(w^+_{t+1})-\psi^+_{t}(w^+_{t+1})+g_t(w_t-w^+_{t+1})\le 0
\end{align*}
Therefore, we have:
\begin{align*}
\psi_{t-1}(w^+_{t+1})-\psi^+_{t}(w^+_{t+1})+g_t(w_t-w^+_{t+1})&\le \left\{\begin{array}{ll}
g_t\cdot(w_t-w^+_{t+1})&\text{ when }\max(\|w_t\|,\|w^+_{t+1}\|)\le \frac{h^{-1}\left(5\frac{L_t}{k^2L_{t-1}}\right)}{a_{t-1}}\\
0&\text{ otherwise}
\end{array}
\right.
\end{align*}
When $\|g_t\|_\star\le 2L_{t-1}$, then we have $h^{-1}\left(5\frac{L_t}{k^2L_{t-1}}\right)\le h^{-1}(10/k^2)$. Thus when $\max(\|w_t\|,\|w_{t+1}^+\|)\le \frac{h^{-1}\left(5\frac{L_t}{k^2L_{t-1}}\right)}{a_{t-1}}$ and $\|g_t\|_\star \le 2L_{t-1}$, by Proposition \ref{thm:etarates} (part 5), we have
\begin{align*}
g_t(w_t-w_{t+1}^+)&\le \frac{\|g_t\|^2_\star\eta^+_{t}+\left(\frac{1}{\eta^+_t}-\frac{1}{\eta_{t-1}}\right)\frac{\|g_t\|_\star}{L_{t-1}}}{a_{t-1}\sigmamin}
\end{align*}
Therefore when $\|g_t\|_\star \le 2L_{t-1}$ we have (using Proposition \ref{thm:etarates} part 1):
\begin{align*}
\psi_{t-1}(w^+_{t+1})-\psi^+_{t}(w^+_{t+1})+g_t(w_t-w^+_{t+1})&\le \frac{\|g_t\|^2_\star\eta^+_{t}+\left(\frac{1}{\eta^+_t}-\frac{1}{\eta_{t-1}}\right)\frac{\|g_t\|_\star}{L_{t-1}}}{a_{t-1}\sigmamin}\\
&\le \frac{\|g_t\|_\star^2 \eta^+_{t}+2\|g_t\|_\star L_{t-1}\eta^+_t\frac{\|g_t\|_\star}{L_{t-1}}}{a_{t-1}\sigmamin}\\
&\le \frac{3\|g_t\|_\star^2\eta^+_t}{a_{t-1}\sigmamin}
\end{align*}

so that we can improve our conditional bound to:
\begin{align*}
    &\psi_{t-1}(w^+_{t+1})-\psi^+_{t}(w^+_{t+1})+g_t(w_t-w^+_{t+1})\\
    &\le \left\{\begin{array}{ll}
    g_t\cdot(w_t-w^+_{t+1})&\text{ when } \max(\|w_t\|,\|w^+_{t+1}\|)\le \frac{h^{-1}\left(5\frac{L_t}{k^2L_{t-1}}\right)}{a_{t-1}}\text{ and }\|g_t\|_\star> 2L_{t-1}\\
    \frac{3\|g_t\|^2\eta^+_t}{a_{t-1}\sigmamin}&\text{ when }\max(\|w_t\|,\|w^+_{t+1}\|)\le \frac{h^{-1}\left(5\frac{L_t}{k^2L_{t-1}}\right)}{a_{t-1}}\text{ and }\|g_t\|_\star\le 2L_{t-1}\\
    0&\text{ otherwise}
    \end{array}\right.
\end{align*}

When both $\|w^+_{t+1}\|$ and $\|w_t\|$ are less than than $ \frac{h^{-1}\left(5\frac{L_t}{k^2L_{t-1}}\right)}{a_{t-1}}$ then we also have

\begin{align*}
    \|w_t-w^+_{t+1}\| \le \min\left(D,\max_t \|w_t-w^+_{t+1}\|\right)
\end{align*}

where we define
\begin{align*}
D = 2\max_t \frac{h^{-1}\left(5 \frac{L_t}{k^2L_{t-1}}\right)}{a_{t-1}}
\end{align*}
Therefore we have
\begin{align*}
    &\psi_{t-1}(w^+_{t+1})-\psi^+_{t}(w^+_{t+1})+g_t(w_t-w^+_{t+1})\\
    &\le \left\{\begin{array}{ll}
    g_t\cdot(w_t-w^+_{t+1})&\text{ when } \max(\|w_t\|,\|w^+_{t+1}\|)\le \frac{h^{-1}\left(5\frac{L_t}{L_{t-1}}\right)}{a_{t-1}}\text{ and }\|g_t\|_\star> 2L_{t-1}\\
    \frac{3\|g_t\|^2\eta^+_t}{a_{t-1}\sigmamin}&\text{ when }\max(\|w_t\|,\|w^+_{t+1}\|)\le \frac{h^{-1}\left(5\frac{L_t}{k^2L_{t-1}}\right)}{a_{t-1}}\text{ and }\|g_t\|_\star\le 2L_{t-1}\\
    0&\text{ otherwise}
    \end{array}\right.\\
    &\le
    \left\{\begin{array}{ll}
    \|g_t\|_\star \min(D,\max_t \|w_t-w^+_{t+1}\|),&\text{ when }\|g_t\|> 2L_{t-1}\\
    \frac{3\|g_t\|_\star^2\eta^+_t}{a_{t-1}\sigmamin}&\text{ otherwise}
    \end{array}\right.
\end{align*}
\end{proof}

Now we have three more technical lemmas:

\begin{Lemma}\label{thm:doublingsum}
Let $a_1,\dots,a_M$ be a sequence of non-negative numbers such that $a_{i+1}\ge 2a_i$. Then
\begin{align*}
    \sum_{i=1}^M a_i\le 2a_M
\end{align*}
\end{Lemma}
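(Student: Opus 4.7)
The plan is to exploit the geometric growth condition $a_{i+1}\ge 2a_i$ to bound each earlier term in the sum by a geometrically decreasing fraction of the last term, and then sum the resulting geometric series.

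First I would reindex the sum from the top by writing $\sum_{i=1}^M a_i = \sum_{k=0}^{M-1} a_{M-k}$. Next, I would prove by a short induction on $k$ (or by iterating the hypothesis $a_{i+1}\ge 2a_i$) that $a_{M-k}\le a_M/2^k$ for every $0\le k\le M-1$. The base case $k=0$ is trivial, and for the inductive step the hypothesis gives $a_{M-k}\le \tfrac{1}{2}a_{M-k+1}\le \tfrac{1}{2}\cdot a_M/2^{k-1}=a_M/2^k$. Here we use non-negativity only implicitly; the inequality $a_{M-k}\le a_{M-k+1}/2$ follows directly from the hypothesis.

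Finally, I would bound the sum by the geometric series: $\sum_{k=0}^{M-1} a_{M-k}\le a_M\sum_{k=0}^{M-1} 2^{-k}\le a_M\sum_{k=0}^\infty 2^{-k}=2a_M$.

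There is no real obstacle here; the only thing to be slightly careful about is that the hypothesis $a_{i+1}\ge 2a_i$ combined with non-negativity of the $a_i$ is what allows us to iterate the inequality in the direction we need (dividing by $2$ rather than multiplying), and that the finite geometric sum is bounded by the infinite one.
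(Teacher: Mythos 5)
Your proof is correct. The paper proves the lemma by a direct induction on $M$: assuming $\sum_{i=1}^{M-1} a_i \le 2a_{M-1}$, it writes $\sum_{i=1}^M a_i = a_M + \sum_{i=1}^{M-1} a_i \le a_M + 2a_{M-1} \le a_M + a_M = 2a_M$, using the doubling hypothesis once in the last step. You instead bound each term individually, showing $a_{M-k}\le a_M/2^k$, and then sum the resulting geometric series, bounding the finite sum by the infinite one. Both arguments are elementary and exploit the same fact (each term is at most half the next), but they package it differently: the paper's induction carries the cumulative sum and uses the doubling bound exactly once per step, while yours front-loads all the work into the per-term estimate and finishes with the closed form of the geometric series. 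Neither has an advantage in rigor or generality here; the paper's version is marginally shorter, while yours makes the geometric structure more explicit. One small remark: non-negativity of the $a_i$ is not actually needed in either argument once $a_{i+1}\ge 2a_i$ is assumed — it forces $a_i\ge 0$ for $i<M$ anyway if $a_M\ge 0$, and the inequalities go through regardless — so your hedging about where non-negativity is used is unnecessary.
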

\begin{proof}
We proceed by induction on $M$. For the base case, we observe that $a_1\le 2a_1$. Suppose $\sum_{i=1}^{M-1}a_i \le 2a_{M-1}$. Then we have
\begin{align*}
    \sum_{i=1}^M a_i &=a_M+\sum_{i=1}^{M-1} a_i\\
    &\le a_M + 2a_{M-1}\\
    &\le a_M + a_M=2a_M
\end{align*}
\end{proof}

The next lemma establishes some identities analogous to the bounds $\sum_{t=1}^T \frac{1}{\sqrt{t}} = O(\sqrt{T})$, and $\sum_{t=1}^T\frac{1}{T^2} = O(1)$. These are useful for dealing with increasing $a_t$ in our regret bounds.
\begin{Lemma}\label{thm:getasum}
\begin{enumerate}
\item
\begin{align*}
\sum_{t|\ \|g_t\|_\star\le 2L_{t-1}} \|g_t\|^2_\star \eta^+_t&\le \frac{2}{\eta^+_T}
\end{align*}

\item
Suppose $\alpha_t$ is defined by
\begin{align*}
    \alpha_0 &= \frac{1}{(L_1\eta_1)^2}\\
    \alpha_t &= \max\left(\alpha_{t-1},\frac{1}{(L_t\eta_t)^2}\right)
\end{align*}
then 
\begin{align*}
\sum_{t|\ \|g_t\|_\star\le 2L_{t-1}} \|g_t\|^2_\star \frac{\eta^+_t}{\alpha_{t-1}}&\le 15\Lm
\end{align*}
\end{enumerate}
\end{Lemma}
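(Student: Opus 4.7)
The plan is to prove both parts via an AdaGrad-style telescoping identity applied to the sequence $1/\eta_t^+$, with part 2 requiring an additional epoch-doubling step via Lemma \ref{thm:doublingsum}.

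For part 1, I would first observe that the restriction $\|g_t\|_\star \le 2L_{t-1}$ forces $\min(\|g_t\|_\star, L_{t-1}) \ge \|g_t\|_\star/2$, so Definition \ref{dfn:shadowetas} gives $1/(\eta_t^+)^2 - 1/\eta_{t-1}^2 \ge \|g_t\|_\star^2$. Next I would note that $1/\eta_{t-1}^2 \ge 1/(\eta_{t-1}^+)^2$ (direct from the definitions, since $\|g_{t-1}\|_\star \ge \min(\|g_{t-1}\|_\star, L_{t-2})$ and $L_{t-1} \ge L_{t-2}$), which upgrades the increment to $1/(\eta_t^+)^2 - 1/(\eta_{t-1}^+)^2 \ge \|g_t\|_\star^2$. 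Factoring the left side as a difference of squares and using $1/\eta_{t-1}^+ \le 1/\eta_t^+$ (from Proposition \ref{thm:etarates} part 1) yields $\|g_t\|_\star^2\, \eta_t^+ \le 2\bigl(1/\eta_t^+ - 1/\eta_{t-1}^+\bigr)$. Since every term on the right is nonnegative (including for $t\notin S$), summing over $t\in S := \{t:\|g_t\|_\star\le 2L_{t-1}\}$ and telescoping against $1/\eta_0^+ = 0$ gives the stated bound $2/\eta_T^+$.

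For part 2, the plan is to partition $\{1,\dots,T\}$ into maximal intervals on which $\alpha_{t-1}$ is constant, producing epoch values $\beta_1 < \beta_2 < \cdots$. Restricting the per-term bound from part 1 to a single epoch $I_j$ telescopes to $\sum_{t\in S\cap I_j} \|g_t\|_\star^2 \eta_t^+ \le 2/\eta_{\mathrm{end}(I_j)}^+$, so the contribution of epoch $j$ to the desired sum is at most $2/\bigl(\beta_j\,\eta_{\mathrm{end}(I_j)}^+\bigr)$. Because $\alpha$ only increments when $1/(L_t\eta_t)^2$ exceeds the current $\alpha_{t-1}$, we have $\beta_j \ge 1/(L_{s}\eta_{s})^2$ for the index $s$ at which epoch $j$ starts; combined with Proposition \ref{thm:etarates} part 6, which bounds $1/\eta_t^+ \le \sqrt{2\Lm\,\|g\|_{1:t-1,\star} + 2\Lm L_{t-1}}$, this should turn the per-epoch contribution into something of the form $C\Lm\sqrt{\beta_{j-1}/\beta_j}$ (modulo boundary terms). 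Along the subsequence where $\beta_j$ at least doubles, this produces a geometric sequence whose sum Lemma \ref{thm:doublingsum} collapses to twice its last term, itself $O(\Lm)$; on plateaus where $\beta_j$ does not double, the shrinking $\eta^+$ factor compensates to keep the running total controlled.

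The main obstacle is part 2. Part 1 is a routine adaptation of the AdaGrad-style argument to the $\eta_t^+$ sequence, but in part 2, cleanly extracting a geometric decay in consecutive-epoch contributions and tracking the boundary case (the definition $\alpha_0 = 1/(L_1\eta_1)^2$ is shifted relative to $a_0 = 0$ from the main algorithm) is where all the constants, and the final $15$, have to be squeezed out.
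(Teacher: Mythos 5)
Your plan for part 1 is sound and is essentially the paper's argument: the paper invokes Proposition \ref{thm:etarates} (part 1) directly, which packages the difference-of-squares step you rederive by hand, and then telescopes $1/\eta_t^+$ exactly as you propose.

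Part 2 has a genuine gap. After restricting the part-1 telescope to an epoch $I_j$ on which $\alpha_{t-1}\equiv\beta_j$, you drop the boundary term and bound the epoch's contribution by $2/\bigl(\beta_j\,\eta^+_{\mathrm{end}(I_j)}\bigr)$. That overestimate is fatal. Take the simplest instance $g_t\equiv 1$, so $\Lm=1$ and $S=\{2,\dots,T\}$. Then $1/\eta_t^2=2t$ and hence $\alpha_{t-1}=2(t-1)$ for $t\ge 2$, which strictly increases at every step from $t=3$ onward, so almost every index is its own epoch. The dropped-boundary per-epoch bound for such a singleton $\{t\}$ is
\begin{align*}
\frac{2}{\beta_j\,\eta^+_t}\approx\frac{2\sqrt{2t}}{2(t-1)}\approx\sqrt{2/t},
\end{align*}
and $\sum_t\sqrt{2/t}=\Theta(\sqrt{T})$, which grows without bound instead of staying $O(\Lm)$. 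To salvage the idea you would need to keep the full telescope $2\bigl(1/\eta^+_{e_j}-1/\eta^+_{e_{j-1}}\bigr)$ inside each epoch and then run an Abel-summation argument against the increments of $1/\beta_j$; your sketch (``the shrinking $\eta^+$ factor compensates'') does not carry this out, and the doubling structure you want to feed into Lemma \ref{thm:doublingsum} is not present, since consecutive $\beta_j$ need not double.

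The paper proves part 2 by a different route that never re-uses the part-1 telescope. It first controls the summand pointwise: from $1/\eta_{t-1}^2\ge 2(\|g\|_\star^2)_{1:t-1}$ and $L_t\le 2L_{t-1}$ on $S$ it derives $1/\alpha_{t-1}\le\frac{5}{4}L_t^2/(\|g\|_\star^2)_{1:t}$, and since $1/(\eta_t^+)^2\ge(\|g\|_\star^2)_{1:t}$ on $S$ it also has $\eta_t^+\le 1/\sqrt{(\|g\|_\star^2)_{1:t}}$, so each term is at most $\frac{5}{4}\|g_t\|_\star^2 L_t^2/(\|g\|_\star^2)_{1:t}^{3/2}$. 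It then partitions time at the indices $T_i$ where $\|g_{T_i}\|_\star>2L_{T_i-1}$ --- exactly the indices excluded from $S$, which is the natural epoch variable here, not $\alpha$. Inside each interval $L_t$ at most doubles, and an induction using concavity of $-1/\sqrt{x}$ bounds the interval sum by $6L_{\mathrm{end}}$; across intervals the $L_{\mathrm{end}}$ values double, so Lemma \ref{thm:doublingsum} gives $12\Lm$, and the $5/4$ factor yields $15\Lm$. You should switch to the $L_t$-jump epochs and the pointwise bound on $\eta_t^+/\alpha_{t-1}$ rather than re-telescoping.
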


\begin{proof}

\begin{enumerate}
\item
Using part 1 from Proposition \ref{thm:etarates}, and observing that $\eta^+_t\ge \eta_t$, we have
\begin{align*}
    \sum_{t|\ \|g_t\|_\star\le 2L_{t-1}} \|g_t\|^2_\star \eta^+_t&\le \sum_{t|\ \|g_t\|_\star\le 2L_{t-1}} 2\|g_t\|_\star \min(\|g_t\|_\star,L_{t-1}) \eta^+_t\\
    &\le \sum_{t|\ \|g_t\|_\star\le 2L_{t-1}} 2\left(\frac{1}{\eta^+_t}-\frac{1}{\eta_{t-1}}\right)\\
    &\le \sum_{t|\ \|g_t\|_\star\le 2L_{t-1}} 2\left(\frac{1}{\eta^+_t}-\frac{1}{\eta^+_{t-1}}\right)\\
    &\le 2\eta^+_T
\end{align*}

\item
For the second part of the lemma, we observe that for $\|g_t\|_\star \le 2L_{t-1}$, 
\begin{align*}
    \frac{1}{(\eta^+_t)^2} &\ge \frac{1}{(\eta_{t-1})^2}+2\|g_t\|_\star\min(L_{t-1},\|g_t\|_\star)\\
    &\ge \frac{1}{(\eta_{t-1})^2}+\|g_t\|^2_\star\\
    &\ge (\|g\|^2_\star)_{1:t}
\end{align*}

Similarly, we also have $(\|g\|^2_\star)_{1:t}\le (1+\frac{L_{t}^2}{L_{t-1}^2})(\|g\|^2_\star)_{1:t-1}$ so that
\begin{align*}
    \frac{1}{\alpha_{t-1}}&\le L_{t-1}^2\eta_{t-1}^2\\
    &\le \frac{L_{t-1}^2}{2(\|g\|^2_\star)_{1:t-1}}\\
    &\le \frac{L_{t-1}}{L_t}\frac{L_t^2}{2(\|g\|^2_\star)_{1:t-1}}\\
    &\le \frac{L_{t-1}}{L_t}\left(1+\frac{L_{t}^2}{L_{t-1}^2}\right)\frac{L_t^2}{2(\|g\|^2_\star)_{1:t}}\\
    &=\left(\frac{L_{t-1}}{L_t}+\frac{L_{t}}{L_{t-1}}\right)\frac{L_t^2}{2(\|g\|^2_\star)_{1:t}}\\
    &\le \frac{5}{4}\frac{L_t^2}{(\|g\|^2_\star)_{1:t}}\\
\end{align*}
where in the last line we have used $L_t/L_{t-1}\le 2$.

Combining these two calculations, we have
\begin{align*}
    \sum_{t|\ \|g_t\|_\star\le 2L_{t-1}}\|g_t\|^2_\star \frac{\eta^+_t}{\alpha_{t-1}}&\le \frac{5}{4}\sum_{t|\ \|g_t\|_\star\le 2L_{t-1}}\frac{\|g_t\|^2_\star L_{t}^2}{(\|g\|^2_\star)_{1:t}^{3/2}}
\end{align*}

Let $T_1,T_2,\dots, T_n$ be the indices such that $\|g_{T_i}\|_\star >2L_{T_i-1}$, and define $T_n=T+1$. We will show that for any $i$ with $T_{i+1}>T_i+1$,
\begin{align}\label{eqn:epochsum}
    \sum_{t=T_i+1}^{T_{i+1}-1}\frac{\|g_t\|^2_\star L_{t}^2}{(\|g\|^2_\star)_{1:t}^{3/2}} &\le 6L_{T_{i+1}-1}
\end{align}

Observe that for $N=T_i+1$, we have
\begin{align}\label{eqn:sumconverge}
    \sum_{t=T_i+1}^{N}\frac{\|g_t\|^2_\star L_{t}^2}{(\|g\|^2_\star)_{1:t}^{3/2}} &\le 6L_N - \frac{2L_N^2}{\sqrt{(\|g\|_\star^2)_{1:N}}}
\end{align}
We'll prove by induction that equation (\ref{eqn:sumconverge}) holds for all $N\le T_{i+1}-1$. Suppose it holds for some $N< T_{i+1}-1$. Then by concavity of $-\frac{1}{\sqrt{x}}$, we have
\begin{align*}
    \left(6L_{N+1}-\frac{2L_{N+1}^2}{\sqrt{(\|g\|^2_\star)_{1:N+1}}}\right)-\left(6L_{N+1}-\frac{2L_{N+1}^2}{\sqrt{(\|g\|^2_\star)_{1:N}}}\right)&\ge \frac{\|g_{N+1}\|^2_\star L_{N+1}^2}{(\|g\|^2_\star)_{1:N+1}^{3/2}}
\end{align*}

So using the inductive hypothesis:
\begin{align*}
    \sum_{t=1}^{N+1}\frac{\|g_t\|^2_\star L_{t}^2}{(\|g\|^2_\star)_{1:t}^{3/2}}&\le \left(6L_N-\frac{2L_N^2}{\sqrt{(\|g\|^2_\star)_{1:N}}}\right)+\frac{\|g_{N+1}\|^2_\star L_{N+1}^2}{(\|g\|^2_\star)_{1:N+1}^{3/2}}\\
    &=\left(6L_{N+1}-\frac{2L_{N+1}^2}{\sqrt{(\|g\|^2_\star)_{1:N}}}\right)+\frac{\|g_{N+1}\|^2_\star L_{N+1}^2}{(\|g\|^2_\star)_{1:N+1}^{3/2}} + 6(L_N-L_{N+1}) - \frac{2(L_{N}^2-L_{N+1}^2)}{\sqrt{(\|g\|^2_\star)_{1:N}}}\\
    &\le 6L_{N+1}-\frac{2L_{N+1}2}{\sqrt{(\|g\|^2_\star)_{1:N+1}}} + 6(L_N-L_{N+1}) - \frac{2(L_{N}^2-L_{N+1}^2)}{\sqrt{(\|g\|^2_\star)_{1:N}}}
\end{align*}
To finish the induction, we show that $6(L_N-L_{N+1}) - \frac{2(L_{N}^2-L_{N+1}^2)}{\sqrt{(\|g\|^2_\star)_{1:N}}}\le 0$. We factor out the non-negative quantity $L_{N+1}-L_N$, and then observe that $L_{N+1}\le 2L_N$ since $T_i+1\le N< N+1\le T_{i+1}-1$ (and in particular, $L_{N+1}\ne T_i$ for any $i$).
\begin{align*}
    -6 + \frac{2(L_{N}+L_{N+1})}{\sqrt{(\|g\|^2_\star)_{1:N}}}&\le -6+\frac{6L_N}{\sqrt{(\|g\|^2_\star)_{1:N}}}\\
    &\le 0
\end{align*}

Therefore equation (\ref{eqn:sumconverge}) holds for all $N\le T_{i+1}-1$, so that we have
\begin{align}
    \sum_{t=T_i+1}^{T_{i+1}-1}\frac{\|g_t\|^2_\star L_{t}^2}{(\|g\|^2_\star)_{1:t}^{3/2}} &\le 6L_{T_{i+1}-1} - \frac{2L_{T_{i+1}-1}^2}{\sqrt{(\|g\|_\star^2)_{1:T}}}\le 6L_{T_{i+1}-1}
\end{align}

so that equation (\ref{eqn:epochsum}) holds. Now we write (using the convention that $\sum_{t=x}^z y_t=0$ if $z< x$):
\begin{align*}
    \sum_{t|\ \|g_t\|_\star\le 2L_{t-1}}\frac{\|g_t\|^2_\star L_{t}^2}{(\|g\|^2_\star)_{1:t}^{3/2}}& = \sum_{i=1}^{n+1} \sum_{t=T_i+1}^{T_{i+1}-1}\frac{\|g_t\|^2_\star L_{t}^2}{(\|g\|^2_\star)_{1:t}^{3/2}}\\
    &\le \sum_{i=1}^{n+1} 6L_{T_{i+1}-1}\\
    &\le 12 \Lm
\end{align*}
where in the last step we have observed that by definition of $T_i$, $L_{T_{i+1}-1}\ge 2 L_{T_i-1}$ for all $i$ and used Lemma \ref{thm:doublingsum}.

Finally, we conclude
\begin{align*}
    \sum_{t|\ \|g_t\|_\star\le 2L_{t-1}}\|g_t\|^2_\star \frac{\eta^+_t}{a_t}&\le \frac{5}{4}\sum_{t=1}^T\frac{\|g_t\|^2_\star L_t^2}{(\|g\|^2_\star)_{1:t}^{3/2}}\\
    &\le 15\Lm
\end{align*}

\end{enumerate}
\end{proof}

\begin{Lemma}\label{thm:asandwich}
Let $\alpha_t$ be defined by
\begin{align*}
    \alpha_0 &= \frac{1}{(L_1\eta_1)^2}\\
    \alpha_t &= \max\left(\alpha_{t-1},\frac{1}{(L_t\eta_t)^2}\right)
\end{align*}
Then
\begin{align*}
    \frac{2(\|g\|_\star)_{1:t}}{L_t}\ge a_t\ge \frac{2(\|g\|^2_\star)_{1:t}}{L_t^2}
\end{align*}
\end{Lemma}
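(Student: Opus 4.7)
The plan is to prove each of the two inequalities by induction on $t$, relying on the recursive definitions of $\eta_t$ and $a_t$ together with the already-proved bounds on $1/\eta_t$ in Proposition~\ref{thm:etarates} part~2.

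The lower bound $a_t \ge 2(\|g\|^2_\star)_{1:t}/L_t^2$ is the easy direction. By definition $a_t \ge 1/(L_t\eta_t)^2$, so it suffices to show $1/\eta_t^2 \ge 2(\|g\|^2_\star)_{1:t}$. I would prove this by a one-line induction on the recursion
\[
\frac{1}{\eta_t^2}\ \ge\ \frac{1}{\eta_{t-1}^2}+2\|g_t\|_\star^2,
\]
with base case $1/\eta_1^2 = \max(2\|g_1\|_\star^2, L_1\|g_1\|_\star) \ge 2\|g_1\|_\star^2$. Dividing by $L_t^2$ yields the claim.

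For the upper bound $a_t \le 2(\|g\|_\star)_{1:t}/L_t$, the starting point is Proposition~\ref{thm:etarates} part~2, which gives $1/\eta_t^2 \le 2L_t(\|g\|_\star)_{1:t}$ and therefore $1/(L_t\eta_t)^2 \le 2(\|g\|_\star)_{1:t}/L_t$. This handles the base case $t=1$ (where $a_1 = 1/(L_1\eta_1)^2$) and the contribution of the new term inside the $\max$ at the inductive step. For the case $a_t = a_{t-1}$, I would combine the inductive hypothesis with a monotonicity observation: when $L_t = L_{t-1}$, the right-hand side $2(\|g\|_\star)_{1:t}/L_t$ is non-decreasing in $t$, so the bound on $a_{t-1}$ transfers immediately.

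The main obstacle is the inductive step in the case where $L_t > L_{t-1}$, i.e.\ a jump in the running maximum. Here the quantity $2(\|g\|_\star)_{1:t}/L_t$ can actually decrease relative to $2(\|g\|_\star)_{1:t-1}/L_{t-1}$, so a naive appeal to the inductive hypothesis does not suffice to control $a_{t-1}$. I would attack this case by strengthening the inductive invariant to exploit both clauses in the definition of $1/\eta_t^2$: the clause $1/\eta_t^2 \ge L_t\|g_{1:t}\|_\star$ couples the growth of $\eta_t$ to the new $L_t$, and should allow one to re-bound the carried-over $1/(L_s\eta_s)^2$ at the index $s \le t-1$ that realized $a_{t-1}$ against the new denominator $L_t$. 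Working out the precise bookkeeping at a jump step — in particular relating $(\|g\|_\star)_{1:s}/L_s$ for the previous index $s$ attaining the max to $(\|g\|_\star)_{1:t}/L_t$ after the jump — is where the real content of the lemma lies.
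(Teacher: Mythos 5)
Your lower-bound argument is correct and matches the paper's: $a_t \ge 1/(L_t\eta_t)^2$, and unrolling $1/\eta_t^2 \ge 1/\eta_{t-1}^2 + 2\|g_t\|_\star^2$ gives $1/\eta_t^2 \ge 2(\|g\|^2_\star)_{1:t}$.

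Your instinct that the jump step is the real obstacle is right, but it is not merely a gap in your induction to be closed by a cleverer invariant --- the upper bound as stated is false. Take $W=\R$ (so $\|\cdot\|_\star = |\cdot|$) and $g_1 = g_2 = 1$, $g_3 = M$ with $M>2$. Then $1/\eta_1^2 = \max(2,1)=2$ and $1/\eta_2^2 = \max(2+2,\,1\cdot 2)=4$, so $1/(L_2\eta_2)^2 = 4$ and $a_2 = 4$. At $t=3$ we get $1/\eta_3^2 = \max(4+2M^2,\,M(M+2)) = 4+2M^2$, hence $1/(L_3\eta_3)^2 = 2 + 4/M^2 < 4$, and so $a_3 = \max(4,\,2+4/M^2) = 4$. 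But the claimed upper bound at $t=3$ is $2(\|g\|_\star)_{1:3}/L_3 = 2(M+2)/M = 2+4/M$, which is strictly less than $4$ for every $M>2$. No strengthening of the inductive hypothesis can rescue a false statement, so the ``bookkeeping at a jump'' you were worried about genuinely cannot be made to work.

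For comparison, the paper's own proof of this lemma is a single sentence that appeals to Proposition~\ref{thm:etarates} part~2 to bound $1/(L_t\eta_t)^2 \le 2(\|g\|_\star)_{1:t}/L_t$ --- but this only controls the newly-added term inside the $\max$, not the carried-over $a_{t-1}$, which is exactly the deficiency you put your finger on. What that argument actually yields is the weaker bound $a_t \le 2\max_{s\le t}(\|g\|_\star)_{1:s}/L_s$. This is what should replace the lemma's upper bound, and it suffices for most of the downstream use: since $(\|g\|_\star)_{1:s}/L_s \le s$, it still gives $a_T \le 2T$ and hence the second inequality of Theorem~\ref{thm:parameterfreeregret}. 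The first inequality there, however, should have $Q_T$ taken to be $2\max_{s\le T}(\|g\|_\star)_{1:s}/L_s$ rather than $2(\|g\|_\star)_{1:T}/L_T$.
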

\begin{proof}
Since $\frac{1}{\eta_t^2}\ge 2(\|g\|^2_\star)_{1:t}$, we immediately recover the lower bound on $a_t$. The upper bound follows from Proposition \ref{thm:etarates} (part 2), which states $\frac{1}{\eta_t^2}\le 2L_t(\|g\|_\star)_{1:t}$
\end{proof}

Now we're ready to prove Theorem \ref{thm:parameterfreeregret}, which we restate for reference:
\parameterfreeregret*

\begin{proof}
Using Theorem \ref{thm:ftrlmagic} and Lemmas \ref{thm:norecentering} and \ref{thm:sumbounds}, our regret is bounded by
\begin{align*}
    R_T(u)&\le \psi^+_T(u) + \sum_{t=1}^T \psi_{t-1}(w^+_{t+1})-\psi_t^+(w_{t+1}^+)+g_t(w_t-w^+_{t+1})\\
    &\quad\quad+ \sum_{t=1}^T \psi_t^+(w^+_{t+2})-\psi_t(w^+_{t+2})\\
    &\le \psi^+_T(u)+\sum_{t=1}^T \psi_{t-1}(w^+_{t+1})-\psi_t^+(w_{t+1}^+)+g_t(w_t-w^+_{t+1})\\
    &\le \psi^+_T(u)+\sum_{\|g_t\|_\star \le 2L_{t-1}}\frac{3\|g_t\|^2\eta^+_t}{a_{t-1}\sigmamin}+\sum_{\|g_t\|_\star>2L_{t-1}} \|g_t\|_\star D'
\end{align*}
where $D'$ is defined by
\[
D' = 2\max_t \frac{h^{-1}\left(5\frac{L_t}{kL_{t-1}}\right)}{a_{t-1}}
\]
Now we use Lemma \ref{thm:asandwich} to conclude that
\[
D'\le D = \max_t \frac{L_{t-1}^2}{(\|g\|_\star^2)_{1:t-1}}h^{-1}\left(5\frac{L_t}{kL_{t-1}}\right)
\]
so that we have
\begin{align*}
    R_T(u)&\le \psi^+_T(u)+\sum_{\|g_t\|_\star \le 2L_{t-1}}\frac{3\|g_t\|^2\eta^+_t}{a_{t-1}\sigmamin}+\sum_{\|g_t\|_\star>2L_{t-1}} \|g_t\|_\star D
\end{align*}
Now using Lemma \ref{thm:getasum} we can simplify this to

\begin{align*}
    R_T(u) &\le \frac{k}{a_T \eta^+_T}\psi(a_Tu) +\frac{45\Lm}{\sigmamin}+\sum_{\ \|g_t\|_\star>2L_{t-1}} \|g_t\|_\star D
\end{align*}

Finally, observe that each value of $\|g_t\|_\star$ in the sum $\sum_{\|g_t\|_\star >2L_{t-1}} \|g_t\|_\star D$ is at least twice the previous value, so that by Lemma \ref{thm:doublingsum} we conclude
\begin{align*}
    R_T(u) &\le \frac{k}{a_T \eta^+_T}\psi(a_Tu) +\frac{45\Lm}{\sigmamin}+2\Lm D
\end{align*}

Finally, we observe that (by Lemma \ref{thm:asandwich}), $a_T\le 2\frac{\|g\|_{1:T}}{L_T}=Q_T$, which gives the first inequality in the Theorem statement.

Using the fact that $\frac{1}{\eta_t}\le \sqrt{2\Lm(\|g\|_\star)_{1:t}}$ (from Proposition \ref{thm:etarates} part 2), we have $\eta^+_T \ge \frac{1}{\Lm \sqrt{2T}}$ and it is clear that $a_T\le2T$, so that we recover the second inequality as well.

\end{proof}

\end{document}